\colorlet{shadecolor}{pink} 
\newtheorem{lemma}{Lemma}
\newtheorem{theorem}{Theorem}
\theoremstyle{plain} 
\newcommand{\vertiii}[1]{{\left\vert\kern-0.25ex\left\vert\kern-0.25ex\left\vert #1 
\right\vert\kern-0.25ex\right\vert\kern-0.25ex\right\vert}}
\definecolor{antiquewhite}{rgb}{0.98, 0.92, 0.84} 
\definecolor{blizzardblue}{rgb}{0.67, 0.9, 0.93}
\title{\bf Merge before Forget:\\ A Single LoRA Continual Learning via Continual Merging}
\author{Fuli Qiao \qquad Mehrdad Mahdavi \vspace*{.2em} \\ 
The Pennsylvania State University \vspace*{.2em} \\ 
\texttt{ \{fvq5015,mzm616\}@psu.edu}
}
\date{}
\begin{document}

\maketitle

\begin{abstract}
Parameter-efficient continual learning has emerged as a promising approach for large language models (LLMs) to mitigate catastrophic forgetting while enabling adaptation to new tasks. Current Low-Rank Adaptation (LoRA) continual learning techniques often retain and freeze previously learned LoRAs or generate data representations to overcome forgetting, typically utilizing these to support new LoRAs learn new tasks. However, these methods not only ignore growing computational memory with tasks and limited storage space but also suffer from potential task interference due to the lack of effective LoRA merging mechanisms. In this paper, we propose a novel continual learning method that orthogonally initializes and sequentially merges LoRAs updates into a single unified LoRA. Our method leverages orthogonal basis extraction from previously learned LoRA to initialize the learning of new tasks, further exploits the intrinsic asymmetry property of LoRA components by using a time-aware scaling mechanism to balance new and old knowledge during continual merging. Our approach maintains constant memory complexity with respect to the number of tasks, minimizes interference between past and new tasks via orthogonal basis initialization, and improves performance over asymmetric LoRA merging via adaptive scaling. We provide theoretical analysis to justify our design and conduct extensive experiments across diverse continual learning benchmarks using various Llama models, demonstrating the effectiveness and efficiency of our method.
\end{abstract} 
\section{Introduction}\label{sec1}
Large Language Models (LLMs)~\citep{raffel2020exploring,achiam2023gpt,touvron2023llama} have been growing as the cornerstone of modern machine learning, achieving remarkable performance across a wide range of downstream tasks. However, despite their impressive capabilities, LLMs still suffer from catastrophic forgetting~\citep{mccloskey1989catastrophic,zhu2024model,yang2024corda} when fine-tuning sequential tasks, and their huge model capacity makes full fine-tuning computationally expensive and memory-intensive~\citep{zhao2024galore}. These challenges have led to increasing attention in \textit{parameter-efficient continual learning}, particularly via techniques such as {\sffamily{LoRA}} (Low-Rank Adaptation)~\citep{hu2022lora}, that injects trainable low-rank matrices $\boldsymbol{A}$ and $\boldsymbol{B}$ into pre-trained models, enabling task adaptation with minimal additional parameters.

Recent studies have made progress in addressing catastrophic forgetting in LLMs through LoRA-based continual learning. For example, {\sffamily{O-LoRA}}~\citep{wang2023orthogonal} freezes previously learned LoRAs and incrementally learns new tasks in their orthogonal subspace; {\sffamily{InfLoRA}}~\citep{liang2024inflora} preserves prior LoRAs and uses task-dependent input matrices to define orthogonal subspaces for initializing new ones; {\sffamily{SAPT-LoRA}}~\citep{zhao2024sapt} retains earlier LoRAs and leverages generated previous tasks' data to align new LoRA learning with shared modules; {\sffamily{SD-LoRA}}~\citep{wu2025sdlora} incrementally decouples the learning of magnitude and direction in LoRA components while preserving directions learned from previous tasks. However, these methods either keep and freeze previously learned LoRAs, resulting in parameter growth of the form $[\boldsymbol{B}_1\boldsymbol{A}_1, \dots, \boldsymbol{B}_t\boldsymbol{A}_t]$, or generate and maintain task-specific data representations, leading to \textit{(i) linear growth} in memory usage with the number of tasks, \textit{(ii) limited scalability} due to constrained storage space, and \textit{(iii) potential task interference} in the absence of principled LoRA merging mechanisms. These limitations motivate the question:\vspace{-1mm}
\begin{center}
    \textit{Can we enable continual learning only using a single shared LoRA, without learning or storing task-specific LoRAs or data representations?}\vspace{-1mm}
\end{center}
To address this question, we take inspiration from model merging~\citep{garipov2018loss,draxler2018essentially,wortsman2022model}, an emerging paradigm that aims to combine multiple task-specific models into a single unified model without retraining~\citep{stoica2024zipit,ilharco2023editing,yadav2023tiesmerging,ortiz2023task}. By extending this idea, we frame continual learning as a sequential model merging problem, where the objective shifts from keeping all task-specific LoRAs or data to continually integrating their updates into a single shared LoRA as new tasks arrive. Recently, model merging has successfully been extended to the LoRA regime: {\sffamily{KnOTS}}~\citep{stoica2025model} leverages singular value decomposition to project LoRA updates into a shared latent space, where existing merging methods can be applied; {\sffamily{LoRA-LEGO}}~\citep{zhao2025merging} decomposes LoRAs into minimal semantic units via grouping and clustering, enabling a reconstruction of multiple LoRAs into one. However, these LoRA merging methods generally assume \textit{concurrent access} to all task-specific LoRAs fine-tuned from the same pre-trained model, which limits their applicability to the continual merging scenarios~\citep{dziadzio2025how}, where tasks arrive sequentially. In such settings, the order of merging becomes critical and may degrade the performance of the final model. Moreover, continual LoRA merging remains underexplored in existing literature. While in the full-model setting, continual merging has received more attention, e.g., {\sffamily{OPCM}}~\citep{tang2025merging} sequentially projects new model updates onto subspaces orthogonal to the previously merged model and uses adaptive scaling to mitigate interference. However, these methods are not designed for LoRA, and the objective of merging differs from that of continual learning~\citep{ortiz2023task}. These challenges lead  to the following question we aim to answer:
\begin{center}
    \textit{How can we enable continual learning through LoRA-based continual merging?}\vspace{-1mm}
\end{center}
\begin{figure*}[t]
    \centering
    \begin{subfigure}{0.24\textwidth}
        \includegraphics[width=\linewidth]{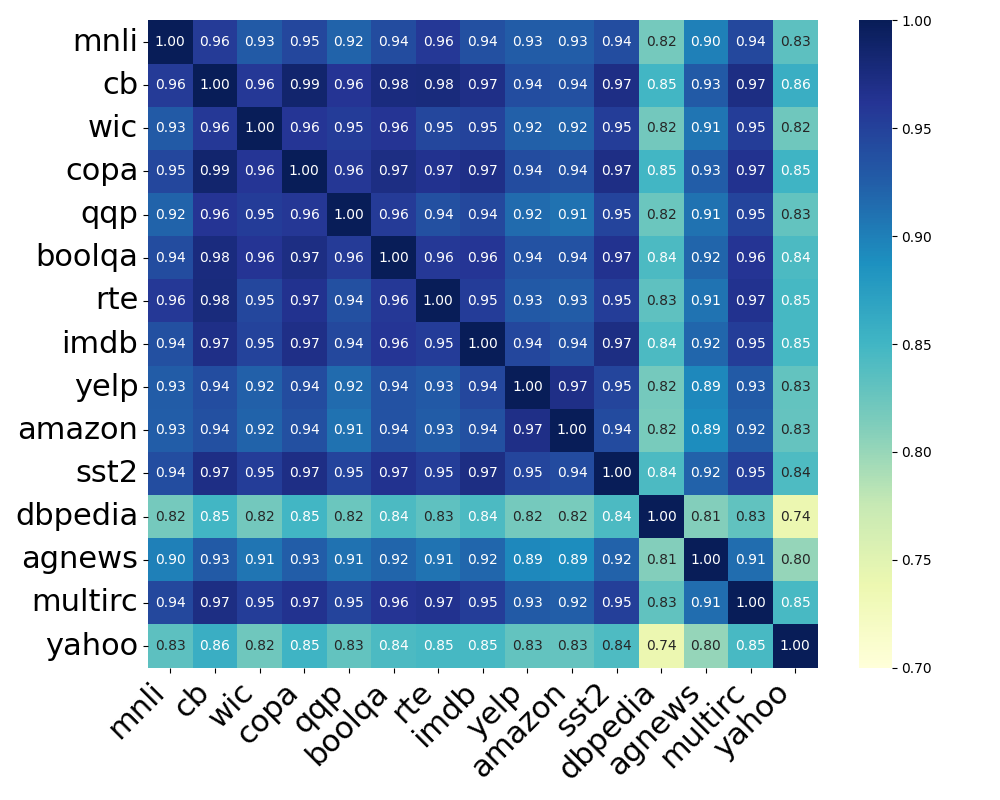}
        \caption{LoRA $\boldsymbol{A}$ (q)}
    \end{subfigure}
    \hfill
    \begin{subfigure}{0.24\textwidth}
        \includegraphics[width=\linewidth]{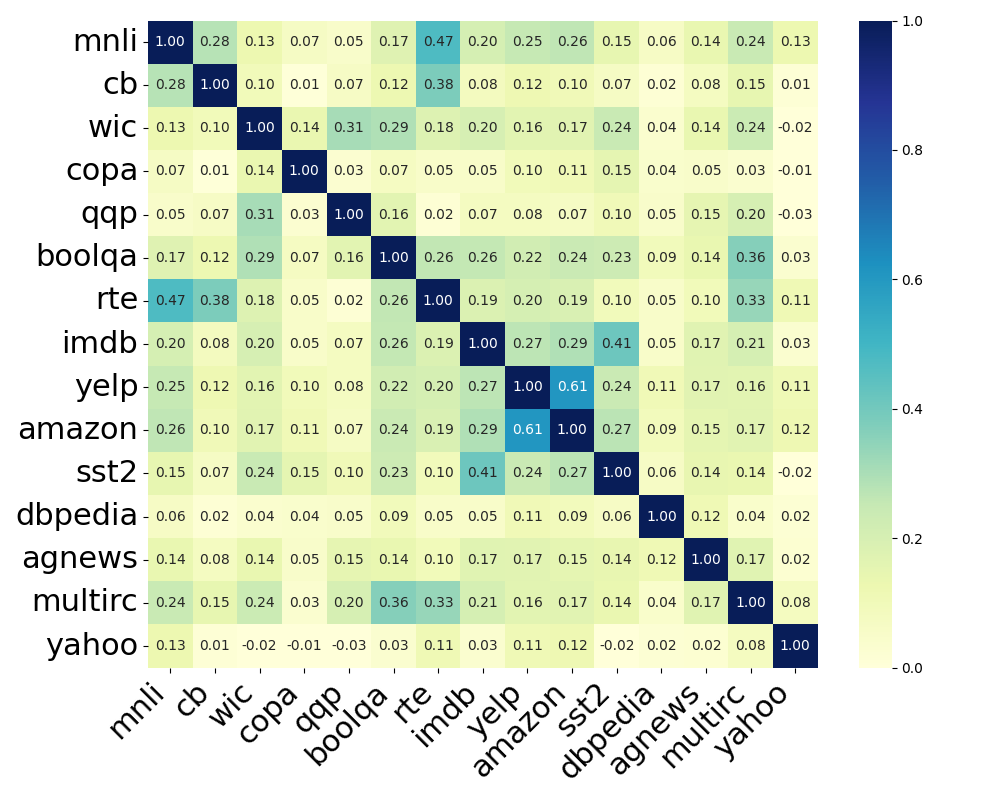}
        \caption{LoRA $\boldsymbol{B}$ (q)}
    \end{subfigure}
    \hfill
    \begin{subfigure}{0.24\textwidth}
        \includegraphics[width=\linewidth]{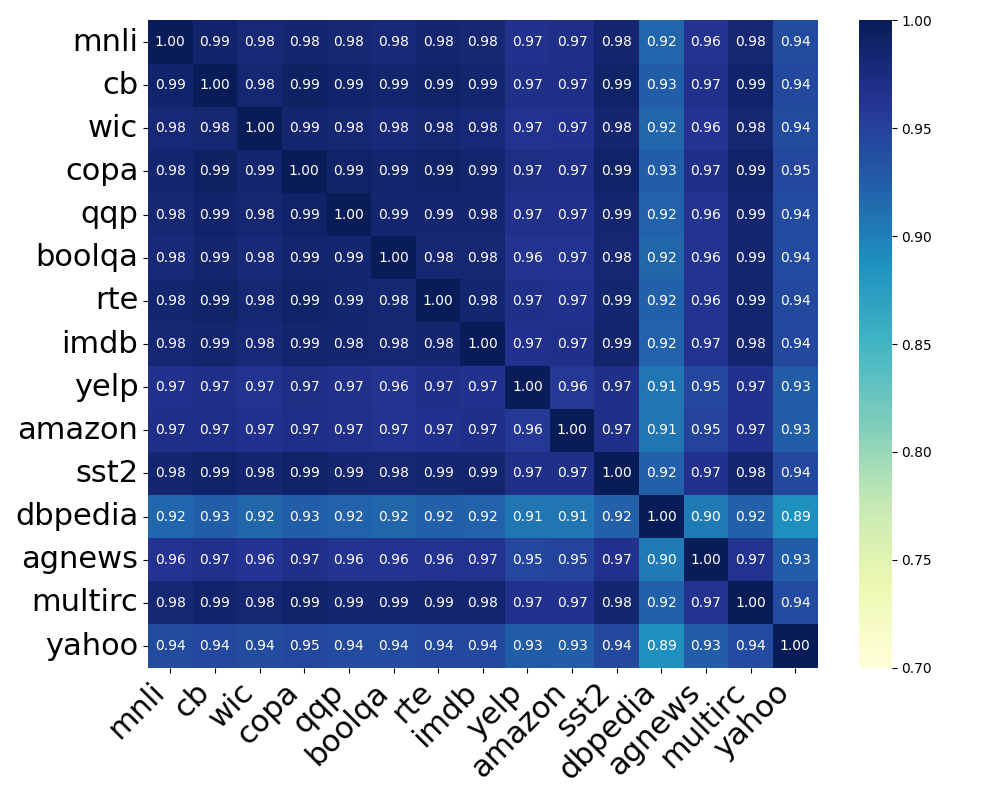}
        \caption{LoRA $\boldsymbol{A}$ (v)}
    \end{subfigure}
    \hfill
    \begin{subfigure}{0.24\textwidth}
        \includegraphics[width=\linewidth]{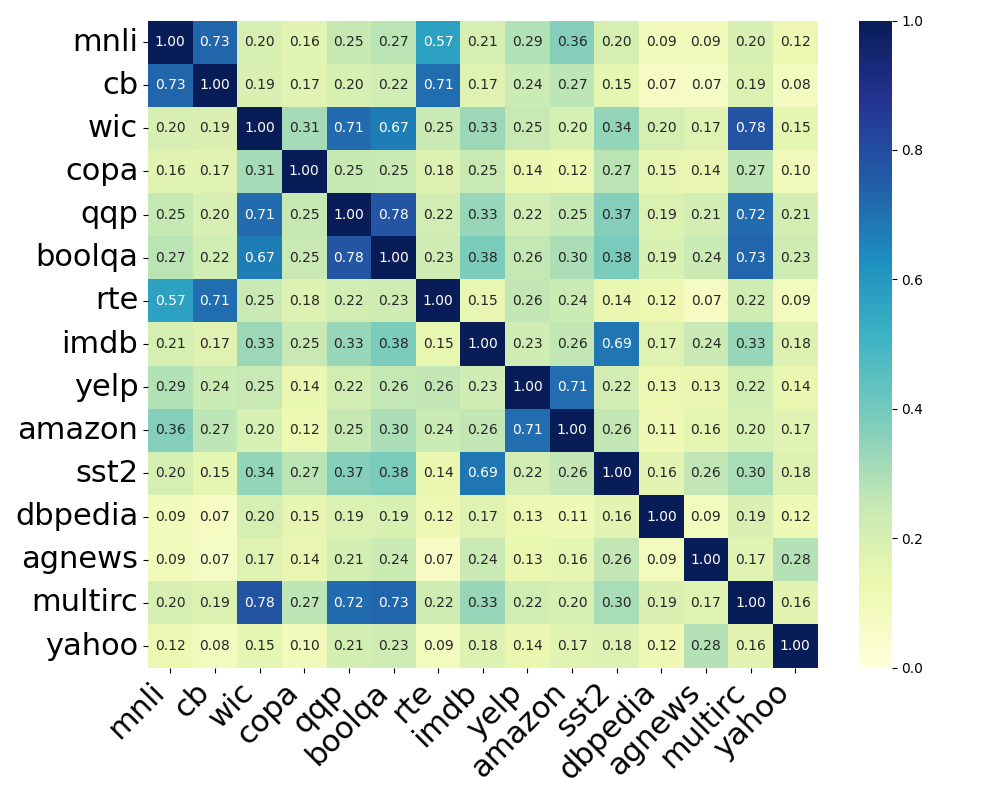}
        \caption{LoRA $\boldsymbol{B}$ (v)}
    \end{subfigure}

    \caption{Cosine similarity between 15 tasks from the large number of tasks benchmark for fine-tuned q and v attention LoRA $\boldsymbol{A}$ and $\boldsymbol{B}$ in the last layer (32nd) of Llama-2-7B-chat.}
    \label{fig:lora_sim_four}\vspace{-5mm}
\end{figure*}
We answer this question by maintaining a single pair of low-rank matrices $\{\boldsymbol{A},\boldsymbol{B}\}$, shared across tasks. Achieving this necessitates addressing key challenges, including how to initialize and continually update the shared LoRA to effectively balance the trade-off between forgetting and generalization. Moreover, in contrast to full-model continual merging, $\boldsymbol{A}$ and $\boldsymbol{B}$ play different roles in continual merging with LoRA. For instance, prior works~\citep{zhu2024asymmetry,sun2024improving,zhang2023lora,kopiczko2024vera} have shown that in LoRA fine-tuning, training $\boldsymbol{B}$ (initialized to zero) is critical for the performance, even randomly initialized $\boldsymbol{A}$ often suffices, but reversing the roles of $\boldsymbol{A}$ and $\boldsymbol{B}$ substantially decreases performance. To further investigate the asymmetry of LoRA components, we separately fine-tune 15 tasks from a standard large number of tasks benchmark~\citep{wang2023orthogonal} in continual learning using 15 independent LoRAs on Llama-2-7B-chat~\citep{touvron2023llama}, and compute cosine similarity of $\boldsymbol{A}$ and $\boldsymbol{B}$ across 15 tasks using their last layer LoRA. Figure~\ref{fig:lora_sim_four} shows that $\boldsymbol{A}$ exhibits significantly higher similarity across tasks compared to $\boldsymbol{B}$, suggesting that LoRA components follow inherently different learning dynamics. This motivates us to treat $\boldsymbol{A}$ and $\boldsymbol{B}$ differently in continual merging.

\begin{wrapfigure}{r}{0.4\textwidth}  
    \centering
    \vspace{-5pt}
    \includegraphics[width=0.4\textwidth]{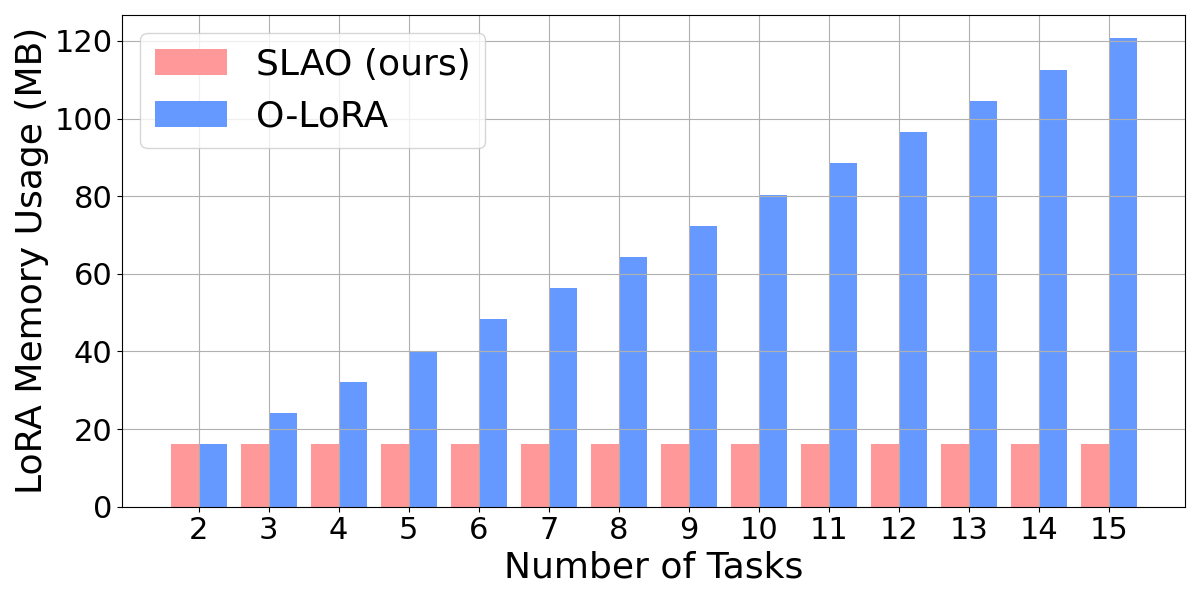}
    \caption{Comparison of SLAO and O-LoRA memory usage of large number of tasks benchmark via Llama-2-7B-chat.}
    \label{fig:lora_vs_olora_mem}
\end{wrapfigure}

To address the above questions, we propose a novel parameter-efficient continual learning method via continual merging into a single LoRA,  which initializes new task learning in an orthogonal subspace and sequentially merges LoRA updates. We name this method \textbf{SLAO} (\textbf{S}ingle \textbf{L}oR\textbf{A} continual learning with \textbf{O}rthogonal initialization via continual merging). Specifically, SLAO initializes each new task learning LoRA using orthogonal basis extracted from previously learned LoRA components, and exploits the asymmetric roles of $\boldsymbol{A}$ and $\boldsymbol{B}$ by applying a time-aware scaling mechanism that balances knowledge retention and plasticity during continual merging. As shown in Figure~\ref{fig:lora_vs_olora_mem}, our approach ensures constant memory overhead regardless of the number of tasks. Additionally, it reduces interference between past and new tasks via orthogonal basis initialization and enhances performance through adaptive continual merging that considers LoRA asymmetry. 

\paragraph{Summary of contributions.} This paper makes the following key contributions: (1) A novel parameter-efficient continual learning method for LLMs that continually merges new task LoRAs into a single LoRA via orthogonal basis initialization and a time-aware scaling mechanism, reducing catastrophic forgetting and improving generalization. (2) A theoretical analysis of how our design mitigates forgetting and improves intransigence. (3) Comprehensive experiments on various continual learning benchmarks using Llama models and Qwen models of varying sizes, demonstrating effectiveness and efficiency of our proposed method.

\section{Related Works}
\paragraph*{Continual Learning.} Continual learning aims to retain knowledge of previously learned tasks while adapting to new data. It faces two main challenges: (1) catastrophic forgetting~\citep{mccloskey1989catastrophic}, where the performance of the model on earlier tasks significantly degrades as it learns new ones; and (2) knowledge transfer, where the model leverages previously acquired knowledge to improve learning on new tasks. Existing approaches are divided into three categories:

\noindent(i) \textit{Rehearsal-based methods} employ a memory buffer to store samples from previous tasks, enabling joint training with new tasks. {\sffamily{Dark Experience Replay}}~\citep{buzzega2020dark} encourages consistency with past knowledge by aligning the model’s current logits with those sampled earlier in the optimization trajectory. {\sffamily{CLEAR}}~\citep{rolnick2019experience}, an experience replay method, effectively mitigates catastrophic forgetting in multi-task reinforcement learning. {\sffamily{Gradient episodic memory}}~\citep{lopez2017gradient} stores task-specific gradients and projects new gradients to avoid interference with previous knowledge.

\noindent(ii) \textit{Regularization-based methods} utilize constraints on the parameters of the model to prevent model updates of new tasks from interfering with knowledge acquired on previous tasks. Elastic weight consolidation, {\sffamily{EWC}}~\citep{kirkpatrick2017overcoming}, uses Fisher Information Matrix to identify and protect parameters critical for previous tasks. Orthogonal Gradient Descent, {\sffamily{OGD}}~\citep{farajtabar2020orthogonal}, projects gradients of new tasks onto a subspace that preserves model outputs on previous tasks, while ensuring the direction remains effective for learning new tasks.

\noindent(iii) \textit{Architecture-based methods} dynamically adjust the structure of the model to isolate task-specific weights or expand model capacity~\citep{mallya2018packnet,wang2022learning}. {\sffamily{Packnet}}~\citep{mallya2018packnet} performs iterative pruning and network re-training. {\sffamily{Progressive Prompts}}~\citep{razdaibiedina2023progressive} mitigate forgetting by maintaining a growing sequence of soft prompts, where each new task contributes an additional prompt.

\paragraph*{Parameter-efficient continual learning.} LoRA-based continual learning has emerged as a practical and parameter-efficient technique for adapting LLMs to sequential tasks. {\sffamily{O-LoRA}}~\citep{wang2023orthogonal} freezes previously learned LoRAs and incrementally learns new tasks in their orthogonal subspace; {\sffamily{InfLoRA}}~\citep{liang2024inflora} preserves prior LoRAs and uses task-dependent input matrices to define orthogonal subspaces for initializing new ones; {\sffamily{SAPT-LoRA}}~\citep{zhao2024sapt} retains earlier LoRAs and leverages generated previous tasks' data to align new LoRA learning with shared modules; {\sffamily{SD-LoRA}}~\citep{wu2025sdlora} incrementally decouples the learning of magnitude and direction in LoRA components while preserving directions learned from previous tasks.

\paragraph*{Merging and Continual Merging.} Model merging~\citep{garipov2018loss,draxler2018essentially,wortsman2022model} has emerged as an efficient paradigm that combines multiple task-specific models into a single unified model without retraining~\citep{stoica2024zipit,ilharco2023editing,yadav2023tiesmerging,ortiz2023task}. This idea has recently extended to LoRA-based adaptation: {\sffamily{KnOTS}}~\citep{stoica2025model} leverages singular value decomposition to project LoRA updates into a shared latent space, where existing merging methods can be applied; {\sffamily{LoRA-LEGO}}~\citep{zhao2025merging} decomposes LoRAs into minimal semantic units via grouping and clustering, enabling a reconstruction of multiple LoRAs into one. However, both LoRA merging and full model merging generally assume \textit{simultaneous access to all task-specific LoRA or model} fine-tuned from the same initial pre-trained model, which limits their applicability to the continual merging scenarios~\citep{dziadzio2025how}, where tasks arrive sequentially. Moreover, continual LoRA merging remains underexplored in existing literature. While in full-model settings, continual merging has received more attention, i.e., {\sffamily{OPCM}}~\citep{tang2025merging} mitigates interference by sequentially projecting new model updates onto subspaces orthogonal to the previously merged model, combined with adaptive scaling.

\section{Background and Motivation}

In this  section we formalize the problem setup and highlight the central trade-offs in continual learning with a single shared LoRA: preventing forgetting of past tasks while maintaining adaptability to new ones. We then outline why continual merging is both promising and nontrivial due to LoRA’s structural asymmetry and its behavior in the NTK regime, which motivates two principles developed in later subsections: (i) orthogonal initialization to reduce task interference, and (ii) asymmetry-aware merging to update LoRA components in a stable and effective manner. These insights establish the foundation for designing the proposed SLAO algorithm in Section~\ref{sec:methodology}.

\paragraph{Problem setup.}Let \( f_{\boldsymbol{W}_0}: \mathcal{X} \rightarrow \mathcal{Y} \) denote a pre-trained model parameterized by \( \boldsymbol{W}_0 \in \mathbb{R}^{m \times n} \), which remains frozen throughout the continual learning (CL) process. Here, \( \mathcal{X} \) and \( \mathcal{Y} \) represent the input and output spaces, respectively. We consider a sequence of $T$ tasks. For each task \( t \in \{1, 2, \dots, T\} \), the model is continually fine-tuned using the LoRA algorithm based on its associated training dataset
$\mathcal{D}_t = \{ (\boldsymbol{X}_{t,i}, \boldsymbol{Y}_{t,i}) \}_{i=1}^{N}$, and evaluated on a separate test dataset $\mathcal{D}'_t = \{ (\boldsymbol{X}'_{t,i}, \boldsymbol{Y}'_{t,i}) \}_{i=1}^{N'}$, where \( N \) and \( N' \) denote the number of training and testing samples, respectively. The goal is to continually learn a single set of LoRA parameters, specifically, matrices \( \boldsymbol{B} \in \mathbb{R}^{m \times r} \) and \( \boldsymbol{A} \in \mathbb{R}^{r \times n} \) with \( r \ll \min(m,n) \) such that the resulting merged LoRA model remains competitive with models optimized for expected risk (multi-task objective): 
\begin{equation}
\min_{\boldsymbol{B},\boldsymbol{A}}\sum\nolimits_{t=1}^{T}\sum\nolimits_{(\boldsymbol{X}_t,\boldsymbol{Y}_t)\in\mathcal{D}'_t}\mathcal{L}_{t}(f_{\boldsymbol{W}_0+\boldsymbol{B}\boldsymbol{A}}(\boldsymbol{X}_t),\boldsymbol{Y}_t),
    \label{eq:cl_obj}
\end{equation}
where \( \mathcal{L}_t \) denotes the empirical risk (e.g., cross-entropy or mean squared error) for task \( t \).

In the single shared LoRA setting for CL, we are restricted to maintaining only one pair of LoRA parameters, denoted by $ \boldsymbol{A}_{\text{merge}}^t$ and $ \boldsymbol{B}_{\text{merge}}^t$, across all tasks. When task \( t \) arrives, we fine-tune on its training data, possibly initialized with current merged models $\{\boldsymbol{A}_{\text{merge}}^{t-1}, \boldsymbol{B}_{\text{merge}}^{t-1}\}$, to obtain task-specific LoRA parameters $\boldsymbol{B}_{\text{ft},t} \in \mathbb{R}^{m \times r}$ and $\boldsymbol{A}_{\text{ft},t} \in \mathbb{R}^{r \times n}$.  The fine-tuned model for task \( t \) is represented as $f_{\boldsymbol{W}_0 + \boldsymbol{B}_{\text{ft},t}\boldsymbol{A}_{\text{ft},t}}(\cdot)$. After fine-tuning, we \text{merge} the previously accumulated LoRA parameters \( \boldsymbol{B}_{\text{merge}}^{t-1}\boldsymbol{A}_{\text{merge}}^{t-1} \) with the new task-specific parameters \( \boldsymbol{B}_{\text{ft},t}\boldsymbol{A}_{\text{ft},t} \), resulting in an updated \text{merge}d representation \( \boldsymbol{B}_{\text{merge}}^t\boldsymbol{A}_{\text{merge}}^t \). Due to the inherent asymmetry in LoRA components, the merging of the LoRA components is performed separately for \( \boldsymbol{B} \) and \( \boldsymbol{A} \) as formalized below:
\begin{align*}
\boldsymbol{B}_{\text{merge}}^t=\text{Continual\text{Merge}\_B}(\boldsymbol{B}_{\text{merge}}^{t-1};\boldsymbol{B}_{\text{ft},t}), \ \boldsymbol{A}_{\text{merge}}^t=\text{Continual\text{Merge}\_A}(\boldsymbol{A}_{\text{merge}}^{t-1};\boldsymbol{A}_{\text{ft},t}),\ t\geq2
\end{align*}
where $\boldsymbol{B}_{\text{ft},0}=\boldsymbol{0}$ and $\boldsymbol{A}_{\text{ft},0}$ 
is initialized using a Gaussian distribution, following the standard LoRA initialization~\citep{hu2022lora}. $\boldsymbol{B}_{\text{merge}}^1=\boldsymbol{B}_{\text{ft},1}$ and $\boldsymbol{A}_{\text{merge}}^1=\boldsymbol{A}_{\text{ft},1}$ are initialized as the first task fine-tuned LoRA, and $\boldsymbol{B}_{\text{merge}}^{t}\boldsymbol{A}_{\text{merge}}^{t}$ is to optimize Equation~\ref{eq:cl_obj}.
\subsection{Opportunities and Challenges in Continual Learning via Continual Merging}
In this section, by contrasting freezing-based LoRA continual learning and model merging, we highlight how continual merging with LoRA continual learning fundamentally alters both the computational footprint and the learning objective, offering a scalable and efficient alternative for sequential task adaptation. \\

\noindent \textbf{Storage and memory efficiency} is one of the core advantages of continual merging for CL. Unlike existing continual learning methods that benefit from freezing and retaining previously fine-tuned LoRAs, using continual merging after fine-tuning task $t$ only requires storing a fixed number of LoRAs: (1) the current merged LoRA and (2) the fine-tuned LoRA to be merged. This strategy results in a constant memory complexity of $\mathcal{O}(|\boldsymbol{B}|+|\boldsymbol{A}|)=\mathcal{O}((m+n)r)$, where $|\boldsymbol{B}|+|\boldsymbol{A}|$ denote the parameter sizes of a single LoRA. Critically, this memory requirement remains independent of the number of sequential tasks $T$. In contrast, as shown in Figure~\ref{fig:lora_vs_olora_mem}, existing freezing-based continual learning methods require storing all LoRAs, incurring a linear memory complexity of $\mathcal{O}(T(|\boldsymbol{B}|+|\boldsymbol{A}|))$, which is $\mathcal{O}(T(m+n)r)$,  growing linearly with the number of tasks.\vspace{1mm}\\

\noindent\textbf{Training efficiency} is an evident advantage of continual merging for CL. Prior works do not simply keep previous tasks' LoRAs without any operations. Instead, when training new tasks, prior works use these multiple LoRAs during training through constraints, i.e., making new task LoRA parameters orthogonal to all previous LoRAs, heavily increasing computational cost during training. However, continual merging in CL would only use the parameters of a single previously fine-tuned LoRA to initialize new task LoRA parameters before training, avoiding extra computation during training.\vspace{1mm}\\

\noindent\textbf{Difference between continual learning and merging} is mainly in the objective. In the context of multi-task model merging, the task arithmetic property, as defined by~\cite{ortiz2023task}, refers to the ability to add task-specific vectors without interfering with performance on other tasks. However, in CL, the objective extends beyond retention: the model must both preserve previously acquired knowledge and generalize effectively to unseen data. Hence, while merging can support CL, its underlying objectives are not entirely equivalent to those of CL, leading to fundamental differences in both theoretical analysis and algorithmic design. \vspace{-1mm}
\subsection{Orthogonal Initialization Motivated by LoRA NTK Analysis}\vspace{-1mm}
To inform our algorithmic design, we evaluate the performance of CL using LoRA by two key metrics, forgetting  and intransigence errors~\citep{li2023fixed}, defined as below:\vspace{1mm}\\
(1) \textbf{Forgetting error:} It measures how much knowledge of old tasks has been forgotten after learning the current task. Specifically, after learning task $t\in[2,T]$, the average forgetting over all old tasks $i\in [1,t-1]$ is defined as:
\begin{align}
    \mathcal{F}_{t} =\sum\nolimits_{i=1}^{t-1}\big(\mathcal{L}_{i}(\boldsymbol{W}_0+\boldsymbol{B}_t\boldsymbol{A}_t)-\mathcal{L}_{i}(\boldsymbol{W}_0+\boldsymbol{B}_i\boldsymbol{A}_i)\big)
    \label{eq:forget_error}
\end{align}
In Equation~\ref{eq:forget_error}, $\mathcal{L}_{i}(\boldsymbol{W}_0+\boldsymbol{B}_t\boldsymbol{A}_t)-\mathcal{L}_{i}(\boldsymbol{W}_0+\boldsymbol{B}_i\boldsymbol{A}_i)$ denotes the performance difference between $\boldsymbol{B}_i\boldsymbol{A}_i$ (result after training task $i$) and $\boldsymbol{B}_t\boldsymbol{A}_t$ (result after training task $t$) on test data of task $i$.\vspace{1mm}\\
(2) \textbf{Intransigence error:} It evaluates the ability of the algorithm to adapt to a new task after having already adapted to a sequence of old tasks.
\begin{align}
    \mathcal{I}_{t}=\sum\nolimits_{i=1}^{t}\big(\mathcal{L}_{i}(\boldsymbol{W}_0+\boldsymbol{B}_i\boldsymbol{A}_i)-\mathcal{L}_{i}(\boldsymbol{W}_0+\boldsymbol{B}_i^{*}\boldsymbol{A}_i^{*})\big)
    \label{eq:intransigence_error}
\end{align}
In Equation~\ref{eq:intransigence_error}, $\mathcal{L}_{i}(\boldsymbol{W}_0+\boldsymbol{B}_i\boldsymbol{A}_i)-\mathcal{L}_{i}(\boldsymbol{W}_0+\boldsymbol{B}_i^{*}\boldsymbol{A}_i^{*})$ denotes the performance difference between $\boldsymbol{B}_i^{*}\boldsymbol{A}_i^{*}$ (optimal result of training task $i$) and $\boldsymbol{B}_i\boldsymbol{A}_i$ (result after training task $i$) on test data of task $i$.

To examine these errors, we draw on the empirical observation~\citep{malladi2023kernel} that, when prompt-based fine-tuning is employed~\citep{schickschutze2021exploiting, gaoetal2021making}, the fine-tuning of a pre-trained language model tends to remain within the Neural Tangent Kernel (NTK) regime. Specifically, under the NTK regime, assuming $\mathcal{D}_t=\{(\boldsymbol{X}_i,\boldsymbol{Y}_i)\}_{i\in\{1,\dots,N\}}$, the empirical risk for task $t$ using LoRA can be approximated as~\citep{jang2024lora}
\begin{align}
    \mathcal{L}_t=\frac{1}{N}\sum\nolimits_{i=1}^{N}\ell_i\big(f_{\boldsymbol{W}_0}(\boldsymbol{X}_i)+\langle\nabla_{\boldsymbol{W}}f_{\boldsymbol{W}_0}(\boldsymbol{X}_i),\boldsymbol{B}_i\boldsymbol{A}_i\rangle,\boldsymbol{Y}_i\big)
\end{align}
As detailed in Appendix~\ref{app:sec:theory} (Lemma~\ref{lem:G_bound_lemma}), by extending the analysis in~\cite{jang2024lora,maurer2016vector}, we show that, under the NTK regime, the term in forgetting error can be bounded as follows:
\begin{align}
    \mathcal{L}_{i}(\boldsymbol{W}_0+\boldsymbol{B}_t\boldsymbol{A}_t)-\mathcal{L}_{i}(\boldsymbol{W}_0+\boldsymbol{B}_i\boldsymbol{A}_i)
   & \leq G\|\langle\boldsymbol{B}_t\boldsymbol{A}_t-\boldsymbol{B}_i\boldsymbol{A}_i,\nabla_{\boldsymbol{W}}f_{\boldsymbol{W}_0}(\boldsymbol{X}_i)\rangle\|_2\nonumber\\
    &\leq 
G\sqrt{\sum_{j=1}^{K}\|\boldsymbol{B}_t\boldsymbol{A}_t-\boldsymbol{B}_i\boldsymbol{A}_i\|_{\mathrm{F}}^2\|\nabla_{\boldsymbol{W}}f_{\boldsymbol{W}_0}^{(j)}(\boldsymbol{X}_i)\|_{\mathrm{F}}^2}\nonumber \\
   & \leq GR\sqrt{\sum_{j=1}^{K}\|\boldsymbol{B}_t\boldsymbol{A}_t-\boldsymbol{B}_i\boldsymbol{A}_i\|_{\mathrm{F}}^2},
\end{align}
where $K$ is output dimension and $\|\nabla_{\boldsymbol{W}}f_{\boldsymbol{W}_0}^{(j)}(\boldsymbol{X}_i)\|_{\mathrm{F}}\leq R$. Thus, to minimize Equation~\ref{eq:forget_error}, we should make $\|\boldsymbol{B}_t\boldsymbol{A}_t-\boldsymbol{B}_i\boldsymbol{A}_i\|_{\mathrm{F}}$ as small as possible. Similarly, to minimize Equation~\ref{eq:intransigence_error}, $\|\boldsymbol{B}_i\boldsymbol{A}_i-\boldsymbol{B}_i^{*}\boldsymbol{A}_i^{*}\|_{\mathrm{F}}$ should be minimized. Thus, the term in forgetting-intransigence decomposition can be written as:
\begin{align}
    &\|\boldsymbol{B}_t\boldsymbol{A}_t-\boldsymbol{B}_i\boldsymbol{A}_i\|_{\mathrm{F}}+\|\boldsymbol{B}_i\boldsymbol{A}_i-\boldsymbol{B}_i^{*}\boldsymbol{A}_i^{*}\|_{\mathrm{F}}\nonumber\\
    \leq&\|\boldsymbol{B}_t(\boldsymbol{A}_t-\boldsymbol{A}_i)\|_{\mathrm{F}}+\|(\boldsymbol{B}_t-\boldsymbol{B}_i)\boldsymbol{A}_i\|_{\mathrm{F}} +\|\boldsymbol{B}_i(\boldsymbol{A}_i-\boldsymbol{A}_i^{*})\|_{\mathrm{F}}+\|(\boldsymbol{B}_i-\boldsymbol{B}_i^{*})\boldsymbol{A}_i^{*}\|_{\mathrm{F}}
\end{align}
\textbf{Algorithmic motivation.}~From above bound, we observe that forgetting-intransigence error in CL with LoRA depends asymmetrically on the choice of frozen and trainable components. For example, freezing $\boldsymbol{A}$ and fine-tuning $\boldsymbol{B}$  is at least as effective, if not better, than the reverse~\citep{zhu2024asymmetry}. However, if we apply freezing $\boldsymbol{A}$ in CL, then $\|\boldsymbol{A}_t-\boldsymbol{A}_i\|_{\mathrm{F}}=0$ but $\|\boldsymbol{A}_i-\boldsymbol{A}_i^{*}\|_{\mathrm{F}}$  may be unintentionally increased due to random $\boldsymbol{A}_i$. Instead, if we propose to fine-tune $\boldsymbol{A}_i$ and extract orthogonal basis $\boldsymbol{Q}_{i-1}$ from $\boldsymbol{A}_{i-1}$, where $\boldsymbol{Q}_{i-1}\boldsymbol{Q}_{i-1}^{\top}=\boldsymbol{I}_r$, to initialize $\boldsymbol{A}_i^{(0)}$ via $\boldsymbol{Q}_{i-1}$, then we have $\boldsymbol{A}_i^{(0)}(\boldsymbol{A}_i^{(0)})^{\top}=\boldsymbol{I}_r$ where $i\in[1,\dots,t]$. This orthogonal structure not only keeps geometric consistency across tasks but also allows $\boldsymbol{A}_j$ $(t\geq j>i)$, to remain well-aligned with previous $\boldsymbol{A}_i$, i.e. $\mathbb{E}[\boldsymbol{A}_j\boldsymbol{A}_i^{\top}]\approx\boldsymbol{I}_r$, thereby minimizing both $\|\boldsymbol{A}_t-\boldsymbol{A}_i\|_{\mathrm{F}}$ and $\|\boldsymbol{A}_i-\boldsymbol{A}_i^{*}\|_{\mathrm{F}}$. This motivates our design of orthogonal initialization. The complete derivation is in Appendix~\ref{app:forget_intrans_theory}.
\subsection{Continual Merging Motivated by LoRA Asymmetry Analysis}
To provide the analysis of merging $\boldsymbol{B}$, we consider a scenario where a single LoRA is continually fine-tuned for sequential tasks, which means each task starts from the previous task's fine-tuned LoRA. We initialize task 1 as $\boldsymbol{B}_0 = 0$ and $\boldsymbol{A}_0\sim\mathcal{N}(0,\sigma^2)$~\citep{hu2022lora}. After fine-tuning with $T$ steps, we obtain its parameters:
\begin{align}
    \boldsymbol{W}_0+(\boldsymbol{B}_0+\Delta\boldsymbol{B}_1)(\boldsymbol{A}_0+\Delta\boldsymbol{A}_1)=\boldsymbol{W}_0+\Delta\boldsymbol{B}_1\boldsymbol{A}_0+\Delta\boldsymbol{B}_1\Delta\boldsymbol{A}_1
\end{align}
Since $\boldsymbol{B}_0=0$, we have that
\begin{align}
    \|(\Delta\boldsymbol{B}_1)^{\top}\boldsymbol{B}_0\|_{\mathrm{F}}=0,\quad \|\boldsymbol{A}_0(\Delta\boldsymbol{A}_1)^{\top}\|_{\mathrm{F}}\neq 0
\end{align}
Based on Theorem 2.1 in~\cite{hao2024flora}, we write task 1 fine-tuned LoRA:
\begin{align}
    \boldsymbol{B}_1=\eta f_{B}(T)\boldsymbol{A}_{0}^{\top},\quad \boldsymbol{A}_1 = \boldsymbol{A}_0+\eta\boldsymbol{A}_0 f_{A}(T)
\end{align}
Using recursion, for task $i$ where $i\geq2$, the orthogonality measures become:
\begin{align}
    \|\Delta\boldsymbol{B}_i^{\top}\boldsymbol{B}_{i-1}\|_{\mathrm{F}}
    &\approx\|(\eta f_{B}(T)\boldsymbol{A}_{i-1}^{\top})^{\top}(\eta f_{B}(T)\boldsymbol{A}_{i-2}^{\top})\|_{\mathrm{F}}=\eta^2\|\boldsymbol{A}_{i-1}f_B(T)^\top f_{B}(T)\boldsymbol{A}_{i-2}^{\top}\|_{\mathrm{F}}\label{eq:B_ortho}\\
    \|\boldsymbol{A}_{i-1}\Delta\boldsymbol{A}_i^{\top}\|_{\mathrm{F}}
    &\approx\|(\boldsymbol{A}_{i-2}+\eta\boldsymbol{A}_{i-2} f_{A}(T))(\eta\boldsymbol{A}_{i-1} f_{A}(T))^{\top}\|_{\mathrm{F}}\nonumber\\
    &=\eta\|\boldsymbol{A}_{i-2}f_{A}(T)^{\top}\boldsymbol{A}_{i-1}^{\top}+\eta\cdot\boldsymbol{A}_{i-2}f_{A}(T)f_{A}(T)^{\top}\boldsymbol{A}_{i-1}^{\top}\|_{\mathrm{F}}\label{eq:A_ortho}
\end{align}
Our insight is that the second term in Equation~\ref{eq:A_ortho} has a smaller magnitude when learning rate is not large, since when $\eta\ll1/L$, $\lim_{t\rightarrow\infty}\eta\|f_{A}(t)\|\ll 1$~\citep{hao2024flora}, thus the second term is significantly smaller than the first term, and $\|\Delta\boldsymbol{B}_i^{\top}\boldsymbol{B}_{i-1}\|_{\mathrm{F}}<\|\boldsymbol{A}_{i-1}\Delta\boldsymbol{A}_i^{\top}\|_{\mathrm{F}}$. Hence, the update of $\boldsymbol{B}$ is more orthogonal to its initialization than the update of $\boldsymbol{A}$ is to its initialization. Based on the findings in~\cite{wei2025modeling}, task vectors in model merging are inherently close orthogonal to minimize interference, which indicates that in our case, merging $\boldsymbol{B}$ rather than merging $\boldsymbol{A}$ provides better task isolation and reduced interference, motivating our choice to perform merging $\boldsymbol{B}$.\\
Then, for the operation of merging $\boldsymbol{B}$, we build on parameter-efficient module linear arithmetic composition, including addition and negation. Linear connectivity implies that model parameters fine-tuned from the same pretrained checkpoint can be added to improve generalization~\citep{wortsman2022model}, a property that extends to PEFT adapters, whose small updates likewise allow linear composition~\citep{zhang2023composing}. Hence, we can write merging operations on $\boldsymbol{B}$ using task vectors as
\begin{align}
    \boldsymbol{B}_{merge} = \boldsymbol{B}_{merge} +\lambda\cdot(\boldsymbol{B}_{new}-\boldsymbol{B}_{merge})
\end{align}
This makes the foundation of the operation of merging $\boldsymbol{B}$ in continual merging.

\section{Methodology}
\label{sec:methodology}
Drawing on the insights outlined in the preceding section, we now trun to introducing the SLAO algorithm that incrementally integrates all task-specific LoRA updates into a single continually updated LoRA. SLAO is designed to preserve past information while enabling efficient adaptation by combining four core principles: enforcing orthogonality across tasks to mitigate forgetting, continually merging updates to control memory footprint, respecting the intrinsic asymmetry between LoRA’s  components in a time-aware manner to ensure stable accumulation of knowledge. Together, these insights allow SLAO to maintain a compact yet expressive representation of all learned tasks, providing a simple, scalable, and effective solution for LoRA-based continual learning. In what follows, we first introduce the proposed algorithm and then analyze its parameters' dynamics to illustrate its effectiveness.

\subsection{SLAO: Single LoRA Continual Learning}
\begin{figure}[t]
    \centering
    \includegraphics[width=0.87\linewidth]{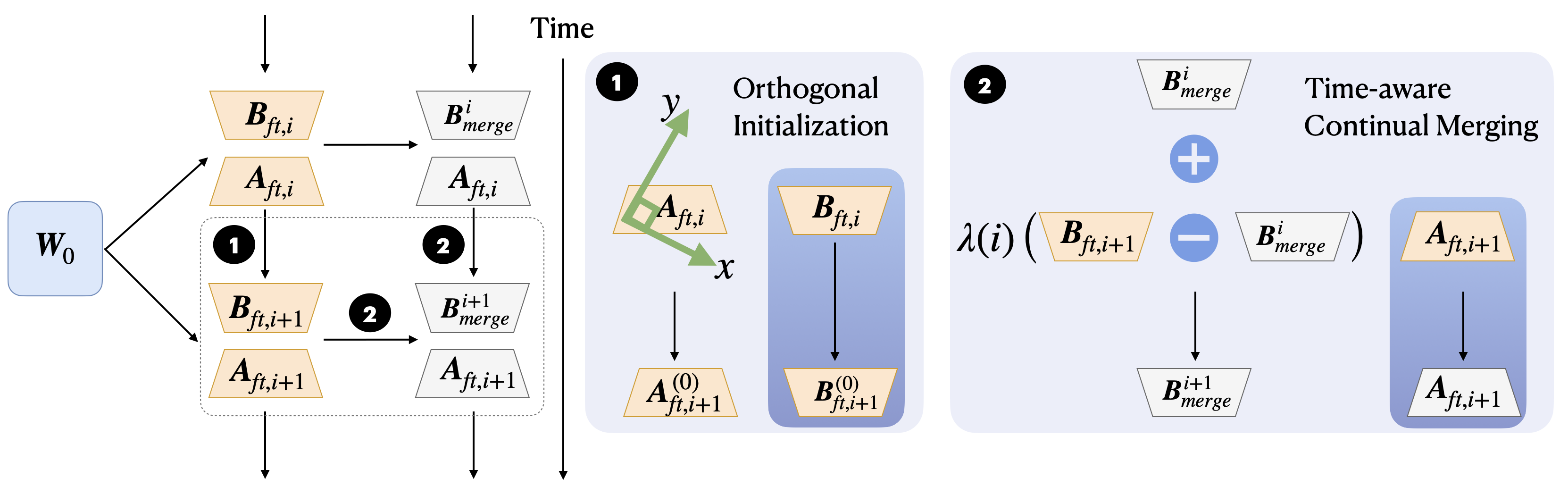}
    \caption{Overview of {\sffamily{SLAO}}. Left area is a framework where fine-tuned LoRA (orange) and merged LoRA (gray) are processed over time. Right area (2 blue boxes) highlights key components: (1) Orthogonal initialization for new task $i+1$ learning LoRA, where orthogonal basis is extracted from $\boldsymbol{A}_{\text{ft},i}$ to initialize $\boldsymbol{A}_{\text{ft},i+1}$ such that $\boldsymbol{A}_{\text{ft},i+1}^{(0)}(\boldsymbol{A}_{\text{ft},i+1}^{(0)})^{\top}=\boldsymbol{I}_r$ and $\boldsymbol{B}_{\text{ft},i+1}$ is initialized by previous $\boldsymbol{B}_{\text{ft},i}$; (2) Time-aware continual merging for $\boldsymbol{B}_{\text{ft},i+1}$ and $\boldsymbol{B}_{\text{merge}}^{i}$, and update $\boldsymbol{A}_{\text{merge}}^{i+1}$ via $\boldsymbol{A}_{\text{ft},i+1}$.}
    \label{fig:SLAO_overview}
\end{figure}
We propose  {\sffamily{SLAO}} to utilize continual merging into a single LoRA to minimize task interference and improve generalization for CL. {\sffamily{SLAO}} is motivated by four key insights: (1) \textit{Orthogonal Retention:} To minimize forgetting error and intransigence error, it is crucial to maintain orthogonality in the LoRA components across tasks; (2) \textit{Continual Merging:} To reduce memory usage in LoRA-based CL, continually merging new task fine-tuned LoRA updates into a single \text{merge}d LoRA is a highly efficient strategy; (3) \textit{Asymmetry of LoRA:} Given the distinct learning roles of LoRA components $\boldsymbol{A}$ and $\boldsymbol{B}$, they should be handled separately; and (4) \textit{Time-aware Scaling:} To retain prior knowledge while adapting to new tasks, the merging process for new LoRA updates should be scaled in a time-aware manner that reflects its training trajectory.

The {\sffamily{SLAO}} consists of two main operations: (1) Initialize each new task learning LoRA by extracted orthogonal basis from the previous task's fine-tuned LoRA; (2) After fine-tuning on the new task, we utilize the asymmetry of LoRA components to employ adaptive time-varying scaling for new LoRA updates to merge into the merged LoRA. The complete procedure is outlined in Algorithm~\ref{algo:slao} and illustrated in Figure~\ref{fig:SLAO_overview}. Starting with the first task's fine-tuned LoRA $\boldsymbol{B}_{\text{ft},1}\boldsymbol{A}_{\text{ft},1}$ by standard fine-tuning, our method iteratively integrates LoRA updates of subsequent tasks in a continual fashion.\vspace{1mm}\\
\noindent\textbf{Orthogonal basis extraction for initialization. }For new task $i$, we first extract orthogonal basis of previous fine-tuned $\boldsymbol{A}_{\text{ft},i-1}$, use that to initialize $\boldsymbol{A}_{\text{ft},i}^{(0)}$, making $\boldsymbol{A}_{\text{ft},i}^{(0)}(\boldsymbol{A}_{\text{ft},i}^{(0)})^{\top}=\boldsymbol{I}_r$. We utilize QR decomposition to extract orthogonal matrix from $\boldsymbol{A}_{\text{ft},i-1}$, which is:
\begin{equation}
    \boldsymbol{Q}_{i}\boldsymbol{R}_i=QR((\boldsymbol{A}_{\text{ft},i-1})^{\top})\ \rightarrow\  \boldsymbol{Q}_i = \boldsymbol{Q}_i\cdot \text{sign}(\text{diag}(\boldsymbol{R}_i))^{\top}\ \rightarrow\ \boldsymbol{A}_{\text{ft},i}^{(0)} = \boldsymbol{Q}_i^{\top}
\end{equation}
As a result, the initialization $\boldsymbol{A}_{\text{ft},i}^{(0)}$ has orthogonal rows. For $\boldsymbol{B}$, we directly initialize $\boldsymbol{B}_{\text{ft},i}^{(0)}$ by $\boldsymbol{B}_{\text{ft},i-1}$ which is the fine-tuned $\boldsymbol{B}$ of previous task $i-1$. \vspace{1mm}\\
\noindent\textbf{Asymmetrically merging LoRA via time-aware scaling. }After fine-tuning task $i$, we \text{merge} its LoRA updates into $\{\boldsymbol{B}_{\text{merge}}^{i-1}, \boldsymbol{A}_{\text{merge}}^{i-1}\}$. Due to the intrinsic asymmetry of $\boldsymbol{B}$ and $\boldsymbol{A}$ in LoRA, we update $\boldsymbol{A}_{\text{merge}}^{i} = \boldsymbol{A}_{\text{ft},i}$, and we \text{merge} $\boldsymbol{B}$ by time-aware coefficient $\lambda(i)$ for new task updates:
\begin{equation}
    \boldsymbol{B}_{\text{merge}}^{i} = \boldsymbol{B}_{\text{merge}}^{i-1} + \lambda(i)\cdot (\boldsymbol{B}_{\text{ft},i}-\boldsymbol{B}_{\text{merge}}^{i-1})
\end{equation}
where $\lambda(i)$ is introduced to maintain a consistent magnitude of the merged $\boldsymbol{B}$'s deviation from previous tasks throughout the merging process. In our method, the scaling factor can be set to $\lambda(i)=\frac{1}{\sqrt{i}}$, which follows the continual merging method proposed in~\cite{tang2025merging}. The findings in~\cite{ilharco2023editing,tang2024parameterefficient} indicate that task vectors from different tasks tend to be approximately orthogonal, and since $\boldsymbol{B}$s across tasks are approximately orthogonal to each other, as shown in Figure~\ref{fig:lora_sim_four}, which indicates that $\boldsymbol{B}$ task vectors are approximately orthogonal. This orthogonality makes $\lambda(i)=\frac{1}{\sqrt{i}}$ a natural choice for the scaling factor, since it helps maintain the magnitude of parameter changes across merging steps~\citep{tang2025merging}.

\begin{algorithm}[t]
\caption{SLAO: Single LoRA Continual Learning}
\label{algo:slao}
\begin{algorithmic}[1]
\State Initialize $\boldsymbol{B}_{\text{merge}}^{1}=\boldsymbol{B}_{\text{ft},1}$, $\boldsymbol{A}_{\text{merge}}^{1}=\boldsymbol{A}_{\text{ft},1}$, scaling factor $\lambda(1)=1$, number of tasks $T$.
\For{$i=2$ to $T$}
\State $\boldsymbol{Q}_{i}\boldsymbol{R}_i=QR((\boldsymbol{A}_{\text{ft},i-1})^{\top})$, $\boldsymbol{Q}_i = \boldsymbol{Q}_i\cdot \text{sign}(\text{diag}(\boldsymbol{R}_i))^{\top}$ // Extract orthogonal basis of $\boldsymbol{A}_{\text{ft},i-1}$

\State $\boldsymbol{A}_{\text{ft},i}^{(0)} = \boldsymbol{Q}_i^{\top}$, $\boldsymbol{B}_{\text{ft},i}^{(0)}=\boldsymbol{B}_{\text{ft},i-1}$ // \underline{\textbf{Initialize}} $\boldsymbol{A}_{\text{ft},i}$ and $\boldsymbol{B}_{\text{ft},i}$ for task $i$
\State $\boldsymbol{B}_{\text{ft},i}\boldsymbol{A}_{\text{ft},i}\leftarrow\text{fine-tune}(\boldsymbol{W}_0, \boldsymbol{B}_{\text{ft},i}^{(0)}\boldsymbol{A}_{\text{ft},i}^{(0)})$ // \underline{\textbf{Fine-tune}}  $\boldsymbol{A}_{\text{ft},i}$ and $\boldsymbol{B}_{\text{ft},i}$ for task $i$
\State $\boldsymbol{A}_{\text{merge}}^{i}=\boldsymbol{A}_{\text{ft},i}$
\State $\boldsymbol{B}_{\text{merge}}^{i}=\boldsymbol{B}_{\text{merge}}^{i-1}+\lambda(i)(\boldsymbol{B}_{\text{ft},i}-\boldsymbol{B}_{\text{merge}}^{i-1})$
\State Use \text{merge}d LoRA $\boldsymbol{B}_{\text{merge}}^{i}\boldsymbol{A}_{\text{merge}}^{i}$ for inference until new task comes
\EndFor
\State \Return $\boldsymbol{B}_{\text{merge}}^{T}\boldsymbol{A}_{\text{merge}}^{T}$
\end{algorithmic}
\end{algorithm}
\vspace{-1mm}\subsection{Dynamics of SLAO}\vspace{-1mm}
To better understand the effectiveness of {\sffamily{SLAO}}, inspired by the analysis of~\cite{hao2024flora},  in the following theorem we analyze the dynamics of task-specific parameters' update in CL scenario.
\begin{theorem}Let the  parameters 
$\boldsymbol{A}$ and 
$\boldsymbol{B}$ be updated using SGD at each step 
$s$ for task 
$i$ as follows:
\begin{equation}
    \boldsymbol{A}_{i}^{s+1}=\boldsymbol{A}_{i}^{s}-\eta(\boldsymbol{B}_{i}^{s})^{\top}(\nabla_{\boldsymbol{W}}\mathcal{L}_i^{s}),\ \ \boldsymbol{B}_{i}^{s+1}=\boldsymbol{B}_{i}^{s}-\eta(\nabla_{\boldsymbol{W}}\mathcal{L}_i^{s})(\boldsymbol{A}_{i}^{s})^{\top}\\
\end{equation}
\textit{where $\eta$ is the learning rate. We assume $\boldsymbol{A}_i^s=\boldsymbol{A}_i^{(0)}+\eta\boldsymbol{A}_i^{(0)} f_{A}(s)$ and $\boldsymbol{B}_i^s=\boldsymbol{B}_i^{(0)}+\eta f_{B}(s)(\boldsymbol{A}_i^{(0)})^{\top}$ holds with such functions $f_A$ and $f_B$ for $1,\dots,s$, and $\|\sum_{s=1}^{S}\nabla_{\boldsymbol{W}}\mathcal{L}_i^{(s)}\|_{\mathrm{F}}\leq L$ for every $S$ during training task $i$, which implies that the model stays within a finite Euclidean ball. If we assume $\boldsymbol{A}_{i}^{(0)}(\boldsymbol{A}_{i}^{(0)})^{\top}=\boldsymbol{I}_r$, in this case, the dynamics of $\boldsymbol{A}_i$ satisfies $\|f_{A}(s)\|_2\leq\frac{\eta L^2(1-(\eta^2 L^2)^s)}{1-\eta^2 L^2}$, and the dynamics of $\boldsymbol{B}$ satisfies $f_{B}(s)=-\sum_{j=0}^{s-1}(\nabla_{\boldsymbol{W}}\mathcal{L}_i^{j})(\eta f_{A}^{\top}(j)+\boldsymbol{I})$. When $\eta$ is small, we have $f_{B}(s)\approx-\sum_{j=0}^{s-1}(\nabla_{\boldsymbol{W}}\mathcal{L}_i^{j})$. Thus $B_i^{S}=\eta f_{B}(S)(\boldsymbol{A}_i^{(0)})^{\top}$, and total update for $\boldsymbol{B}_i$ is $\Delta\boldsymbol{B}_i = -\eta\big(\sum_{s=0}^{S}(\nabla_{\boldsymbol{W}}\mathcal{L}_i^{s})\big)(\boldsymbol{A}_i^{(0)})^{\top}$}.
\end{theorem}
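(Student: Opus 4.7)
The plan is to unfold the SGD updates under the ansatz and turn the two-parameter evolution into decoupled scalar-matrix recursions for $f_A$ and $f_B$, then solve each by induction. The orthogonality $\boldsymbol{A}_i^{(0)}(\boldsymbol{A}_i^{(0)})^\top=\boldsymbol{I}_r$ will be the main lever for disentangling them from the ansatz.

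First I would substitute $\boldsymbol{A}_i^s=\boldsymbol{A}_i^{(0)}+\eta\boldsymbol{A}_i^{(0)}f_A(s)$ and $\boldsymbol{B}_i^s=\eta f_B(s)(\boldsymbol{A}_i^{(0)})^\top$ (taking $\boldsymbol{B}_i^{(0)}=0$ so that $f_B(0)=0$, and $f_A(0)=0$) into the two SGD updates. For the $\boldsymbol{A}$-update, factoring out the common left factor $\boldsymbol{A}_i^{(0)}$ and matching on both sides yields the recursion $f_A(s+1)=f_A(s)-\eta f_B(s)^\top\nabla_{\boldsymbol{W}}\mathcal{L}_i^s$. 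For the $\boldsymbol{B}$-update, I would expand $(\boldsymbol{A}_i^s)^\top=(\boldsymbol{I}_n+\eta f_A(s)^\top)(\boldsymbol{A}_i^{(0)})^\top$ and strip the common right factor $(\boldsymbol{A}_i^{(0)})^\top$ by right-multiplying by $\boldsymbol{A}_i^{(0)}$ and using $\boldsymbol{A}_i^{(0)}(\boldsymbol{A}_i^{(0)})^\top=\boldsymbol{I}_r$, giving $f_B(s+1)=f_B(s)-\nabla_{\boldsymbol{W}}\mathcal{L}_i^s(\boldsymbol{I}_n+\eta f_A(s)^\top)$. Telescoping from $f_B(0)=0$ produces the closed form $f_B(s)=-\sum_{j=0}^{s-1}\nabla_{\boldsymbol{W}}\mathcal{L}_i^j(\boldsymbol{I}_n+\eta f_A(j)^\top)$ stated in the theorem.

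Next I would prove the bound on $\|f_A(s)\|_2$ by induction on $s$. The base case $\|f_A(0)\|_2=0$ is immediate. For the inductive step, substituting the closed form for $f_B(j)$ into the unrolled $f_A(s)=-\eta\sum_{j=0}^{s-1}f_B(j)^\top\nabla_{\boldsymbol{W}}\mathcal{L}_i^j$ produces a nested double sum of the form $\eta\sum_{j<k}(\boldsymbol{I}+\eta f_A(k))(\nabla_{\boldsymbol{W}}\mathcal{L}_i^k)^\top\nabla_{\boldsymbol{W}}\mathcal{L}_i^j$. Applying the triangle inequality together with the per-step gradient bound extracted from the cumulative-norm hypothesis and the inductive control on $\|f_A(k)\|_2$, the zeroth-order contribution is $\eta L^2$ and each higher-order coupling contributes an additional factor $\eta^2 L^2$, producing exactly the geometric partial sum $\eta L^2\sum_{k=0}^{s-1}(\eta^2 L^2)^k=\frac{\eta L^2(1-(\eta^2 L^2)^s)}{1-\eta^2 L^2}$.

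Finally, for the small-$\eta$ regime the bound just derived gives $\eta\|f_A(j)\|_2=O(\eta^2 L^2)$, so the cross term $\eta f_A(j)^\top$ inside the closed form of $f_B$ is subdominant relative to $\boldsymbol{I}_n$, yielding $f_B(s)\approx-\sum_{j=0}^{s-1}\nabla_{\boldsymbol{W}}\mathcal{L}_i^j$. The identities $\boldsymbol{B}_i^S=\eta f_B(S)(\boldsymbol{A}_i^{(0)})^\top$ and $\Delta\boldsymbol{B}_i=-\eta\bigl(\sum_{s}\nabla_{\boldsymbol{W}}\mathcal{L}_i^s\bigr)(\boldsymbol{A}_i^{(0)})^\top$ then follow by direct substitution into the ansatz. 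The main obstacle I anticipate is the bookkeeping in the induction for $f_A$: the $f_A$-$f_B$ coupling creates a nested sum whose constants must align precisely with the geometric series, and justifying the stripping of $(\boldsymbol{A}_i^{(0)})^\top$ in the $f_B$ recursion requires care because $(\boldsymbol{A}_i^{(0)})^\top\boldsymbol{A}_i^{(0)}\neq\boldsymbol{I}_n$ when $r<n$; the resolution is that the ansatz only constrains $\boldsymbol{B}_i^s$ on the row space of $\boldsymbol{A}_i^{(0)}$, where orthogonality does apply, and the chosen $f_B$ is the canonical representative consistent with the initialization $f_B(0)=0$.
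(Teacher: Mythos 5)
Your proposal reproduces the analysis for the \emph{first} task (equivalently, the setting of \cite{hao2024flora}), but it silently assumes $\boldsymbol{B}_i^{(0)}=\boldsymbol{0}$, and that assumption is exactly what the theorem does \emph{not} make. The ansatz in the statement is $\boldsymbol{B}_i^s=\boldsymbol{B}_i^{(0)}+\eta f_B(s)(\boldsymbol{A}_i^{(0)})^{\top}$ with $\boldsymbol{B}_i^{(0)}$ kept explicitly, because in SLAO each new task initializes $\boldsymbol{B}_{\text{ft},i}^{(0)}=\boldsymbol{B}_{\text{ft},i-1}\neq\boldsymbol{0}$ for $i\geq 2$; the paper even flags ``$\boldsymbol{B}_i^{(0)}\neq\boldsymbol{0}$, complicating the analysis'' as the key difference from prior work. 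With a nonzero $\boldsymbol{B}_i^{(0)}$, the $\boldsymbol{A}$-update acquires the extra term $-\eta(\boldsymbol{B}_i^{(0)})^{\top}\nabla_{\boldsymbol{W}}\mathcal{L}_i^{s}$, which is not of the form $\eta\boldsymbol{A}_i^{(0)}(\cdot)$ and therefore breaks the matching step you use to read off $f_A(s+1)$. The paper's proof absorbs it by writing $(\boldsymbol{B}_i^{(0)})^{\top}=\boldsymbol{A}_i^{(0)}(\boldsymbol{A}_i^{(0)})^{\top}(\boldsymbol{B}_i^{(0)})^{\top}$, which is valid precisely because $\boldsymbol{A}_i^{(0)}(\boldsymbol{A}_i^{(0)})^{\top}=\boldsymbol{I}_r$ — this is where the orthogonal initialization is actually load-bearing. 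In your version the orthogonality hypothesis is only invoked for the (unnecessary) ``stripping'' of $(\boldsymbol{A}_i^{(0)})^{\top}$ in the $\boldsymbol{B}$-recursion, which should have been a warning sign: one never needs to strip there, since defining $f_B(s+1)=f_B(s)-\nabla_{\boldsymbol{W}}\mathcal{L}_i^{s}(\boldsymbol{I}+\eta f_A^{\top}(s))$ makes the ansatz hold at step $s+1$ by construction.

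Concretely, the missing piece changes both recursions and the bound: the correct $f_A$ recursion is
\begin{align}
f_A(s+1)=f_A(s)-\eta f_B^{\top}(s)\nabla_{\boldsymbol{W}}\mathcal{L}_i^{s}-(\boldsymbol{A}_i^{(0)})^{\top}(\boldsymbol{B}_i^{(0)})^{\top}\nabla_{\boldsymbol{W}}\mathcal{L}_i^{s},
\end{align}
and the induction for $\|f_A(s)\|$ picks up an additional contribution of order $\eta^2 L^2 s\,\|(\boldsymbol{A}_i^{(0)})^{\top}(\boldsymbol{B}_i^{(0)})^{\top}\|_{\mathrm{F}}\leq\eta^2 L^2 s\sqrt{r}$ beyond your geometric series, which must then be argued negligible in the $\eta\ll 1/L$ regime before the stated bound and the approximation $f_B(s)\approx-\sum_{j=0}^{s-1}\nabla_{\boldsymbol{W}}\mathcal{L}_i^{j}$ follow. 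Your geometric-series bookkeeping for the $f_A$--$f_B$ coupling is otherwise sound, so the fix is to redo the $\boldsymbol{A}$-step with $\boldsymbol{B}_i^{(0)}\neq\boldsymbol{0}$ rather than to change the overall strategy.
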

The proof is deferred to Appendix~\ref{app:sec:theory}.  This analysis, under the orthogonal initialization of $\boldsymbol{A}$, suggests that $\boldsymbol{B}$ may update across different initialization subspaces, effectively increasing the rank of $\boldsymbol{B}$ and thereby aiding generalization. We note that the key difference between ours and~\cite{hao2024flora} lies in the initialization of LoRA: while they use standard initialization with $\boldsymbol{B}_i^{(0)} = \boldsymbol{0}$, we initialize $\boldsymbol{B}$ using the previously fine-tuned LoRA parameters, resulting in $\boldsymbol{B}_i^{(0)} \neq \boldsymbol{0}$, complicating the analysis.
\section{Experiments}
\subsection{Experimental Setup}
\noindent\textbf{Models and datasets.}~We evaluate our approach across three models: Llama-2-7B-chat, Llama-2-13B-chat, and Llama-3-2-3B, and two Qwen models: Qwen2.5-3B and Qwen2.5-7B. All experiments are conducted on NVIDIA A100 GPUs utilizing DeepSpeed repository. We consider three continual learning benchmarks: (1) \textbf{\textit{Standard CL benchmark}}: AG News, Amazon, Reviews, Yelp Reviews, DBpedia, and Yahoo Answers. (2) \textbf{\textit{Large number of tasks}}: five standard CL benchmark tasks, four GLUE tasks (MNLI, QQP, RTE, SST-2), five SuperGLUE tasks (WiC, CB, COPA, MultiRC, BoolQ), and IMDB movie reviews. Following {\sffamily{O-LoRA}}~\citep{wang2023orthogonal}, each task uses 1000 randomly sampled training samples and 500 validation samples per class. (3) \textbf{\textit{SuperNI Benchmark}}~\citep{wang2022super}: A diverse collection of NLP tasks with expert-written instructions, covering dialogue generation, information extraction, question answering, summarization, and sentiment analysis. We follow task selection and ordering in {\sffamily{SAPT}}~\citep{zhao2024sapt}, using 1,000 training instances and 100 for validation/testing per task.\vspace{1mm}\\
\noindent\textbf{Baselines. }We compare our method {\sffamily{SLAO}} with the following baselines: (1) \textit{Continual learning baselines:} {\sffamily{SeqLoRA}}: sequentially fine-tunes a single LoRA on multiple tasks without constraints; {\sffamily{IncLoRA}}: incrementally adds a new LoRA per task while freezing previous LoRAs; {\sffamily{O-LoRA}}~\citep{wang2023orthogonal}; {\sffamily{InfLoRA}}~\citep{liang2024inflora}; {\sffamily{SAPT-LoRA}}~\citep{zhao2024sapt};{\sffamily{MTL}}: a single model is trained jointly on all tasks; {\sffamily{LoRM}}~\citep{salami2025closedform}; {\sffamily{CorDA}} (knowledge-preserved adaptation)~\citep{yang2024corda}; {\sffamily{Magmax}}~\citep{marczak2024magmax}. (2) \textit{LoRA merging baselines:} {\sffamily{LoRA-LEGO}}~\citep{zhao2025merging}; {\sffamily{KnOTS}}~\citep{stoica2025model}. (3) \textit{Continual merging baseline:} {\sffamily{OPCM}}~\citep{tang2025merging}. To fairly evaluate existing merging methods in LoRA-based continual learning, we extend full-model merging methods to LoRA and equally treat components of LoRA, and all merging methods are achieved sequentially.\vspace{1mm}\\
\noindent\textbf{Evaluation metrics. }To evaluate our proposed approach, we employ three key metrics: (1) average accuracy (AA), calculated as the mean accuracy across all tasks after training on the last task: $\frac{1}{T}\sum_{i=1}^{T}a_{i,T}$, where $a_{i,T}$ is accuracy for classification tasks and Rouge-L for other tasks; (2) backward transfer (BWT)~\citep{lin2022beyond}, defined as $\frac{1}{T-1}\sum_{i=1}^{T-1}(a_{i,T}-a_{i,i})$, and experimental results are shown in Appendix~\ref{app:cl_bwt}; (3) maximum order-normalized performance disparity (MOPD) and average order-normalized performance disparity (AOPD)~\citep{Yoon2020Scalable}, which evaluate order-robustness, and experimental results are shown in Appendix~\ref{app:cl_opd}.
\subsection{Overall Results}
\begin{table}[t]
  \caption{Testing performance (\%) on three CL benchmarks using Llama-2-7B-chat across different task orders, where each result is run three random times, where $Oi$ denotes $i$th task order.}
  \label{tab:llama7b_slao_overall}
  \centering
  \resizebox{\textwidth}{!}{
    \begin{tabular}{c|c c c c|c c c c| c c c}
    \toprule
        & \multicolumn{4}{c|}{\textbf{Standard CL Benchmark}} 
         & \multicolumn{4}{c|}{\textbf{Large Number of Tasks}} & \multicolumn{3}{c}{\textbf{SuperNI Benchmark}} \\
        \midrule
        \textbf{Method} & \textbf{O1} & \textbf{O2} & \textbf{O3} & \textbf{avg} 
        & \textbf{O4} & \textbf{O5} & \textbf{O6} & \textbf{avg} & \textbf{O1} & \textbf{O2} & \textbf{avg}\\
        \midrule
        SeqLoRA&73.3&76.2&78.4&76.0&69.1&66.0&71.1&68.7&18.4&26.8&22.6\\
        IncLoRA&75.3&77.3&78.3&77.0&72.2&71.6&73.8&72.5&22.0&25.6&23.8\\
        O-LoRA & 76.1 & 76.3 & 79.2 &77.2& 74.0 & 72.0& 74.6&73.5& 23.3&28.4&25.9\\
        InfLoRA & 78.4 & 80.4 & 79.9 & 79.6& 69.4& 67.4&72.5&69.8&16.5 & 22.1 & 19.3\\
        SPAT-LoRA& 82.9&81.8&78.7&81.1& 84.7&78.9&82.2&81.9&53.2&48.5&50.9\\
        \midrule
        LoRM-BA&76.0&76.8&78.3& 77.0&71.4&69.0&70.3&70.2&25.6&18.7&22.2\\
        LoRM-AB&77.5&74.7&75.9& 76.0& 71.0&69.5&70.2&70.2&25.6&23.7&24.7\\
        \midrule
        CorDA & 78.4&79.3&80.0&79.2&73.4&72.7&74.0&73.4&20.9&16.0&18.5\\
       MagMax &80.1 & 80.6& 80.3& 80.3& 72.3&73.5&74.5&73.4&15.3&7.0&11.2\\
        \midrule
        KnOTS&67.9&65.9&70.8&68.2 &61.5&60.1&58.0&59.9&34.6&30.1&32.4\\
        LoRA-LEGO& 68.3& 66.0& 70.9&68.4 &58.8 &58.7 & 53.2&56.9 & 32.8&26.7&29.8\\
        \midrule
        OPCM &61.9&62.0&56.7&60.2&51.9&52.8&46.9&50.5& 11.6&12.3&12.0\\
        \midrule
        \rowcolor{gray!20}
        SLAO (ours)& 80.1& 80.8&80.4&80.4&75.0&74.4&75.1&74.8&38.7&35.7&37.2\\
        \midrule
        Multi-Task &\multicolumn{4}{c|}{80.9} &\multicolumn{4}{c|}{78.1}&\multicolumn{3}{c}{45.2}\\
       \bottomrule
    \end{tabular}
    }
\end{table}

\vspace{-1mm}\noindent\textbf{Continual learning performance results analysis} As shown in Table~\ref{tab:llama7b_slao_overall}, our method consistently outperforms all data-free baselines across three benchmarks using Llama-2-7B-chat. \textbf{\textit{LoRA-Based continual learning}: }{\sffamily{SeqLoRA}} performs worst among LoRA-based methods, as unconstrained continual fine-tuning on a single LoRA causes severe forgetting. {\sffamily{IncLoRA}} improves by freezing prior learned LoRAs to isolate subspaces, though its subspace separation is simple. {\sffamily{InfLoRA}} outperforms {\sffamily{O-LoRA}} in standard CL benchmark due to orthogonal input-based subspaces, but drops on large number of tasks and SuperNI benchmark, due to sensitivity to manually tuned DualGPM threshold. {\sffamily{SAPT-LoRA}} achieves the highest average performance among LoRA-based methods, but relies on generated previous task pseudo samples, unrealistic in many LLM scenarios, and is more order-sensitive than ours. {\sffamily{LoRM-BA}} (begin with freezing $\boldsymbol{B}$) and {\sffamily{LoRM-AB}} yield nearly identical results, suggesting that the freezing order of LoRA components in CL matters little. {\sffamily{CorDA}} performs well on standard CL benchmark and large number of tasks, but drops significantly on SuperNI benchmark, likely due to relying on nullspace selection from pretrained models and lacking time-aware merging. {\sffamily{MagMax}} performs comparably to ours on standard CL benchmark, slightly worse on large number of tasks, but underperforms on SuperNI benchmark, where task similarity is lower, thus only keeping weights which have the largest absolute value would cause forgetting. \textbf{\textit{LoRA merging baselines:} }{\sffamily{KnOTS}} and {\sffamily{LoRA-LEGO}} perform similarly in the standard CL benchmark, but {\sffamily{KnOTS}} outperforms in the large number of tasks and SuperNI. {\sffamily{KnOTS}} may benefit from flexible SVD-merging mechanism so that we apply time-aware scaling on merging, while {\sffamily{LoRA-LEGO}} treats tasks equally, lacks prioritization, and is ineffective in complex CL contexts. \textbf{\textit{Continual merging approaches:} }since {\sffamily{OPCM}} is designed for full model, directly applying it to LoRA by treating its two components identically leads to suboptimal performance. The results under Llama-2-13B-chat, Qwen2.5-3B, and Qwen2.5-7B~\citep{yang2024qwen2} are shown in Appendix~\ref{appendix:exp}. \\

\begin{table}[t]
  \caption{Comparison of initialization strategies on testing performance across three standard CL benchmarks using Llama-2-7B-chat under different task orders, where $Oi$ denotes $i$th task order.}
  \label{tab:initialization}
  \centering
  \resizebox{\textwidth}{!}{
    \begin{tabular}{c|c c c c|c c c c| c c c}
    \toprule
         & \multicolumn{4}{c|}{\textbf{Standard CL Benchmark}} 
         & \multicolumn{4}{c|}{\textbf{Large Number of Tasks}} & \multicolumn{3}{c}{\textbf{SuperNI Benchmark}} \\
         \midrule
      \textbf{Initialization}  & \textbf{O1} & \textbf{O2} & \textbf{O3} & \textbf{avg} 
        & \textbf{O4} & \textbf{O5} & \textbf{O6} & \textbf{avg} & \textbf{O1} & \textbf{O2} & \textbf{avg}\\
        \midrule
        Random (Zero) &66.4&62.4&68.4&65.7& 61.4&60.3&57.2& 59.6&33.3&28.9&31.1\\
        Last-\text{Merge}&80.1&80.8&80.1&80.3&74.7&72.8&75.0&74.2&37.4&30.5&34.0\\
        \rowcolor{gray!20}
        Last-FT (ours)& 80.1& 80.8&80.4&80.4&75.0&74.4&75.1&74.8&38.7&35.7&37.2\\
       \bottomrule
    \end{tabular}
    }
\end{table}
\begin{table}[t]
  \caption{Comparison of merging strategies on testing performance on three standard CL benchmarks using Llama-2-7B-chat across different task orders, where $Oi$ denotes $i$th task order.}
  \label{tab:assymetry}
  \centering
  \resizebox{\textwidth}{!}{
    \begin{tabular}{c|c c c c|c c c c| c c c}
    \toprule
         & \multicolumn{4}{c|}{\textbf{Standard CL Benchmark}} 
         & \multicolumn{4}{c|}{\textbf{Large Number of Tasks}} & \multicolumn{3}{c}{\textbf{SuperNI Benchmark}} \\
         \midrule
      \textbf{Merging}  & \textbf{O1} & \textbf{O2} & \textbf{O3} & \textbf{avg} 
        & \textbf{O4} & \textbf{O5} & \textbf{O6} & \textbf{avg} & \textbf{O1} & \textbf{O2} & \textbf{avg}\\
        \midrule
        FREB-MA & 77.7& 78.0& 76.2&77.3& 70.8&66.1& 72.1&69.7 &13.9&25.4&19.7\\
        \rowcolor{gray!20}
        FREA-MB& 78.7&79.3&78.0&78.7&72.3&73.0&73.5&72.9&23.7&29.2&26.5\\
        \midrule
        FTBA-MA&76.2&79.0&79.9&78.4&71.6&69.5&74.1&71.7&21.1&30.7&25.9\\
        FTBA-MBA&79.7&80.4&80.2&80.1&73.2&73.9&74.5&73.9&33.8&32.7&33.3\\
        \rowcolor{gray!20}
        FTBA-MB&79.3&80.8&80.1&80.1&74.1&74.0&74.8&74.3&32.4&35.2&33.8\\
        \midrule
        \rowcolor{gray!20}
        SLAO (ours)& 80.1& 80.8&80.4&80.4&75.0&74.4&75.1&74.8&38.7&35.7&37.2\\
       \bottomrule
    \end{tabular}
    }
\end{table}

\noindent\textbf{Impact of initialization strategies.}~We compare three different initialization strategies for learning new tasks: (1) random (zero) initialization, (2) initialization from last merging point, (3) initialization from last fine-tuning point (ours). As shown in Table~\ref{tab:initialization}, initializing from last fine-tuning point consistently outperforms other two strategies across all three benchmarks. Using last merging point performs slightly worse, while random (zero) initialization performs the worst. The performance gap is due to how initialization affects LoRA's learning trajectory and merging way. Random initialization places $\boldsymbol{A}$ far away from optimal task-specific $\boldsymbol{A}^{*}$, making intransigence worse. Initialization from last merging point fixes time coefficients after merging back to a single LoRA, limiting its flexibility, while initialization from last fine-tuning point allows the merged LoRA to implicitly reweight previous tasks' updates when merging. This adaptive adjustment yields better CL performance.\\

\noindent\textbf{Asymmetry in LoRA merging.}~To investigate the asymmetry in LoRA merging, we compare different continual merging strategies for LoRA: (1) Freeze A Merge B (FREA-MB), (2) Freeze B Merge A (FREB-MA), (3) Fine-tune BA Merge A (FTBA-MA), (4) Fine-tune BA Merge BA (FTBA-MBA), (5) Fine-tune BA Merge B (FTBA-MB). As shown in Table~\ref{tab:assymetry}, only FTBA-MB consistently outperforms other strategies, except ours. FTBA-MBA has comparable performance compared to FTBA-MB, but FTBA-MA yields the poorest performance among fine-tuning LoRA methods. When freezing one component of LoRA, FREA-MB is better than FREB-MA, consistent with~\cite{zhu2024asymmetry} that freezing $\boldsymbol{A}$ and fine-tuning $\boldsymbol{B}$ is at least better than the reverse. It highlights asymmetry in LoRA components and the importance of asymmetric merging based on their fundamental roles in adaptation.\\

\noindent\textbf{Effect of model variants and sizes.}~We evaluate our method on three LLM variants: Llama-2-7B-chat, Llama-2-13B-chat, Llama-3-3B. The results in Table~\ref{tab:model_size} indicate that model variants and model size play crucial roles in average performance across different task orders and benchmarks. Llama-3-3B performs the worst, while in same generation of Llama, larger models consistently achieve better accuracy: Llama-2-13B-chat relatively has higher accuracy compared to Llama-2-7B-chat. This trend suggests that increased model capacity enhances both reducing catastrophic forgetting and improving generalization in continual learning. Moreover, we observe that larger models exhibit greater robustness to task order variations compared to smaller models.

\begin{table}[t]
  \caption{Comparison of model variants and sizes on testing performance across three standard CL benchmarks using Llama-2-7B-chat in different task orders, where $Oi$ denotes $i$th task order.}
  \label{tab:model_size}
  \centering
  \resizebox{\textwidth}{!}{
    \begin{tabular}{c|c c c c|c c c c| c c c}
    \toprule
    & \multicolumn{4}{c|}{\textbf{Standard CL Benchmark}} 
         & \multicolumn{4}{c|}{\textbf{Large Number of Tasks}} & \multicolumn{3}{c}{\textbf{SuperNI Benchmark}} \\
         \midrule
      \textbf{Model}  & \textbf{O1} & \textbf{O2} & \textbf{O3} & \textbf{avg} 
        & \textbf{O4} & \textbf{O5} & \textbf{O6} & \textbf{avg} & \textbf{O1} & \textbf{O2} & \textbf{avg}\\
        \midrule
        Llama-3-2-3B&74.3 & 75.8 &75.3&75.1& 73.3& 72.5& 74.9&73.6 &32.7&34.6&33.7\\
        \midrule
        Llama-2-7B-chat& 80.1& 80.8&80.4&80.4&75.0&74.4&75.1&74.8&38.7&35.7&37.2\\
        Llama-2-13B-chat&80.8&81.1&81.1&81.0&76.5&75.9&76.0&76.1&42.3&42.2&42.3\\
       \bottomrule
    \end{tabular}
    }
\end{table}
\begin{figure}[h]
    \vskip -5pt
  \centering
  \begin{subfigure}[t]{0.28\textwidth}
    \centering
    \includegraphics[width=\linewidth]{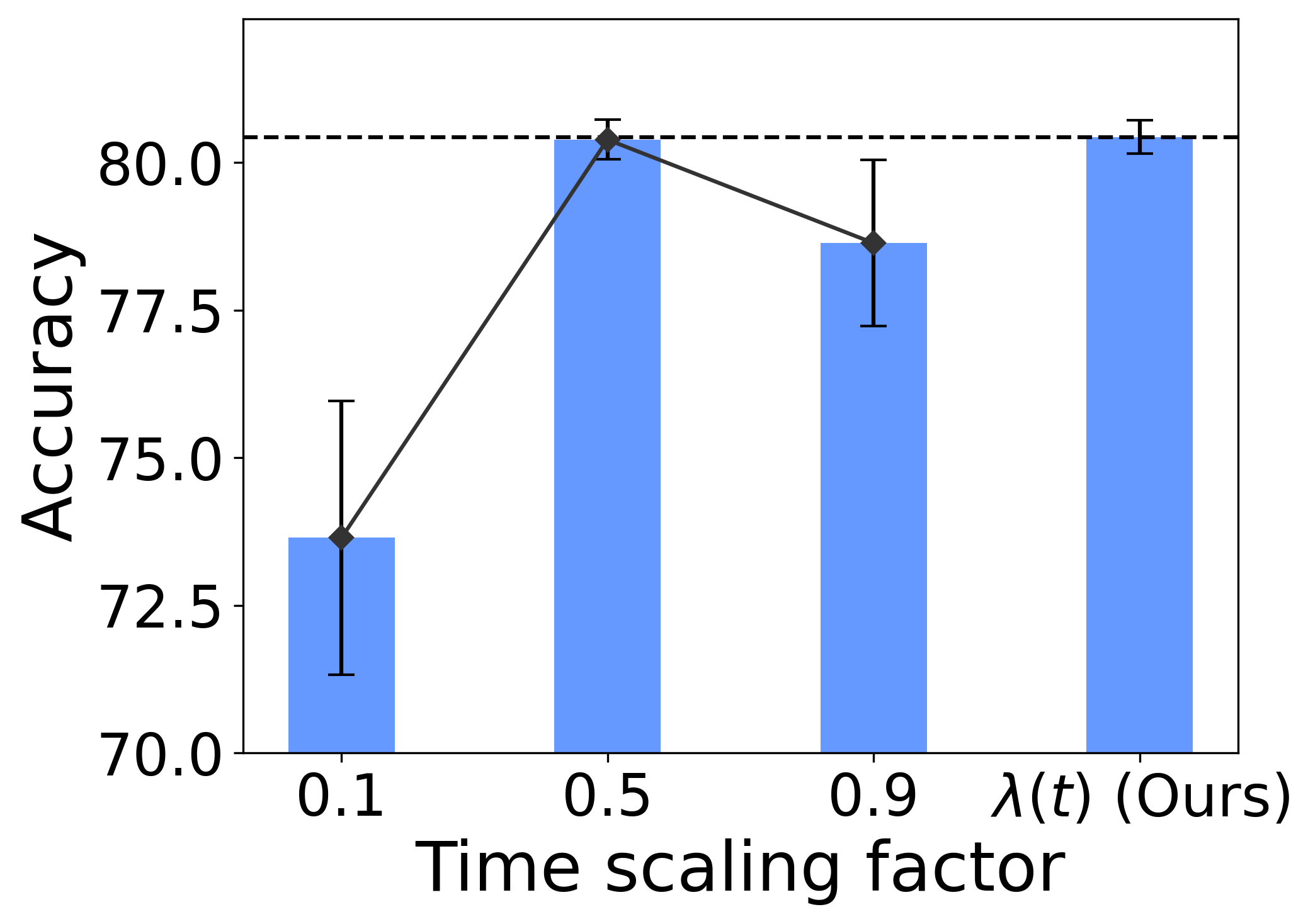}
    \caption{Standard CL Benchmark-7B}
    \label{fig:stan_cl_7B}
  \end{subfigure}
  \hfill
  \begin{subfigure}[t]{0.28\textwidth}
    \centering
    \includegraphics[width=\linewidth]{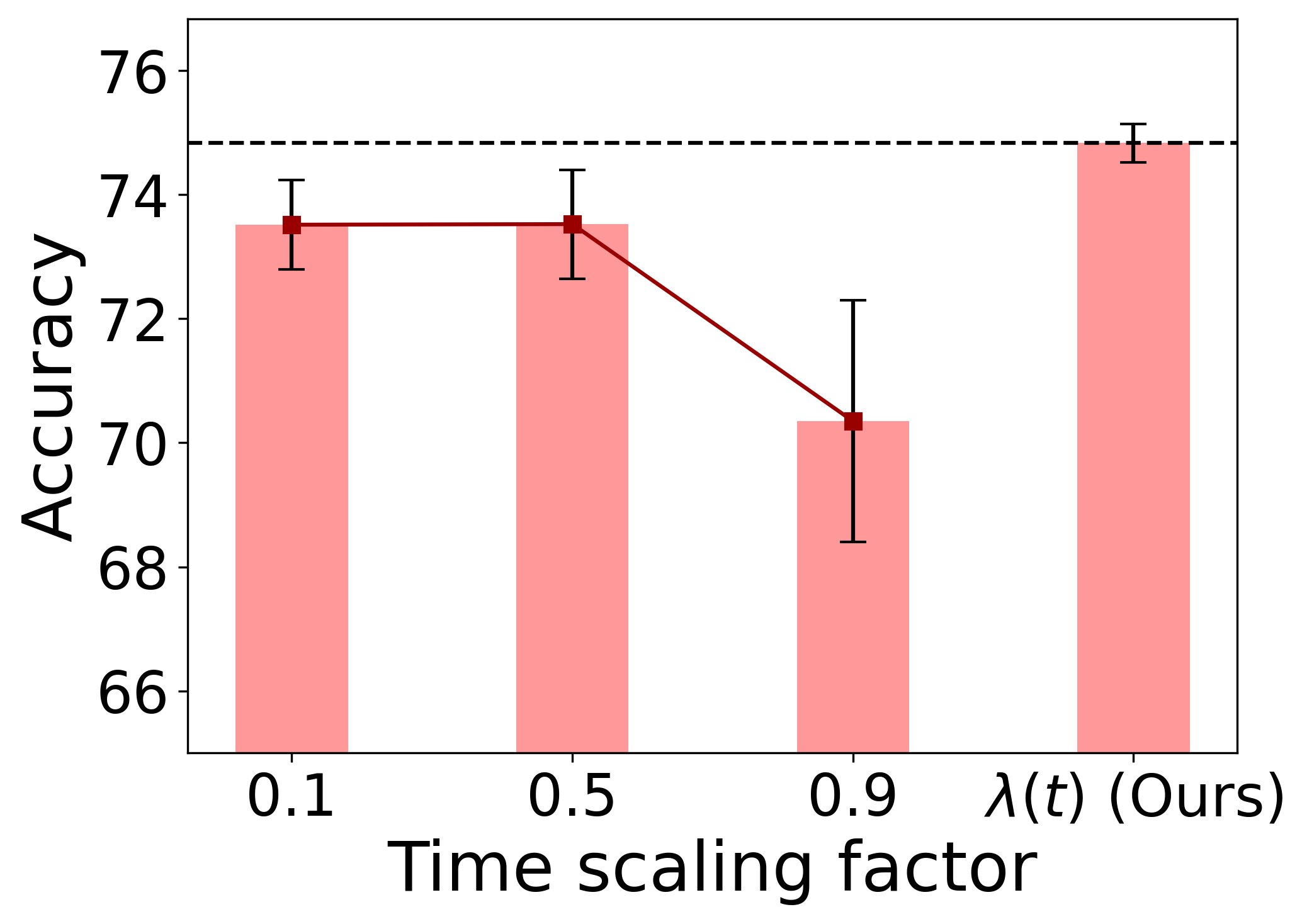}
    \caption{Large Number of tasks-7B}
    \label{fig:large_num_7b}
  \end{subfigure}
  \hfill
  \begin{subfigure}[t]{0.28\textwidth}
    \centering
    \includegraphics[width=\linewidth]{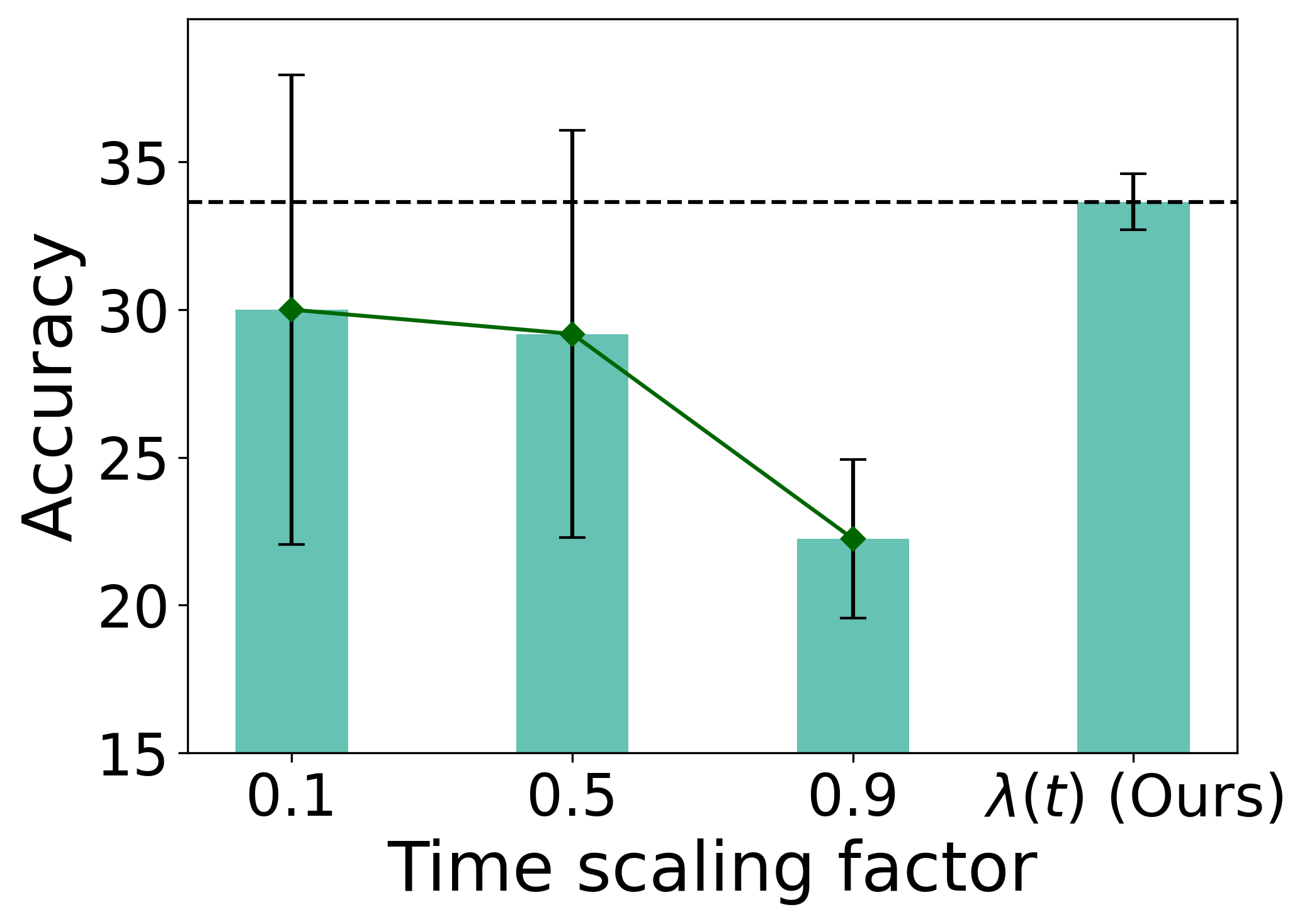}
    \caption{SuperNI Benchmark-3B}
    \label{fig:superni_3b}
    \vskip -10pt 
  \end{subfigure}
  \caption{Comparison of model performance across time coefficients.}
  \label{fig:time_coff_ana}
  \vskip -5pt
\end{figure}
\noindent\textbf{Time-varying coefficient analysis.}~To evaluate impact of adaptive time-varying scaling in continual merging for CL, we compare it against fixed factors $\{0.1,0.5,0.9\}$ on three benchmarks via Llama-2-7B-chat and Llama-3-2-3B. As shown in Figure~\ref{fig:time_coff_ana}, adaptive strategy consistently achieves highest average accuracy with lower variance across task orders and models. For simpler standard CL benchmark, larger fixed value $0.9$ outperforms smaller one $0.1$, while for more complex or long benchmarks, smaller values perform better; $0.5$ is relatively stable but consistently suboptimal.
\section{Conclusion}
In this work, we proposed a novel parameter-efficient continual learning based on continual merging of LoRA, enabling no additional training or access to any data representations. Our approach leverages the orthogonal basis from previous fine-tuned LoRA to initialize for new task learning and constructs a single shared merged LoRA via time-aware scaling, thus ensuring constant memory usage regardless of task number. Through comprehensive experiments, we demonstrated the effectiveness and efficiency of our method across multiple benchmarks and model scales.
\section*{Acknowledgement}

This work was partially supported by   NSF EFMA Award \# 2318101 and NSF CAREER Award \#2239374.

\bibliographystyle{unsrtnat}
\bibliography{references}   

@inproceedings{
jang2024lora,
title={Lo{RA} Training in the {NTK} Regime has No Spurious Local Minima},
author={Uijeong Jang and Jason D. Lee and Ernest K. Ryu},
booktitle={Forty-first International Conference on Machine Learning},
year={2024},
}

@article{zhang2023composing,
  title={Composing parameter-efficient modules with arithmetic operation},
  author={Zhang, Jinghan and Liu, Junteng and He, Junxian and others},
  journal={Advances in Neural Information Processing Systems},
  volume={36},
  pages={12589--12610},
  year={2023}
}

@inproceedings{wortsman2022model,
  title={Model soups: averaging weights of multiple fine-tuned models improves accuracy without increasing inference time},
  author={Wortsman, Mitchell and Ilharco, Gabriel and Gadre, Samir Ya and Roelofs, Rebecca and Gontijo-Lopes, Raphael and Morcos, Ari S and Namkoong, Hongseok and Farhadi, Ali and Carmon, Yair and Kornblith, Simon and others},
  booktitle={International conference on machine learning},
  pages={23965--23998},
  year={2022},
  organization={PMLR}
}

@article{hu2022lora,
  title={Lora: Low-rank adaptation of large language models.},
  author={Hu, Edward J and Shen, Yelong and Wallis, Phillip and Allen-Zhu, Zeyuan and Li, Yuanzhi and Wang, Shean and Wang, Lu and Chen, Weizhu and others},
  journal={ICLR},
  volume={1},
  number={2},
  pages={3},
  year={2022}
}

@inproceedings{
wang2023orthogonal,
title={Orthogonal Subspace Learning for Language Model Continual Learning},
author={Xiao Wang and Tianze Chen and Qiming Ge and Han Xia and Rong Bao and Rui Zheng and Qi Zhang and Tao Gui and Xuanjing Huang},
booktitle={The 2023 Conference on Empirical Methods in Natural Language Processing},
year={2023},
}

@inproceedings{liang2024inflora,
  title={Inflora: Interference-free low-rank adaptation for continual learning},
  author={Liang, Yan-Shuo and Li, Wu-Jun},
  booktitle={Proceedings of the IEEE/CVF Conference on Computer Vision and Pattern Recognition},
  pages={23638--23647},
  year={2024}
}

@inproceedings{zhao2024sapt,
  title={Sapt: A shared attention framework for parameter-efficient continual learning of large language models},
  author={Zhao, Weixiang and Wang, Shilong and Hu, Yulin and Zhao, Yanyan and Qin, Bing and Zhang, Xuanyu and Yang, Qing and Xu, Dongliang and Che, Wanxiang},
  booktitle={Proceedings of the 62nd Annual Meeting of the Association for Computational Linguistics (Volume 1: Long Papers)},
  pages={11641--11661},
  year={2024}
}

@inproceedings{
yang2024corda,
title={Cor{DA}: Context-Oriented Decomposition Adaptation of Large Language Models for Task-Aware Parameter-Efficient Fine-tuning},
author={Yibo Yang and Xiaojie Li and Zhongzhu Zhou and Shuaiwen Leon Song and Jianlong Wu and Liqiang Nie and Bernard Ghanem},
booktitle={The Thirty-eighth Annual Conference on Neural Information Processing Systems},
year={2024},
}

@article{garipov2018loss,
  title={Loss surfaces, mode connectivity, and fast ensembling of dnns},
  author={Garipov, Timur and Izmailov, Pavel and Podoprikhin, Dmitrii and Vetrov, Dmitry P and Wilson, Andrew G},
  journal={Advances in neural information processing systems},
  volume={31},
  year={2018}
}

@inproceedings{draxler2018essentially,
  title={Essentially no barriers in neural network energy landscape},
  author={Draxler, Felix and Veschgini, Kambis and Salmhofer, Manfred and Hamprecht, Fred},
  booktitle={International conference on machine learning},
  pages={1309--1318},
  year={2018},
  organization={PMLR}
}

@inproceedings{
stoica2024zipit,
title={ZipIt! Merging Models from Different Tasks without Training},
author={George Stoica and Daniel Bolya and Jakob Brandt Bjorner and Pratik Ramesh and Taylor Hearn and Judy Hoffman},
booktitle={The Twelfth International Conference on Learning Representations},
year={2024},
}

@inproceedings{
ilharco2023editing,
title={Editing models with task arithmetic},
author={Gabriel Ilharco and Marco Tulio Ribeiro and Mitchell Wortsman and Ludwig Schmidt and Hannaneh Hajishirzi and Ali Farhadi},
booktitle={The Eleventh International Conference on Learning Representations },
year={2023},
}

@inproceedings{
yadav2023tiesmerging,
title={{TIES}-Merging: Resolving Interference When Merging Models},
author={Prateek Yadav and Derek Tam and Leshem Choshen and Colin Raffel and Mohit Bansal},
booktitle={Thirty-seventh Conference on Neural Information Processing Systems},
year={2023},
}

@article{ortiz2023task,
  title={Task arithmetic in the tangent space: Improved editing of pre-trained models},
  author={Ortiz-Jimenez, Guillermo and Favero, Alessandro and Frossard, Pascal},
  journal={Advances in Neural Information Processing Systems},
  volume={36},
  pages={66727--66754},
  year={2023}
}

@inproceedings{
stoica2025model,
title={Model merging with {SVD} to tie the Knots},
author={George Stoica and Pratik Ramesh and Boglarka Ecsedi and Leshem Choshen and Judy Hoffman},
booktitle={The Thirteenth International Conference on Learning Representations},
year={2025},
}

@inproceedings{
zhao2025merging,
title={Merging Lo{RA}s like Playing {LEGO}: Pushing the Modularity of Lo{RA} to Extremes Through Rank-Wise Clustering},
author={Ziyu Zhao and Tao Shen and Didi Zhu and Zexi Li and Jing Su and Xuwu Wang and Fei Wu},
booktitle={The Thirteenth International Conference on Learning Representations},
year={2025},
}

@inproceedings{
dziadzio2025how,
title={How to Merge Multimodal Models Over Time?},
author={Sebastian Dziadzio and Vishaal Udandarao and Karsten Roth and Ameya Prabhu and Zeynep Akata and Samuel Albanie and Matthias Bethge},
booktitle={ICLR 2025 Workshop on Modularity for Collaborative, Decentralized, and Continual Deep Learning},
year={2025},
}

@article{tang2025merging,
  title={Merging Models on the Fly Without Retraining: A Sequential Approach to Scalable Continual Model Merging},
  author={Tang, Anke and Yang, Enneng and Shen, Li and Luo, Yong and Hu, Han and Du, Bo and Tao, Dacheng},
  journal={arXiv preprint arXiv:2501.09522},
  year={2025}
}

@inproceedings{
Yoon2020Scalable,
title={Scalable and Order-robust Continual Learning with Additive Parameter Decomposition},
author={Jaehong Yoon and Saehoon Kim and Eunho Yang and Sung Ju Hwang},
booktitle={International Conference on Learning Representations},
year={2020},
}

@inproceedings{
Hu2020Provable,
title={Provable Benefit of Orthogonal Initialization in Optimizing Deep Linear Networks},
author={Wei Hu and Lechao Xiao and Jeffrey Pennington},
booktitle={International Conference on Learning Representations},
year={2020},
}

@inproceedings{marczak2024magmax,
  title={Magmax: Leveraging model merging for seamless continual learning},
  author={Marczak, Daniel and Twardowski, Bart{\l}omiej and Trzci{\'n}ski, Tomasz and Cygert, Sebastian},
  booktitle={European Conference on Computer Vision},
  pages={379--395},
  year={2024},
  organization={Springer}
}

@inproceedings{
zhu2024asymmetry,
title={Asymmetry in Low-Rank Adapters of Foundation Models},
author={Jiacheng Zhu and Kristjan Greenewald and Kimia Nadjahi and Haitz S{\'a}ez de Oc{\'a}riz Borde and Rickard Br{\"u}el Gabrielsson and Leshem Choshen and Marzyeh Ghassemi and Mikhail Yurochkin and Justin Solomon},
booktitle={Forty-first International Conference on Machine Learning},
year={2024},
}

@inproceedings{
sun2024improving,
title={Improving Lo{RA} in Privacy-preserving Federated Learning},
author={Youbang Sun and Zitao Li and Yaliang Li and Bolin Ding},
booktitle={The Twelfth International Conference on Learning Representations},
year={2024},
}

@article{touvron2023llama,
  title={Llama 2: Open foundation and fine-tuned chat models},
  author={Touvron, Hugo and Martin, Louis and Stone, Kevin and Albert, Peter and Almahairi, Amjad and Babaei, Yasmine and Bashlykov, Nikolay and Batra, Soumya and Bhargava, Prajjwal and Bhosale, Shruti and others},
  journal={arXiv preprint arXiv:2307.09288},
  year={2023}
}

@inproceedings{
wu2025sdlora,
title={{SD}-Lo{RA}: Scalable Decoupled Low-Rank Adaptation for Class Incremental Learning},
author={Yichen Wu and Hongming Piao and Long-Kai Huang and Renzhen Wang and Wanhua Li and Hanspeter Pfister and Deyu Meng and Kede Ma and Ying Wei},
booktitle={The Thirteenth International Conference on Learning Representations},
year={2025},
}

@article{zhang2023lora,
  title={Lora-fa: Memory-efficient low-rank adaptation for large language models fine-tuning},
  author={Zhang, Longteng and Zhang, Lin and Shi, Shaohuai and Chu, Xiaowen and Li, Bo},
  journal={arXiv preprint arXiv:2308.03303},
  year={2023}
}

@inproceedings{
kopiczko2024vera,
title={Ve{RA}: Vector-based Random Matrix Adaptation},
author={Dawid Jan Kopiczko and Tijmen Blankevoort and Yuki M Asano},
booktitle={The Twelfth International Conference on Learning Representations},
year={2024},
}

@article{achiam2023gpt,
  title={Gpt-4 technical report},
  author={Achiam, Josh and Adler, Steven and Agarwal, Sandhini and Ahmad, Lama and Akkaya, Ilge and Aleman, Florencia Leoni and Almeida, Diogo and Altenschmidt, Janko and Altman, Sam and Anadkat, Shyamal and others},
  journal={arXiv preprint arXiv:2303.08774},
  year={2023}
}

@incollection{mccloskey1989catastrophic,
  title={Catastrophic interference in connectionist networks: The sequential learning problem},
  author={McCloskey, Michael and Cohen, Neal J},
  booktitle={Psychology of learning and motivation},
  volume={24},
  pages={109--165},
  year={1989},
  publisher={Elsevier}
}

@inproceedings{
zhao2024galore,
title={GaLore: Memory-Efficient {LLM} Training by Gradient Low-Rank Projection},
author={Jiawei Zhao and Zhenyu Zhang and Beidi Chen and Zhangyang Wang and Anima Anandkumar and Yuandong Tian},
booktitle={Forty-first International Conference on Machine Learning},
year={2024},
}

@inproceedings{
zhu2024model,
title={Model Tailor: Mitigating Catastrophic Forgetting in Multi-modal Large Language Models},
author={Didi Zhu and Zhongyisun Sun and Zexi Li and Tao Shen and Ke Yan and Shouhong Ding and Chao Wu and Kun Kuang},
booktitle={Forty-first International Conference on Machine Learning},
year={2024},
}

@article{raffel2020exploring,
  title={Exploring the limits of transfer learning with a unified text-to-text transformer},
  author={Raffel, Colin and Shazeer, Noam and Roberts, Adam and Lee, Katherine and Narang, Sharan and Matena, Michael and Zhou, Yanqi and Li, Wei and Liu, Peter J},
  journal={Journal of machine learning research},
  volume={21},
  number={140},
  pages={1--67},
  year={2020}
}

@inproceedings{
wei2025modeling,
title={Modeling Multi-Task Model Merging as Adaptive Projective Gradient Descent},
author={Yongxian Wei and Anke Tang and Li Shen and Zixuan Hu and Chun Yuan and Xiaochun Cao},
booktitle={Forty-second International Conference on Machine Learning},
year={2025},
}

@inproceedings{
hao2024flora,
title={Flora: Low-Rank Adapters Are Secretly Gradient Compressors},
author={Yongchang Hao and Yanshuai Cao and Lili Mou},
booktitle={Forty-first International Conference on Machine Learning},
year={2024},
}

@inproceedings{li2023fixed,
  title={Fixed design analysis of regularization-based continual learning},
  author={Li, Haoran and Wu, Jingfeng and Braverman, Vladimir},
  booktitle={Conference on Lifelong Learning Agents},
  pages={513--533},
  year={2023},
  organization={PMLR}
}

@inproceedings{malladi2023kernel,
  title={A kernel-based view of language model fine-tuning},
  author={Malladi, Sadhika and Wettig, Alexander and Yu, Dingli and Chen, Danqi and Arora, Sanjeev},
  booktitle={International Conference on Machine Learning},
  pages={23610--23641},
  year={2023},
  organization={PMLR}
}

@inproceedings{schickschutze2021exploiting,
    title = "Exploiting Cloze-Questions for Few-Shot Text Classification and Natural Language Inference",
    author = {Schick, Timo  and
      Sch{\"u}tze, Hinrich},
    booktitle = "Proceedings of the 16th Conference of the European Chapter of the Association for Computational Linguistics: Main Volume",
    month = apr,
    year = "2021",
    publisher = "Association for Computational Linguistics",
    pages = "255--269",
}

@inproceedings{gaoetal2021making,
    title = "Making Pre-trained Language Models Better Few-shot Learners",
    author = "Gao, Tianyu  and
      Fisch, Adam  and
      Chen, Danqi",
    booktitle = "Proceedings of the 59th Annual Meeting of the Association for Computational Linguistics and the 11th International Joint Conference on Natural Language Processing (Volume 1: Long Papers)",
    month = aug,
    year = "2021",
    publisher = "Association for Computational Linguistics",
    pages = "3816--3830",
}

@inproceedings{maurer2016vector,
  title={A vector-contraction inequality for rademacher complexities},
  author={Maurer, Andreas},
  booktitle={Algorithmic Learning Theory: 27th International Conference, ALT 2016, Bari, Italy, October 19-21, 2016, Proceedings 27},
  pages={3--17},
  year={2016},
  organization={Springer}
}

@inproceedings{
salami2025closedform,
title={Closed-Form Merging of Parameter-Efficient Modules for Federated Continual Learning},
author={Riccardo Salami and Pietro Buzzega and Matteo Mosconi and Jacopo Bonato and Luigi Sabetta and Simone Calderara},
booktitle={The Thirteenth International Conference on Learning Representations},
year={2025},
}

@article{wang2022super,
  title={Super-naturalinstructions: Generalization via declarative instructions on 1600+ nlp tasks},
  author={Wang, Yizhong and Mishra, Swaroop and Alipoormolabashi, Pegah and Kordi, Yeganeh and Mirzaei, Amirreza and Arunkumar, Anjana and Ashok, Arjun and Dhanasekaran, Arut Selvan and Naik, Atharva and Stap, David and others},
  journal={arXiv preprint arXiv:2204.07705},
  year={2022}
}

@inproceedings{
lin2022beyond,
title={Beyond Not-Forgetting: Continual Learning with Backward Knowledge Transfer},
author={Sen Lin and Li Yang and Deliang Fan and Junshan Zhang},
booktitle={Advances in Neural Information Processing Systems},
editor={Alice H. Oh and Alekh Agarwal and Danielle Belgrave and Kyunghyun Cho},
year={2022},
}

@article{rolnick2019experience,
  title={Experience replay for continual learning},
  author={Rolnick, David and Ahuja, Arun and Schwarz, Jonathan and Lillicrap, Timothy and Wayne, Gregory},
  journal={Advances in neural information processing systems},
  volume={32},
  year={2019}
}

@article{lopez2017gradient,
  title={Gradient episodic memory for continual learning},
  author={Lopez-Paz, David and Ranzato, Marc'Aurelio},
  journal={Advances in neural information processing systems},
  volume={30},
  year={2017}
}

@article{kirkpatrick2017overcoming,
  title={Overcoming catastrophic forgetting in neural networks},
  author={Kirkpatrick, James and Pascanu, Razvan and Rabinowitz, Neil and Veness, Joel and Desjardins, Guillaume and Rusu, Andrei A and Milan, Kieran and Quan, John and Ramalho, Tiago and Grabska-Barwinska, Agnieszka and others},
  journal={Proceedings of the national academy of sciences},
  volume={114},
  number={13},
  pages={3521--3526},
  year={2017},
  publisher={National Academy of Sciences}
}

@inproceedings{farajtabar2020orthogonal,
  title={Orthogonal gradient descent for continual learning},
  author={Farajtabar, Mehrdad and Azizan, Navid and Mott, Alex and Li, Ang},
  booktitle={International conference on artificial intelligence and statistics},
  pages={3762--3773},
  year={2020},
  organization={PMLR}
}

@inproceedings{mallya2018packnet,
  title={Packnet: Adding multiple tasks to a single network by iterative pruning},
  author={Mallya, Arun and Lazebnik, Svetlana},
  booktitle={Proceedings of the IEEE conference on Computer Vision and Pattern Recognition},
  pages={7765--7773},
  year={2018}
}

@inproceedings{wang2022learning,
  title={Learning to prompt for continual learning},
  author={Wang, Zifeng and Zhang, Zizhao and Lee, Chen-Yu and Zhang, Han and Sun, Ruoxi and Ren, Xiaoqi and Su, Guolong and Perot, Vincent and Dy, Jennifer and Pfister, Tomas},
  booktitle={Proceedings of the IEEE/CVF conference on computer vision and pattern recognition},
  pages={139--149},
  year={2022}
}

@article{razdaibiedina2023progressive,
  title={Progressive prompts: Continual learning for language models},
  author={Razdaibiedina, Anastasia and Mao, Yuning and Hou, Rui and Khabsa, Madian and Lewis, Mike and Almahairi, Amjad},
  journal={arXiv preprint arXiv:2301.12314},
  year={2023}
}

@article{buzzega2020dark,
  title={Dark experience for general continual learning: a strong, simple baseline},
  author={Buzzega, Pietro and Boschini, Matteo and Porrello, Angelo and Abati, Davide and Calderara, Simone},
  journal={Advances in neural information processing systems},
  volume={33},
  pages={15920--15930},
  year={2020}
}

@inproceedings{
tang2024parameterefficient,
title={Parameter-Efficient Multi-Task Model Fusion with Partial Linearization},
author={Anke Tang and Li Shen and Yong Luo and Yibing Zhan and Han Hu and Bo Du and Yixin Chen and Dacheng Tao},
booktitle={The Twelfth International Conference on Learning Representations},
year={2024},
}

@article{yang2024qwen2,
  title={Qwen2. 5 Technical Report},
  author={Yang, An and Yang, Baosong and Zhang, Beichen and Hui, Binyuan and Zheng, Bo and Yu, Bowen and Li, Chengyuan and Liu, Dayiheng and Huang, Fei and Wei, Haoran and others},
  journal={arXiv preprint arXiv:2412.15115},
  year={2024}
}
\clearpage
\appendix
\section*{Appendix}
The appendix is organized as follows:
\begin{itemize}
    \item Appendix~\ref{app:sec:theory} provides theoretical analysis.
    \item Appendix~\ref{appendix:exp} details the experimental setup and provides additional experimental results, including:
    \begin{itemize}
        \item Overall results on Llama-2-13B-chat~\ref{app:overall_13B}
        \item Results on Qwen2.5 models~\ref{app:result_qwen}
        \item Impact of initialization strategies for existing LoRA merging methods~\ref{app:init_exist_merge}
        \item Continual learning performance on backward transfer~\ref{app:cl_bwt}
        \item Continual learning performance on Order-normalized Performance Disparity (MOPD and AOPD)~\ref{app:cl_opd}
        \item Choice of orthogonal decomposition strategy~\ref{app:orthogo_choice}
        \item Asymmetry of LoRA~\ref{app:asymmertry_lora}
        \item Impact of learning rate~\ref{app:impact_lr}
        \item Impact of the rank of LoRA~\ref{app:rank_impact}
        \item Comparison of training cost~\ref{app:train_cost}
        \item Comparison of orthogonality among LoRA-based CL methods~\ref{app:orthogonality_compare}
        \item Descriptions of task sequence orders~\ref{app:task_order_description}
    \end{itemize}
\end{itemize}

\section{Theoretical Analysis}\label{app:sec:theory}
\subsection{The forgetting-intransigence error decomposition under NTK}\label{app:forget_intrans_theory}
\begin{lemma}~\citep{jang2024lora,maurer2016vector}\label{lem:G_bound_lemma} 
Assume $\mathcal{D}$ is $i.i.d$ $N$ random samples sampled from probability distribution $\mathcal{P}$. Let $\mathcal{A}_{D}=\{\boldsymbol{X}_i\rightarrow f_{\boldsymbol{W}_0}(\boldsymbol{X}_i)+\langle\nabla_{\boldsymbol{W}}f_{\boldsymbol{W}_0}(\boldsymbol{X}_i),\boldsymbol{\delta}\rangle\in\mathbb{R}^{K}:\|\boldsymbol{\delta}\|_{*}\leq D,\boldsymbol{\delta}\in\mathbb{R}^{m\times n}\}$ is class of affine predictors with bounded nuclear norm $D$. For $1\leq j\leq K$, suppose $\|\nabla_{\boldsymbol{W}}f_{\boldsymbol{W}_0}^{(j)}(\boldsymbol{X})\|_{\mathrm{F}}\leq R$ almost surely with respect to the random data $\boldsymbol{X}_i\sim\mathcal{P}$. For $1\leq i\leq N$, suppose $\ell_i=\ell(\cdot,\boldsymbol{Y}_i)$ is G-Lipschitz continuous on $\mathcal{A}$ on the first argument (with respect to the Euclidean norm) for almost surely with respect to the random data $\boldsymbol{X}_{i}\subseteq \mathcal{D}\sim\mathcal{P}$. That is
\begin{align}
    |\ell_i(a(\boldsymbol{X}_1))-\ell_i(a'(\boldsymbol{X}_2))|\leq G\|a(\boldsymbol{X}_1)-a'(\boldsymbol{X}_2)\|_2\quad\textit{for any }a,a'\in\mathcal{A}, \boldsymbol{X}_1,\boldsymbol{X}_2\subseteq\mathcal{D}\sim\mathcal{P}
\end{align}
\end{lemma}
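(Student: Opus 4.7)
The displayed inequality at the end of the lemma is literally the pointwise definition of $\ell_i$ being $G$-Lipschitz on $\mathcal{A}$, so as a standalone claim it is nothing more than a restatement of the hypothesis. Given the Maurer 2016 citation attached to the lemma, my interpretation is that the lemma's actual content is a Rademacher-complexity / uniform-convergence bound over $\mathcal{A}_D$, with the displayed inequality serving as the vector Lipschitz input needed to invoke Maurer's contraction inequality. My plan is therefore to produce an upper bound of order $GRD\sqrt{K/N}$ (up to logarithmic factors) on $\sup_{a\in\mathcal{A}_D}|\hat{R}_N(a)-R(a)|$ that holds with high probability over the i.i.d.\ sample $\mathcal{D}$, from which the term-by-term inequality used in the main text drops out by instantiating $a,a'$ at two specific $\boldsymbol{\delta}$'s.

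The first step is the standard symmetrization argument, reducing the expected supremum to twice the empirical Rademacher complexity $\hat{\mathfrak{R}}_N(\ell\circ\mathcal{A}_D)$. The second and central step invokes Maurer's vector contraction inequality: since the hypothesis displayed in the lemma reads $|\ell_i(u)-\ell_i(u')|\leq G\|u-u'\|_2$ for $u,u'\in\mathbb{R}^{K}$, his theorem yields
\begin{align*}
\hat{\mathfrak{R}}_N(\ell\circ\mathcal{A}_D)\ \leq\ \sqrt{2}\,G\cdot\frac{1}{N}\,\mathbb{E}_{\epsilon}\sup_{\|\boldsymbol{\delta}\|_{*}\leq D}\Bigl\langle\boldsymbol{\delta},\ \sum_{i=1}^{N}\sum_{k=1}^{K}\epsilon_{i,k}\,\nabla_{\boldsymbol{W}}f_{\boldsymbol{W}_0}^{(k)}(\boldsymbol{X}_i)\Bigr\rangle,
\end{align*}
with $\{\epsilon_{i,k}\}$ an independent Rademacher family indexed by both sample and output coordinate. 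The third step evaluates the inner supremum by nuclear--spectral duality, $\sup_{\|\boldsymbol{\delta}\|_{*}\leq D}\langle\boldsymbol{\delta},\boldsymbol{M}\rangle=D\|\boldsymbol{M}\|_{\mathrm{op}}$, and then bounds the expected operator norm by the expected Frobenius norm via Jensen, using the hypothesis $\|\nabla_{\boldsymbol{W}}f_{\boldsymbol{W}_0}^{(k)}(\boldsymbol{X}_i)\|_{\mathrm{F}}\leq R$ and independence of the $\epsilon_{i,k}$, to obtain $\mathbb{E}_{\epsilon}\bigl\|\sum_{i,k}\epsilon_{i,k}\nabla_{\boldsymbol{W}}f_{\boldsymbol{W}_0}^{(k)}(\boldsymbol{X}_i)\bigr\|_{\mathrm{op}}\leq R\sqrt{NK}$. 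Collecting the three pieces gives $\hat{\mathfrak{R}}_N(\ell\circ\mathcal{A}_D)\leq\sqrt{2}\,GRD\sqrt{K/N}$, and McDiarmid's bounded-differences inequality (using that changing one $(\boldsymbol{X}_i,\boldsymbol{Y}_i)$ perturbs $\hat{R}_N(a)$ by at most $2GRD/N$) upgrades this expectation bound into the high-probability uniform bound.

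The main obstacle will be the vector-contraction step: Maurer's theorem is normally stated for a single scalar Lipschitz function applied to a vector-valued class, whereas here each $\ell_i$ depends on the sample index $i$ through its label $\boldsymbol{Y}_i$. I would need to verify (or adapt) the coordinate-wise form of his proof, which permits a distinct $G$-Lipschitz $\ell_i$ per sample without losing the $\sqrt{2}$ factor. A secondary, more minor concern is that the Jensen/Frobenius bound on the spectral norm of a Rademacher matrix sum is slack; a matrix-Bernstein or noncommutative-Khintchine inequality would improve constants and the dependence on $\min(m,n)$, but is unnecessary for the $\sqrt{K/N}$ rate the main text actually consumes.
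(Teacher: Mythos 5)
You are right that the displayed inequality is just the definition of $G$-Lipschitzness, and your diagnosis that the lemma's real content must live elsewhere is fair. However, the route you plan is considerably heavier than what the paper actually does, and the two proofs end up establishing different things. The paper's proof is a short deterministic computation: it states McDiarmid's bounded-differences inequality, sets $g$ to be the supremum of the centered excess-risk difference over $\{\|\boldsymbol{\delta}\|_{*}\le D\}$, and then bounds the per-sample perturbation purely pointwise --- Lipschitzness gives $|\ell(a(\boldsymbol{X}_i))-\ell(a'(\boldsymbol{X}_i))|\le G\|\langle\boldsymbol{\delta}_0-\boldsymbol{\delta},\boldsymbol{G}(\boldsymbol{X}_i)\rangle\|_2$, a coordinate-wise Cauchy--Schwarz gives $\le G\sqrt{\sum_{j=1}^{K}\|\boldsymbol{\delta}_0-\boldsymbol{\delta}\|_{\mathrm{F}}^2\|\boldsymbol{G}^{(j)}(\boldsymbol{X}_i)\|_{\mathrm{F}}^2}$, and $\|\cdot\|_{\mathrm{F}}\le\|\cdot\|_{*}$ plus the gradient bound yields the constant $2GRD\sqrt{K}$. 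No symmetrization, no contraction, and no Rademacher complexity appear; McDiarmid is quoted but never actually invoked to conclude a uniform bound, because the only thing the main text consumes downstream is the deterministic pointwise inequality $\mathcal{L}_i(\boldsymbol{W}_0+\boldsymbol{B}_t\boldsymbol{A}_t)-\mathcal{L}_i(\boldsymbol{W}_0+\boldsymbol{B}_i\boldsymbol{A}_i)\le GR\sqrt{\sum_{j=1}^{K}\|\boldsymbol{B}_t\boldsymbol{A}_t-\boldsymbol{B}_i\boldsymbol{A}_i\|_{\mathrm{F}}^2}$, which follows from Lipschitzness and Cauchy--Schwarz alone with no probabilistic content. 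Your symmetrization--contraction--duality plan would, if carried through, prove a genuinely stronger statement (a high-probability uniform bound of order $GRD\sqrt{K/N}$ over the whole class), and the obstacle you flag --- adapting Maurer's vector contraction to per-sample Lipschitz losses $\ell_i$ depending on $\boldsymbol{Y}_i$ --- is real but resolvable, since Maurer's proof is already stated sample-by-sample. What your approach buys is an actual generalization guarantee; what the paper's approach buys is brevity and exactly the term-by-term bound its forgetting--intransigence decomposition needs. For the purposes of this paper your machinery is unnecessary, but it is not wrong.
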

\begin{proof}
    First, let $g:\mathcal{X}\rightarrow\mathbb{R}$ be a function satisfying the following property with $c>0$:
    \begin{align}
        |g(\boldsymbol{X}_1,\dots,\boldsymbol{X}_{i-1},\boldsymbol{X}_i,\boldsymbol{X}_{i+1},\dots,\boldsymbol{X}_N)-g(\boldsymbol{X}_1,\dots,\boldsymbol{X}_{i-1},\boldsymbol{X}_i',\boldsymbol{X}_{i+1},\dots,\boldsymbol{X}_N)|\leq c
    \end{align}
    for all $\boldsymbol{X}_1,\dots,\boldsymbol{X}_N,\boldsymbol{X}_i'\in\mathcal{X}$. Then, for all $\epsilon>0$,
    \begin{align}
        \mathbb{P}(|g(\boldsymbol{X}_1,\dots,\boldsymbol{X}_N)-\mathbb{E}[g(\boldsymbol{X}_1,\dots,\boldsymbol{X}_N)]|\geq\epsilon)\leq\exp\left(-\frac{2\epsilon^2}{Nc^2}\right)
    \end{align}
    Take $g$ to be $g=\sup_{\|\boldsymbol{\delta}\|\leq D}(\mathcal{\hat{L}}(\boldsymbol{\delta}_0)-\mathcal{\hat{L}}(\boldsymbol{\delta})-\mathcal{L}(\boldsymbol{\delta}_0)+\mathcal{L}(\boldsymbol{\delta}))$, which is a function of $\boldsymbol{X}_1,\dots,\boldsymbol{X}_N$. Since $\|\boldsymbol{\delta}\|_{*}\leq D$ implies $\|\boldsymbol{\delta}\|_{\mathrm{F}}\leq D$ and by the Lipschitz continuity of $\ell(\cdot,\boldsymbol{Y}_i)$, we have the following for any $(\boldsymbol{X}_i,\boldsymbol{Y}_i)\in\mathcal{D}$:
    \begin{align}
        &|\ell(f_{\boldsymbol{W}_0}(\boldsymbol{X}_i)+\langle\boldsymbol{G}(\boldsymbol{X}_i),\boldsymbol{\delta}_0\rangle,\boldsymbol{Y}_i)-\ell(f_{\boldsymbol{W}_0}\boldsymbol{X}_i)+\langle\boldsymbol{G}(\boldsymbol{X}_i),\boldsymbol{\delta}\rangle,\boldsymbol{Y}_i)|\leq G\|\langle\boldsymbol{\delta}_0-\boldsymbol{\delta},\boldsymbol{G}(\boldsymbol{X}_i)\rangle\|_{2}\nonumber\\
        &\leq G\sqrt{\sum_{j=1}^{K}\|\boldsymbol{\delta}_0-\boldsymbol{\delta}\|_{\mathrm{F}}^2\|\boldsymbol{G}^{(j)}(\boldsymbol{X}_i)\|_{\mathrm{F}}^2}\nonumber\\
        &\leq G\sqrt{\sum_{j=1}^{K}\|\boldsymbol{\delta}_0-\boldsymbol{\delta}\|_{*}^2\|\boldsymbol{G}^{(j)}(\boldsymbol{X}_i)\|_{\mathrm{F}}^2}\nonumber\\
        &\leq G\sqrt{\sum_{j=1}^{K}4D^2\cdot R^2}\nonumber\\
        &=2GRD\sqrt{K}.
    \end{align}
\end{proof}
Thus, from this lemma, we apply it to LoRA-based continual learning and have
\begin{align}
    \mathcal{L}_{i}(\boldsymbol{W}_0+\boldsymbol{B}_t\boldsymbol{A}_t)-\mathcal{L}_{i}(\boldsymbol{W}_0+\boldsymbol{B}_i\boldsymbol{A}_i) 
    & \leq G\|\langle\boldsymbol{B}_t\boldsymbol{A}_t-\boldsymbol{B}_i\boldsymbol{A}_i,\nabla_{\boldsymbol{W}}f_{\boldsymbol{W}_0}(\boldsymbol{X}_i)\rangle\|_2\nonumber\\
    &\leq 
    G\sqrt{\sum_{j=1}^{K}\|\boldsymbol{B}_t\boldsymbol{A}_t-\boldsymbol{B}_i\boldsymbol{A}_i\|_{\mathrm{F}}^2\|\nabla_{\boldsymbol{W}}f_{\boldsymbol{W}_0}^{(j)}(\boldsymbol{X}_i)\|_{\mathrm{F}}^2}\nonumber \\
    & \leq G\sqrt{\sum_{j=1}^{K}\|\boldsymbol{B}_t\boldsymbol{A}_t-\boldsymbol{B}_i\boldsymbol{A}_i\|_{\mathrm{F}}^2 R^2}
\end{align}
where $K$ is the output dimension. From this, we can see that to minimize forgetting error, we should make $\|\boldsymbol{B}_t\boldsymbol{A}_t-\boldsymbol{B}_i\boldsymbol{A}_i\|_{\mathrm{F}}$ as small as possible. Similarly, to minimize intransigence error, $\|\boldsymbol{B}_i\boldsymbol{A}_i-\boldsymbol{B}_i^{*}\boldsymbol{A}_i^{*}\|_{\mathrm{F}}$ should be also small. Besides, it's evident that $\mathcal{L}_{t}(\boldsymbol{W}_0+\boldsymbol{B}_t\boldsymbol{A}_t)-\mathcal{L}_{t}(\boldsymbol{W}_0+\boldsymbol{B}_t\boldsymbol{A}_t)=0$. Thus, if we makes $\|\boldsymbol{B}_t\boldsymbol{A}_t-\boldsymbol{B}_i\boldsymbol{A}_i\|_{\mathrm{F}}\leq D$ and $\|\boldsymbol{B}_i\boldsymbol{A}_i-\boldsymbol{B}_i^{*}\boldsymbol{A}_i^{*}\|_{\mathrm{F}}\leq D$, then we have
\begin{align}
    &\mathcal{F}_{t}+\mathcal{I}_{t}\nonumber\\
    &= \sum\nolimits_{i=1}^{t-1}\big(\mathcal{L}_{i}(\boldsymbol{W}_0+\boldsymbol{B}_t\boldsymbol{A}_t)-\mathcal{L}_{i}(\boldsymbol{W}_0+\boldsymbol{B}_i\boldsymbol{A}_i)\big)+\sum\nolimits_{i=1}^{t}\big(\mathcal{L}_{i}(\boldsymbol{W}_0+\boldsymbol{B}_i\boldsymbol{A}_i)-\mathcal{L}_{i}(\boldsymbol{W}_0+\boldsymbol{B}_i^{*}\boldsymbol{A}_i^{*})\big)\nonumber\\
    &=\sum\nolimits_{i=1}^{t}\big(\mathcal{L}_{i}(\boldsymbol{W}_0+\boldsymbol{B}_t\boldsymbol{A}_t)-\mathcal{L}_{i}(\boldsymbol{W}_0+\boldsymbol{B}_i\boldsymbol{A}_i)\big)+\sum\nolimits_{i=1}^{t}\big(\mathcal{L}_{i}(\boldsymbol{W}_0+\boldsymbol{B}_i\boldsymbol{A}_i)-\mathcal{L}_{i}(\boldsymbol{W}_0+\boldsymbol{B}_i^{*}\boldsymbol{A}_i^{*})\big)\nonumber\\
    &=\sum\nolimits_{i=1}^{t}\big(\mathcal{L}_{i}(\boldsymbol{W}_0+\boldsymbol{B}_t\boldsymbol{A}_t)-\mathcal{L}_{i}(\boldsymbol{W}_0+\boldsymbol{B}_i\boldsymbol{A}_i)\big)+\big(\mathcal{L}_{i}(\boldsymbol{W}_0+\boldsymbol{B}_i\boldsymbol{A}_i)-\mathcal{L}_{i}(\boldsymbol{W}_0+\boldsymbol{B}_i^{*}\boldsymbol{A}_i^{*})\big)\nonumber\\
    &\leq\sum\nolimits_{i=1}^{t}G\sqrt{\sum_{j=1}^{K}\|\boldsymbol{B}_t\boldsymbol{A}_t-\boldsymbol{B}_i\boldsymbol{A}_i\|_{\mathrm{F}}^2 R^2}+G\sqrt{\sum_{j=1}^{K}\|\boldsymbol{B}_i\boldsymbol{A}_i-\boldsymbol{B}_i^{*}\boldsymbol{A}_i^{*}\|_{\mathrm{F}}^2 R^2}\nonumber\\
    &=GR\left(\sum\nolimits_{i=1}^{t}\sqrt{\sum_{j=1}^{K}\|\boldsymbol{B}_t\boldsymbol{A}_t-\boldsymbol{B}_i\boldsymbol{A}_i\|_{\mathrm{F}}^2}+\sqrt{\sum_{j=1}^{K}\|\boldsymbol{B}_i\boldsymbol{A}_i-\boldsymbol{B}_i^{*}\boldsymbol{A}_i^{*}\|_{\mathrm{F}}^2}\right)\nonumber\\
    &\leq 4DGR\sum\nolimits_{i=1}^{t}\sqrt{K}
\end{align}
To make both $\|\boldsymbol{B}_t\boldsymbol{A}_t-\boldsymbol{B}_i\boldsymbol{A}_i\|_{\mathrm{F}}$ and $\|\boldsymbol{B}_i\boldsymbol{A}_i-\boldsymbol{B}_i^{*}\boldsymbol{A}_i^{*}\|_{\mathrm{F}}$ minimized, the forgetting-intransigence decomposition can be written as:
\begin{align}
    &\|\boldsymbol{B}_t\boldsymbol{A}_t-\boldsymbol{B}_i\boldsymbol{A}_i\|_{\mathrm{F}}+\|\boldsymbol{B}_i\boldsymbol{A}_i-\boldsymbol{B}_i^{*}\boldsymbol{A}_i^{*}\|_{\mathrm{F}}\nonumber\\
    \leq&\|\boldsymbol{B}_t(\boldsymbol{A}_t-\boldsymbol{A}_i)\|_{\mathrm{F}}+\|(\boldsymbol{B}_t-\boldsymbol{B}_i)\boldsymbol{A}_i\|_{\mathrm{F}} +\|\boldsymbol{B}_i(\boldsymbol{A}_i-\boldsymbol{A}_i^{*})\|_{\mathrm{F}}+\|(\boldsymbol{B}_i-\boldsymbol{B}_i^{*})\boldsymbol{A}_i^{*}\|_{\mathrm{F}}
\end{align}
\textbf{Algorithmic motivation.}~From the above bound, we observe that generalization error in CL with LoRA depends asymmetrically on the choice of frozen and trainable components. Interestingly, this insight contrasts with standard fine-tuning practices. For example, as concluded  in~\citep{zhu2024asymmetry}, freezing $\boldsymbol{A}$ and fine-tuning $\boldsymbol{B}$  is at least as effective, if not better, than the reverse. However, if we apply freezing $\boldsymbol{A}$ in CL, it implies $\|\boldsymbol{A}_t-\boldsymbol{A}_i\|_{\mathrm{F}}=0$, which may unintentionally increase $\|\boldsymbol{A}_i-\boldsymbol{A}_i^{*}\|_{\mathrm{F}}$ due to limited task-specific expressiveness. In~\cite{hu2022lora}, $\boldsymbol{A}$ is initialized to a random Gaussian matrix satisfying $\mathbb{E}[\boldsymbol{A}^{(0)}(\boldsymbol{A}^{(0)})^{\top}]=\boldsymbol{I}_r$. Instead, if we propose to fine-tune $\boldsymbol{A}_i$ while extracting orthogonal basis $\boldsymbol{Q}_{i-1}$ from $\boldsymbol{A}_{i-1}$, where $\boldsymbol{Q}_{i-1}\boldsymbol{Q}_{i-1}^{\top}=\boldsymbol{I}_r$, to initialize $\boldsymbol{A}_i^{(0)}$ via $\boldsymbol{Q}_{i-1}$, then we will have $\boldsymbol{A}_i^{(0)}(\boldsymbol{A}_i^{(0)})^{\top}=\boldsymbol{I}_r$ where $i\in[1,\dots,t]$. This orthonormal structure not only keeps geometric consistency across tasks but also allows $\boldsymbol{A}_j$ $(t\geq j>i)$, to remain well-aligned with previous $\boldsymbol{A}_i$, i.e. $\mathbb{E}[\boldsymbol{A}_j\boldsymbol{A}_i^{\top}]\approx\boldsymbol{I}_r$, thereby minimizing both $\|\boldsymbol{A}_t-\boldsymbol{A}_i\|_{\mathrm{F}}$ and $\|\boldsymbol{A}_i-\boldsymbol{A}_i^{*}\|_{\mathrm{F}}$. This motivates our design of continual merging with orthogonal initialization to reduce forgetting and maintain adaptability.

\subsection{Dynamics of low-rank adapters updated by {\sffamily{SLAO}}}

  We analyze the dynamics of $\boldsymbol{A}_{i}^{s}$ and $\boldsymbol{B}_{i}^{s}$ in continual learning setting. This analysis, under the orthogonal initialization of $\boldsymbol{A}$, suggests that $\boldsymbol{B}$ may update across different initialization subspaces, effectively increasing the rank of $\boldsymbol{B}$ and thereby aiding generalization.

\begin{theorem}Let the  parameters 
$\boldsymbol{A}$ and 
$\boldsymbol{B}$ be updated using SGD at each step 
$s$ for task 
$i$ as follows:
\begin{equation}
    \boldsymbol{A}_{i}^{s+1}=\boldsymbol{A}_{i}^{s}-\eta(\boldsymbol{B}_{i}^{s})^{\top}(\nabla_{\boldsymbol{W}}\mathcal{L}_i^{s}),\ \ \boldsymbol{B}_{i}^{s+1}=\boldsymbol{B}_{i}^{s}-\eta(\nabla_{\boldsymbol{W}}\mathcal{L}_i^{s})(\boldsymbol{A}_{i}^{s})^{\top}\\
\end{equation}
\textit{where $\eta$ is the learning rate. We assume $\boldsymbol{A}_i^s=\boldsymbol{A}_i^{(0)}+\eta\boldsymbol{A}_i^{(0)} f_{A}(s)$ and $\boldsymbol{B}_i^s=\boldsymbol{B}_i^{(0)}+\eta f_{B}(s)(\boldsymbol{A}_i^{(0)})^{\top}$ holds with such functions $f_A$ and $f_B$ for $1,\dots,s$, and $\|\sum_{s=1}^{S}\nabla_{\boldsymbol{W}}\mathcal{L}_i^{(s)}\|_{\mathrm{F}}\leq L$ for every $S$ during training task $i$, which implies that the model stays within a finite Euclidean ball. If we assume $\boldsymbol{A}_{i}^{(0)}(\boldsymbol{A}_{i}^{(0)})^{\top}=\boldsymbol{I}_r$, in this case, the dynamics of $\boldsymbol{A}_i$ satisfies $\|f_{A}(s)\|_2\leq\frac{\eta L^2(1-(\eta^2 L^2)^s)}{1-\eta^2 L^2}$, and the dynamics of $\boldsymbol{B}$ satisfies $f_{B}(s)=-\sum_{j=0}^{s-1}(\nabla_{\boldsymbol{W}}\mathcal{L}_i^{j})(\eta f_{A}^{\top}(j)+\boldsymbol{I})$. When $\eta$ is small, we have $f_{B}(s)\approx-\sum_{j=0}^{s-1}(\nabla_{\boldsymbol{W}}\mathcal{L}_i^{j})$. Thus $B_i^{S}=\eta f_{B}(S)(\boldsymbol{A}_i^{(0)})^{\top}$, and total update for $\boldsymbol{B}_i$ is $\Delta\boldsymbol{B}_i = -\eta\left(\sum_{s=0}^{S}(\nabla_{\boldsymbol{W}}\mathcal{L}_i^{s})\right)(\boldsymbol{A}_i^{(0)})^{\top}$}.
\end{theorem}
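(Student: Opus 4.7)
I would argue by induction on the step index $s$, taking the two ansatz formulas as the inductive hypothesis and extracting closed-form recurrences for $f_A$ and $f_B$ directly from the SGD updates; the base case $f_A(0)=f_B(0)=\boldsymbol{0}$ is immediate. The orthogonality $\boldsymbol{A}_i^{(0)}(\boldsymbol{A}_i^{(0)})^{\top}=\boldsymbol{I}_r$ enforced by SLAO's QR-based initialization is the essential structural tool, since it lets one invert multiplication by $\boldsymbol{A}_i^{(0)}$ on its row-space and thereby isolate $f_A(s+1)$ and $f_B(s+1)$ from the raw update equations.

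The $\boldsymbol{B}$ update is cleanest, so I would tackle it first. Substituting $(\boldsymbol{A}_i^s)^{\top}=(\boldsymbol{I}+\eta f_A(s)^{\top})(\boldsymbol{A}_i^{(0)})^{\top}$ into $\boldsymbol{B}_i^{s+1}=\boldsymbol{B}_i^s-\eta(\nabla_{\boldsymbol{W}}\mathcal{L}_i^s)(\boldsymbol{A}_i^s)^{\top}$ and comparing with the ansatz at step $s+1$ reads off $f_B(s+1)=f_B(s)-(\nabla_{\boldsymbol{W}}\mathcal{L}_i^s)(\boldsymbol{I}+\eta f_A(s)^{\top})$; unrolling from $f_B(0)=\boldsymbol{0}$ gives the stated closed form for $f_B(s)$. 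For the $\boldsymbol{A}$ recursion, I would substitute $(\boldsymbol{B}_i^s)^{\top}=(\boldsymbol{B}_i^{(0)})^{\top}+\eta \boldsymbol{A}_i^{(0)} f_B(s)^{\top}$ into the $\boldsymbol{A}$ update, right-multiply the residual $\boldsymbol{A}_i^{s+1}-\boldsymbol{A}_i^{(0)}$ by $(\boldsymbol{A}_i^{(0)})^{\top}$, and use orthogonality to isolate a recurrence of the form $f_A(s+1)=f_A(s)-\eta f_B(s)^{\top}\nabla_{\boldsymbol{W}}\mathcal{L}_i^s$ together with a $\boldsymbol{B}_i^{(0)}$-dependent correction.

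To obtain the geometric bound, I would combine the $f_B$ closed form with the telescoping hypothesis $\|\sum_s\nabla_{\boldsymbol{W}}\mathcal{L}_i^s\|_{\mathrm{F}}\le L$, which gives $\|f_B(s)\|_2\le L$ at leading order; feeding this into the $f_A$ recurrence yields a one-step contraction-plus-drift inequality $\|f_A(s+1)\|_2\le \eta^2 L^2\|f_A(s)\|_2+\eta L^2$, and a standard geometric-sum unroll produces the advertised $\|f_A(s)\|_2\le \eta L^2(1-(\eta^2 L^2)^s)/(1-\eta^2 L^2)$. The small-$\eta$ corollary $f_B(s)\approx -\sum_{j=0}^{s-1}\nabla_{\boldsymbol{W}}\mathcal{L}_i^j$, and hence the claimed $\Delta\boldsymbol{B}_i=-\eta\bigl(\sum_s\nabla_{\boldsymbol{W}}\mathcal{L}_i^s\bigr)(\boldsymbol{A}_i^{(0)})^{\top}$, then follow by noting that $\eta\|f_A(j)\|_2=O(\eta^2 L^2)$ and dropping the subdominant factor inside $f_B(s)$.

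The step I expect to be the main obstacle is the CL-specific wrinkle that $\boldsymbol{B}_i^{(0)}\neq \boldsymbol{0}$: the term $(\boldsymbol{B}_i^{(0)})^{\top}\nabla_{\boldsymbol{W}}\mathcal{L}_i^s$ appearing in the $\boldsymbol{A}$ update is not automatically of the form $\boldsymbol{A}_i^{(0)} M$, so strictly speaking the ansatz does not preserve the row-space of $\boldsymbol{A}_i^{(0)}$ the way it does in the standard-initialization analysis of~\cite{hao2024flora}. I would control this either by projecting the increment onto the row-space via $(\boldsymbol{A}_i^{(0)})^{\top}$ and folding the orthogonal residual into a higher-order error under the $\eta L\ll 1$ regime, or by exploiting that the warm-start $\boldsymbol{B}_i^{(0)}=\boldsymbol{B}_{\text{ft},i-1}$ inherits the previous-task column structure $f_B(S)(\boldsymbol{A}_{i-1}^{(0)})^{\top}$, so that the problematic cross term can be reabsorbed into the leading $f_B$ accumulation after one additional bookkeeping step.
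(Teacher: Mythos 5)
Your plan follows the paper's proof essentially step for step: induction on the two ansatz formulas with $f_A(0)=f_B(0)=0$, reading off the exact recursions $f_B(s+1)=f_B(s)-\nabla_{\boldsymbol{W}}\mathcal{L}_i^s(\boldsymbol{I}+\eta f_A^{\top}(s))$ and $f_A(s+1)=f_A(s)-\eta f_B^{\top}(s)\nabla_{\boldsymbol{W}}\mathcal{L}_i^s-(\boldsymbol{A}_i^{(0)})^{\top}(\boldsymbol{B}_i^{(0)})^{\top}\nabla_{\boldsymbol{W}}\mathcal{L}_i^s$, then the geometric unroll under $\|\sum_s\nabla_{\boldsymbol{W}}\mathcal{L}_i^s\|_{\mathrm{F}}\le L$ and the small-$\eta$ limit for $f_B$ and $\Delta\boldsymbol{B}_i$. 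The obstacle you flag at the end largely dissolves: because $\boldsymbol{A}_i^{(0)}(\boldsymbol{A}_i^{(0)})^{\top}=\boldsymbol{I}_r$, any $r\times n$ matrix $Y$ equals $\boldsymbol{A}_i^{(0)}\bigl((\boldsymbol{A}_i^{(0)})^{\top}Y\bigr)$ exactly, so the $(\boldsymbol{B}_i^{(0)})^{\top}\nabla_{\boldsymbol{W}}\mathcal{L}_i^s$ term is absorbed into $f_A$ with no orthogonal residual (this is precisely the paper's move), and its contribution is the extra $\eta^2L^2s\sqrt{r}$ term in the $f_A$ bound that both you and the paper then discard in the $\eta\ll 1/L$ regime.
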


\begin{proof}\label{theorem:proof}
We start by noting the fact that for \textbf{Task 1}, when $s=0$, $f_{A}(0)=f_B(0)=0$. For $s>0$,  assume $\boldsymbol{A}_{1}^{s}=\boldsymbol{A}_0+\eta\boldsymbol{A}_0 f_{A}(s)$  and $\boldsymbol{B}_1^{s}=\eta f_{B}(s)\boldsymbol{A}_{0}^{\top}$. Since the first task training is the same as the LoRA fine-tuning in~\cite{hao2024flora} for the dynamics of $\boldsymbol{A}$ and $\boldsymbol{B}$ we have:
\begin{align}
    \boldsymbol{A}_{1}^{s+1}&=\boldsymbol{A}_{1}^{s}-\eta(\boldsymbol{B}_{1}^{s})^{\top}(\nabla_{\boldsymbol{W}_0}\mathcal{L}_1^{s})\nonumber\\
    &=\boldsymbol{A}_0+\eta\boldsymbol{A}_0 f_{A}(s)-\eta^2\boldsymbol{A}_0 f_{B}^{\top}(s)(\nabla_{\boldsymbol{W}_0}\mathcal{L}_1^s)\nonumber\\
    &=\boldsymbol{A}_0+\eta\boldsymbol{A}_0 f_{A}(s+1)
\end{align}
and 
\begin{align}
    \boldsymbol{B}_1^{s+1}&=\boldsymbol{B}_1^{s}-\eta(\nabla_{\boldsymbol{W}_0}\mathcal{L}_1^{s})(\boldsymbol{A}_{1}^{s})^{\top}\nonumber\\
    &=\eta f_{B}(s+1)\boldsymbol{A}_{0}^{\top}
\end{align}
Thus, by rearranging the terms, we have:
\begin{align}
    &f_{A}(s)=-\eta\sum_{j=0}^{s-1}f_{B}^{\top}(j)(\nabla_{\boldsymbol{W}_0}\mathcal{L}_1^j)\\
    &f_{B}(s)=-\sum_{j=0}^{s-1}(\nabla_{\boldsymbol{W}_0}\mathcal{L}_1^j)(\eta f_{A}^{\top}(j)+\boldsymbol{I})
\end{align}
Since $\boldsymbol{W}_0+\boldsymbol{B}\boldsymbol{A}\approx\boldsymbol{W}_0+\Delta\boldsymbol{B}\boldsymbol{A}_0$ when learning rate $\eta$ is small,  the change in $\boldsymbol{B}$ dominates the final weight update. Thus, if freezing $\boldsymbol{A}$, we obtain
\begin{align}
    \Delta\boldsymbol{B}_1\approx-\eta\left(\sum_{s=1}^{S}\nabla_{\boldsymbol{W}_0}\mathcal{L}_1^{s}\right)\boldsymbol{A}_0^{\top}
\end{align}
and
\begin{align}
    f_{B}(s) \approx -\sum_{j=0}^{s-1}\nabla_{\boldsymbol{W}_0}\mathcal{L}_1^j
\end{align}
\textbf{Task $i$ ($i>1$)}: when $s=0$, $f_{A}(0)=f_B(0)=0$.\\
For $s>0$: Assume $\boldsymbol{A}_i^s=\boldsymbol{A}_i^{(0)}+\eta\boldsymbol{A}_i^{(0)} f_{A}(s)$ and $\boldsymbol{B}_i^s=\boldsymbol{B}_i^{(0)}+\eta f_{B}(s)(\boldsymbol{A}_i^{(0)})^{\top}$ hold with such functions $f_A$ and $f_B$ for $1,\dots,s$. Then, for $s+1$, we have 
\begin{align}
    \boldsymbol{A}_i^{s+1}&=\boldsymbol{A}_i^{s}-\eta(\boldsymbol{B}_i^s)^{\top}(\nabla_{\boldsymbol{W}_0}\mathcal{L}_i^{s})\nonumber\\
    &=\boldsymbol{A}_{i}^{(0)}+\eta\boldsymbol{A}_i^{(0)} f_{A}(s)-\eta(\boldsymbol{B}_i^{(0)}+\eta f_{B}(s)(\boldsymbol{A}_i^{(0)})^{\top})^{\top}(\nabla_{\boldsymbol{W}_0}\mathcal{L}_i^{s})\nonumber\\
    &=\boldsymbol{A}_i^{(0)}+\eta\boldsymbol{A}_i^{(0)}f_{A}(s)-\eta((\boldsymbol{B}_i^{(0)})^{\top}+\eta\boldsymbol{A}_i^{(0)} f_{B}^{\top}(s))(\nabla_{\boldsymbol{W}_0}\mathcal{L}_i^{s})\nonumber\\
    &=\boldsymbol{A}_i^{(0)}+\eta\boldsymbol{A}_i^{(0)}(f_{A}(s)-\eta f_{B}^{\top}(s)(\nabla_{\boldsymbol{W}_0}\mathcal{L}_i^{s}))-\eta(\boldsymbol{B}_i^{(0)})^{\top}(\nabla_{\boldsymbol{W}_0}\mathcal{L}_i^{s})
\end{align}
We would like to express $\boldsymbol{A}_i^{s+1}$ as
\begin{align}
    \boldsymbol{A}_i^{s+1}=\boldsymbol{A}_i^{(0)}+\eta\boldsymbol{A}_i^{(0)} f_A(s+1)
\end{align}
So compare both sides:
\begin{align}
    \eta\boldsymbol{A}_i^{(0)} f_A(s+1)=\eta\boldsymbol{A}_i^{(0)}(f_{A}(s)-\eta f_{B}^{\top}(s)(\nabla_{\boldsymbol{W}_0}\mathcal{L}_i^{s}))-\eta(\boldsymbol{B}_i^{(0)})^{\top}(\nabla_{\boldsymbol{W}_0}\mathcal{L}_i^{s})
\end{align}
Divide both sides by $\eta$, rearrange:
\begin{align}
    \boldsymbol{A}_i^{(0)} f_A(s+1)=\boldsymbol{A}_i^{(0)}(f_{A}(s)-\eta f_{B}^{\top}(s)(\nabla_{\boldsymbol{W}_0}\mathcal{L}_i^{s}))-(\boldsymbol{B}_i^{(0)})^{\top}(\nabla_{\boldsymbol{W}_0}\mathcal{L}_i^{s})
\end{align}
Since by our initialization $\boldsymbol{A}_i^{(0)}(\boldsymbol{A}_i^{(0)})^{\top}=\boldsymbol{I}_r$,  then we have
\begin{align}
\boldsymbol{A}_i^{(0)} f_A(s+1)&=\boldsymbol{A}_i^{(0)}(f_{A}(s)-\eta f_{B}^{\top}(s)(\nabla_{\boldsymbol{W}_0}\mathcal{L}_i^{s}))-\boldsymbol{I}_r(\boldsymbol{B}_i^{(0)})^{\top}(\nabla_{\boldsymbol{W}_0}\mathcal{L}_i^{s})\\
     \boldsymbol{A}_i^{(0)}f_A(s+1)&=\boldsymbol{A}_i^{(0)}(f_{A}(s)-\eta f_{B}^{\top}(s)(\nabla_{\boldsymbol{W}_0}\mathcal{L}_i^{s})-(\boldsymbol{A}_i^{(0)})(\boldsymbol{A}_i^{(0)})^{\top}(\boldsymbol{B}_i^{(0)})^{\top}(\nabla_{\boldsymbol{W}_0}\mathcal{L}_i^{s})\\
    f_A(s+1)&=f_{A}(s)-\eta f_{B}^{\top}(s)(\nabla_{\boldsymbol{W}_0}\mathcal{L}_i^{s})-(\boldsymbol{A}_i^{(0)})^{\top}(\boldsymbol{B}_i^{(0)})^{\top}(\nabla_{\boldsymbol{W}_0}\mathcal{L}_i^{s})\\
    f_A(s+1)&=-\eta\sum_{j=0}^s\left(f_{B}^{\top}(j)-(\boldsymbol{A}_i^{(0)})^{\top}(\boldsymbol{B}_i^{(0)})^{\top}\right)(\nabla_{\boldsymbol{W}_0}\mathcal{L}_i^j)
\end{align}

For $\boldsymbol{B}$, we have:
\begin{align}
    \boldsymbol{B}_i^{s+1}&=\boldsymbol{B}_i^{s}-\eta(\nabla_{\boldsymbol{W}_0}\mathcal{L}_i^{s})(\boldsymbol{A}_{i}^{s})^{\top}\nonumber\\
    &=\boldsymbol{B}_i^{(0)}+\eta f_{B}(s)(\boldsymbol{A}_i^{(0)})^{\top}-\eta(\nabla_{\boldsymbol{W}_0}\mathcal{L}_i^{s})(\boldsymbol{A}_i^{(0)}+\eta\boldsymbol{A}_i^{(0)} f_A(s))^{\top}\nonumber\\
    &=\boldsymbol{B}_i^{(0)}+\eta(f_{B}(s)(\boldsymbol{A}_i^{(0)})^{\top}-(\nabla_{\boldsymbol{W}_0}\mathcal{L}_i^{s})(\boldsymbol{A}_i^{(0)}+\eta\boldsymbol{A}_i^{(0)} f_A(s))^{\top})\nonumber\\
    &=\boldsymbol{B}_i^{(0)}+\eta(f_{B}(s)(\boldsymbol{A}_i^{(0)})^{\top}-(\nabla_{\boldsymbol{W}_0}\mathcal{L}_i^{s})(\boldsymbol{A}_i^{(0)})^\top-\eta(\nabla_{\boldsymbol{W}_0}\mathcal{L}_i^{s})f_{A}^{\top}(s)(\boldsymbol{A}_i^{(0)})^{\top})\nonumber\\
    &=\boldsymbol{B}_i^{(0)}+\eta\left(f_{B}(s)-(\nabla_{\boldsymbol{W}_0}\mathcal{L}_i^{s})-\eta(\nabla_{\boldsymbol{W}_0}\mathcal{L}_i^{s})f_{A}^{\top}(s)\right)(\boldsymbol{A}_i^{(0)})^{\top}
\end{align}
Thus,
\begin{align}
    f_{B}(s+1) &= f_{B}(s)-(\nabla_{\boldsymbol{W}_0}\mathcal{L}_i^{s})-\eta(\nabla_{\boldsymbol{W}_0}\mathcal{L}_i^{s})f_{A}^{\top}(s)\nonumber\\
    &=f_{B}(s)-(\nabla_{\boldsymbol{W}_0}\mathcal{L}_i^{s})(\eta f_{A}^{\top}(s)+\boldsymbol{I})\nonumber\\
    &=-\sum_{j=0}^{s}(\nabla_{\boldsymbol{W}_0}\mathcal{L}_i^{j})(\eta f_{A}^{\top}(j)+\boldsymbol{I})
\end{align}

Then, we have:
\begin{align}
    \|f_{A}(s)\|_{\mathrm{F}}=&\left\|\eta\sum_{j=0}^{s-1}\left(\sum_{m=0}^{j-1}(\eta f_{A}(m)+\boldsymbol{I})(\nabla_{\boldsymbol{W}_0}\mathcal{L}_i^{m})^{\top}+(\boldsymbol{A}_i^{(0)})^{\top}(\boldsymbol{B}_i^{(0)})^{\top}\right)(\nabla_{\boldsymbol{W}_0}\mathcal{L}_i^j)\right\|_{\mathrm{F}}\nonumber\\
    \leq&\eta^2\left\|\sum_{m=0}^{s-2}(f_{A}(j))(\nabla_{\boldsymbol{W}_0}\mathcal{L}_i^m)^{\top}\sum_{j=m+1}^{s-1}(\nabla_{\boldsymbol{W}_0}\mathcal{L}_i^j)\right\|_{\mathrm{F}}\nonumber\\
    &+\eta\left\|\sum_{j=0}^{s-1}\sum_{m=0}^{j-1}(\nabla_{\boldsymbol{W}_0}\mathcal{L}_i^m)^{\top}(\nabla_{\boldsymbol{W}_0}\mathcal{L}_i^j)\right\|_{\mathrm{F}}\nonumber\\
    &+\eta\left\|\sum_{j=0}^{s-1}(\boldsymbol{A}_i^{(0)})^{\top}(\boldsymbol{B}_i^{(0)})^{\top}(\nabla_{\boldsymbol{W}_0}\mathcal{L}_i^j)\right\|_{\mathrm{F}}\nonumber\\
    \leq&\eta^2 L\left\|\sum_{m=0}^{s-2}(f_{A}(j))(\nabla_{\boldsymbol{W}_0}\mathcal{L}_i^m)^{\top}\right\|_{\mathrm{F}}+\eta L^2+\eta^2 L^2s\|(\boldsymbol{A}_i^{(0)})^{\top}(\boldsymbol{B}_i^{(0)})^{\top}\|_{\mathrm{F}}
\end{align}
\begin{align}
    \|(\boldsymbol{A}_i^{(0)})^{\top}(\boldsymbol{B}_i^{(0)})^{\top}\|_{\mathrm{F}}=\|(\boldsymbol{B}_i^{(0)}\boldsymbol{A}_i^{(0)})^{\top}\|_{\mathrm{F}}=\sqrt{\sum_{p=1}^{r}\sigma_{p}^2(\boldsymbol{B}_i^{(0)}\boldsymbol{A}_i^{(0)})}\leq \sqrt{r}
\end{align}
If $\|f_{A}(s)\|\leq a_s=\frac{\eta L^2(1-(\eta^2 L^2)^s)}{1-\eta^2 L^2}$, then 
\begin{align}
    \|f_{A}(s)\|_{\mathrm{F}}&\leq\eta^2 L^2 a_{s-1}+\eta L^2+\eta^2 L^2 s\sqrt{r}\nonumber\\
    &=\eta^2 L^2 \frac{\eta L^2(1-(\eta^2 L^2)^s)}{1-\eta^2 L^2}+\eta L^2+\eta^2 L^2 s\sqrt{r}+\eta^2 L^2 s\sqrt{r}\nonumber\\
    &=\frac{\eta^3 L^4 (1-(\eta^2 L^2)^s)+\eta L^2-\eta^3 L^4+\eta^2 L^2 s\sqrt{r}-\eta^4 L^4 s\sqrt{r}}{1-\eta^2 L^2}\nonumber\\
    &=\frac{\eta L^2 (1-(\eta^2L^2)^s)+\eta^2 L^2s\sqrt{r}(1-\eta^2 L^2)}{1-\eta^2 L^2}
\end{align}
We have 
\begin{align}
    \|f_A(s)\|_2\leq\|f_{A}(s)\|_{\mathrm{F}}
\end{align}
If $\eta\ll 1/L$, 
\begin{align}
    \eta\|f_{A}(s)\|_{\mathrm{F}}&\leq\eta\frac{\eta L^2 (1-(\eta^2L^2)^s)+\eta^2 L^2s\sqrt{r}(1-\eta^2 L^2)}{1-\eta^2 L^2}\nonumber\\
    &\leq\eta\frac{\eta L^2 (1-(\eta^2L^2)^s)}{1-\eta^2 L^2}\nonumber\\
    &\leq\eta a_{s}
\end{align}
Thus, we have $\eta\|f_{A}(s)\|\leq \eta a_s$. The dynamics are:
\begin{align}
    f_{A}(s) = -\eta\sum_{j=0}^{s-1}f_{B}^{\top}(j)(\nabla_{\boldsymbol{W}_0}\mathcal{L}_i^j),\quad f_{B}(s) = -\sum_{j=0}^{s-1}(\nabla_{\boldsymbol{W}_0}\mathcal{L}_i^j)(\eta f_{A}^{\top}(j)+\boldsymbol{I})
\end{align}
In our algorithm, we fine-tune $\boldsymbol{A}_i^{(0)}$ in our algorithm and we have $\eta\|f_{A}(s)\|\ll\boldsymbol{I}$, then 
\begin{align}
   f_A(s) = -\eta\sum_{j=0}^{s-1}\left(-\sum_{m=0}^{j-1}(\nabla_{\boldsymbol{W}_0}\mathcal{L}_i^m)\right)^{\top}(\nabla_{\boldsymbol{W}_0}\mathcal{L}_i^j),\quad  f_{B}(s) = -\sum_{j=0}^{s-1}(\nabla_{\boldsymbol{W}_0}\mathcal{L}_i^j)
\end{align}
Therefore, we have
\begin{align}
    \Delta\boldsymbol{B}_i\approx-\eta\left(\sum_{s=0}^{S}\nabla_{\boldsymbol{W}_0}\mathcal{L}_{i}^{s}\right)(\boldsymbol{A}_i^{(0)})^{\top}
\end{align}
\end{proof}

\section{Experiments Details}
\label{appendix:exp}
Our experiments are conducted on 4 NVIDIA A100 GPUs using the DeepSpeed repository. We evaluate five LLMs: Llama-2-7B-chat, Llama-2-13B-chat, Llama-3-2-3B, Qwen2.5-3B, and Qwen2.5-7B. Each individual experiment (e.g., running a single task order from the large number of tasks benchmark on Llama-2-13B-chat) can be executed on a single A100 GPU. We apply LoRA to the query and value projection matrices in the attention modules of each Llama model, with a fixed rank of 8. Each task order is evaluated over 3 random seeds.\\ 
For Standard CL benchmark and the large number of tasks benchmark on Llama-2-7B-chat and Llama-2-13B-chat, we follow the setting in~\cite{wang2023orthogonal}, and train the models with one epoch, a constant learning rate of 1e-4, the training batch size is 1, and the gradient accumulation step is 8. And for Standard CL benchmark and the large number of tasks benchmark on Llama-3-2-3B, we set the learning rate as 1e-4. For SuperNI benchmark using Llama-2-7B-chat and Llama-2-13B-chat, we follow the learning rate, training batch size, and gradient accumulation steps in~\cite{zhao2024sapt}, and we use 5e-5 and train the models with five epochs with a training batch size of 2, and gradient accumulation steps of 4. And for SuperNI benchmark on Llama-3-2-3B, we set the learning rate as 5e-5.
\subsection{Overall results on Llama-2-13B-chat}\label{app:overall_13B}
\paragraph*{Continual learning performance analysis. }As shown in Table~\ref{tab:llama13b_slao_overall}, our method consistently outperforms all data-free baselines across two benchmarks using Llama-2-13B-chat model. \\
\noindent\textbf{\textit{LoRA-Based continual learning}}: {\sffamily{IncLoRA}} improves upon {\sffamily{SeqLoRA}} by freezing previously learned LoRAs to isolate subspaces, though its subspace separation is simplistic. {\sffamily{SeqLoRA}} performs a little better than {\sffamily{O-LoRA}} in the standard CL benchmark, since we use the hyperparameter $\lambda=0.5$ in the orthogonal loss in {\sffamily{O-LoRA}} that may be adjusted along with the different models and datasets, while  {\sffamily{O-LoRA}} outperforms {\sffamily{SeqLoRA}} and {\sffamily{IncLoRA}} in large number of tasks benchmark. {\sffamily{SAPT-LoRA}} achieves the highest average performance among LoRA-based methods, but it relies on generated previous task pseudo samples, unrealistic in many LLM scenarios, and is more sensitive to task ordering than our method. {\sffamily{LoRM-BA}} (from second task, begin with freezing $\boldsymbol{B}$) and {\sffamily{LoRM-AB}} (from second task, begin with freezing $\boldsymbol{A}$) yield nearly identical results in large number of tasks benchmark, suggesting that the order of alternating LoRA components in learning sequential tasks does not significantly affect outcomes, but {\sffamily{LoRM-BA}} outperforms {\sffamily{LoRM-AB}} in standard CL benchmark. {\sffamily{CorDA}} performs well in standard CL benchmark and large number of tasks, but both of them are lower than our method. {\sffamily{MagMax}} performs comparably to our approach on standard CL benchmark, and slightly worse on large number of tasks, thus only keeping the weights which have the largest absolute value would cause catastrophic forgetting. \\
\noindent\textbf{\textit{LoRA merging baselines:} }{\sffamily{KnOTS}} and {\sffamily{LoRA-LEGO}} perform similarly in the standard CL benchmark, but {\sffamily{KnOTS}} outperforms in the large number of tasks. {\sffamily{KnOTS}} may benefit from flexible SVD-merging mechanism so that we apply time-aware scaling on merging, while {\sffamily{LoRA-LEGO}} treats tasks equally, lacks prioritization, and is ineffective in complex CL contexts. \\
\noindent\textbf{\textit{Continual merging approaches:} }since {\sffamily{OPCM}} is designed for full model, directly applying it to LoRA by treating its two components identically leads to suboptimal performance. 

\begin{table}[t]
  \caption{Testing performance (\%) on three CL benchmarks using Llama-2-13B-chat across different task orders, where each result is run three random times, where $Oi$ denotes $i$th task order.}
  \label{tab:llama13b_slao_overall}
  \centering
    \begin{tabular}{c|c c c c|c c c c}
    \toprule
& \multicolumn{4}{c|}{\textbf{Standard CL Benchmark}} 
         & \multicolumn{4}{c}{\textbf{Large Number of Tasks}}\\
        \midrule
        \textbf{Method} & \textbf{O1} & \textbf{O2} & \textbf{O3} & \textbf{avg} 
        & \textbf{O4} & \textbf{O5} & \textbf{O6} & \textbf{avg}\\
        \midrule
        SeqLoRA&77.1& 77.4 & 78.6&77.7&74.1&74.2&74.5&74.3\\
        IncLoRA& 78.3&79.6&79.5&79.1 & 74.2&76.1&75.1&75.1\\
        O-LoRA &  76.1&77.1&78.5&77.2 &75.5&75.5&74.8&75.3\\
        SPAT-LoRA&83.2&82.4&80.1&81.9& 83.9&80.2&82.3&82.1\\
        \midrule
        LoRM-BA&78.4&80.2&80.4&79.7&74.9&71.9&70.5&72.4\\
        LoRM-AB&76.2&74.1&75.3&75.2&74.3&70.1&72.3&72.2\\
        \midrule
        CorDA & 79.1&80.4&80.6&80.0&75.9&75.8&72.9&74.9\\
       MagMax &80.9&80.6&80.7&80.7&73.7&73.4&76.0&74.4\\
        \midrule
        KnOTS(zero init) & 71.9&73.1&73.9&73.0&66.8&65.5&66.0&66.1\\
        LoRA-LEGO& 73.0&72.3&72.9&72.7&64.3&63.3&64.0&63.9\\
        \midrule
        OPCM &68.8&63.6&61.6&64.7&57.6&60.2&58.8&58.9\\
        \midrule
        \rowcolor{gray!20}
        SLAO (ours)& 80.8&81.1&81.1& 81.0& 76.5&75.9&76.1&76.2\\
        \midrule
        Multi-Task &\multicolumn{4}{c|}{81.4} &\multicolumn{4}{c}{79.2}\\
       \bottomrule
    \end{tabular}
\vskip -10pt
\end{table}
\paragraph*{Asymmetry in LoRA merging. }To investigate the asymmetry in LoRA merging, we compare different continual merging strategies for LoRA: (1) Freeze A, \text{merge} B (FREA-MB), (2) Freeze B, \text{merge} A (FREB-MA), (3) Fine-tune BA, \text{merge} A (FTBA-MA), (4) Fine-tune BA, \text{merge} BA (FTBA-MBA), (5) Fine-tune BA, \text{merge} B (FTBA-MB). As shown in Table~\ref{tab:llama13b_merge}, only fine-tuning LoRA and merging $\boldsymbol{B}$ consistently outperforms other strategies, except ours. Fine-tuning LoRA and merging $\boldsymbol{B}\boldsymbol{A}$ has comparable performance compared to fine-tuning LoRA and merging $\boldsymbol{B}$, but fine-tuning LoRA and merging $\boldsymbol{A}$ yields the poorest performance among fine-tuning LoRA methods. When freezing one component of LoRA during training, freezing $\boldsymbol{A}$ and merging $\boldsymbol{B}$ is much better than freezing $\boldsymbol{B}$ and merging $\boldsymbol{A}$, consistent with conclusion in~\cite{zhu2024asymmetry} that freezing $\boldsymbol{A}$ and fine-tuning $\boldsymbol{B}$ is at least better than freezing $\boldsymbol{B}$ and fine-tuning $\boldsymbol{A}$. This highlights the asymmetry in LoRA components and the importance of asymmetric merging based on their fundamental roles in adaptation.

\begin{table}[t]
  \caption{Comparison of merging strategies on testing performance on two CL benchmarks using Llama-2-13B-chat across different task orders, where $Oi$ denotes $i$th task order.}
  \label{tab:llama13b_merge}
  \centering
    \begin{tabular}{c|c c c c|c c c c}
    \toprule
    & \multicolumn{4}{c|}{\textbf{Standard CL Benchmark}} 
         & \multicolumn{4}{c}{\textbf{Large Number of Tasks}}\\
        \midrule
        \textbf{Method} & \textbf{O1} & \textbf{O2} & \textbf{O3} & \textbf{avg} 
        & \textbf{O4} & \textbf{O5} & \textbf{O6} & \textbf{avg}\\
        \midrule
        FREB-MA&77.7&77.5&76.4& 77.2& 69.1&68.2&69.3&68.9\\
        FREA-MB&79.4&79.2&78.9&79.2&73.2&73.8&73.2&73.4\\
        FTBA-MA&79.2&80.1&80.8&80.0&75.1&75.0&75.8&75.3\\
        FTBA-MBA&80.7&80.9 &80.8&80.8& 75.4&75.4&76.0&75.6\\
        FTBA-MB& 80.6& 80.9& 80.9& 80.8&75.8&75.7&76.0&75.8\\
        SLAO(ours)&80.8&81.1&81.1& 81.0& 76.5&75.9&76.1&76.2\\
       \bottomrule
    \end{tabular}
\vskip -10pt
\end{table}

\subsection{Results in Qwen2.5 models}\label{app:result_qwen}
To evaluate the performance of our {\sffamily{SLAO}} using state-of-the-art LLMs, we use Qwen2.5-3B and Qwen2.5-7B to compare with the other two LoRA-based continual learning baselines on the SuperNI benchmark.

\begin{table}[t]
    \centering
    \caption{Comparison of testing performance on SuperNI benchmark using Qwen2.5-3B and Qwen2.5-7B models across two different task orders.}
    \setlength{\tabcolsep}{10pt}
     \begin{tabular}{c | c c c |c c c }
    \toprule
         & \multicolumn{3}{c|}{\textbf{Qwen2.5-3B}} & \multicolumn{3}{c}{\textbf{Qwen2.5-7B}}\\
         \midrule
        \textbf{Method} & \textbf{O1} & \textbf{O2} & \textbf{avg} & \textbf{O1}&\textbf{O2} & \textbf{avg}\\
        \midrule
        O-LoRA &31.1&29.8&30.5&34.3&32.6&33.4\\
        InfLoRA&35.6&25.6&30.6&43.5&31.0&37.3\\
        SLAO&37.8&32.4&35.1&41.0&35.5&38.3\\
       \bottomrule
    \end{tabular}
    \label{tab:qwen_superni}
\end{table}

When evaluating SuperNI benchmarks, {\sffamily{SLAO}} is consistently better than other two baselines and achieves the best average performance on both Qwen2.5-3B and Qwen2.5-7B models, demonstrating its robustness on Qwen2.5 model sizes. Also, the performance of {\sffamily{SLAO}} on Qwen2.5-3B is better than that on Qwen2.5-7B.

\subsection{Impact of initialization strategies for existing LoRA merging methods}\label{app:init_exist_merge}
We compare the testing performance of different initialization strategies for existing LoRA merging methods in Table~\ref{tab:llama7b13b_diff_meint}, where we compare two strategies: random (zero) initialization and the last fine-tuning point of previous tasks. For zero initialization for new tasks, when using Llama-2-7B-chat and Llama-2-13B-chat, {\sffamily{KnOTS}} and {\sffamily{LoRA-LEGO}} perform similarly in the standard CL benchmark, but {\sffamily{KnOTS}} outperforms in the large number of tasks. For last fine-tuning point initialization, {\sffamily{KnOTS}} and {\sffamily{LoRA-LEGO}} perform similarly in the standard CL benchmark and the large number of tasks, while {\sffamily{KnOTS}} is slightly over {\sffamily{LoRA-LEGO}}. All results in the last fine-tuning point initialization are significantly better than the zero initialization. These results show that our last fine-tuning point initialization has better performance than zero initialization in continual merging scenarios.

\begin{table}[t]
  \caption{Comparison of initialization strategies for existing LoRA merging methods on testing performance on two CL benchmarks using Llama-2-7B-chat and Llama-2-13B-chat across different task orders, where $Oi$ denotes $i$th task order.}
  \label{tab:llama7b13b_diff_meint}
  \centering
  \resizebox{\textwidth}{!}{
    \begin{tabular}{c c c|c c c c| c c c c}
    \toprule
         \multicolumn{3}{c|}{}& \multicolumn{4}{c|}{\textbf{Standard CL Benchmark}} 
         & \multicolumn{4}{c}{\textbf{Large Number of Tasks}}\\
        \midrule
        \textbf{Init} &\textbf{Model}& \textbf{Method} & \textbf{O1} & \textbf{O2} & \textbf{O3} & \textbf{avg} 
        & \textbf{O4} & \textbf{O5} & \textbf{O6} & \textbf{avg}\\
        \midrule
        Random (Zero) &7B&KnOTS&67.9&65.9&70.8&68.2 &61.5&60.1&58.0&59.9\\
        Random (Zero) &7B&LoRA-LEGO&68.3& 66.0& 70.9&68.4 &58.8 &58.7 & 53.2&56.9\\
        \midrule
        Last-FT &7B&KnOTS& 79.7&80.7 & 79.9&80.1 & 74.0 &72.5&75.0&73.8\\
        Last-FT &7B&LoRA-LEGO&78.8&79.8&79.5& 79.4&72.2&70.4&74.2&72.3\\
        \midrule
        Random (Zero) &13B&KnOTS&71.9&73.1&73.9&73.0&66.8&65.5&66.0&66.1\\
        Random (Zero) &13B&LoRA-LEGO&73.0&72.3&72.9&72.7&64.3&63.3&64.0&63.9\\
        \midrule
        Last-FT &13B&KnOTS&80.1& 79.6&80.3 &80.0 & 75.4&74.9&75.8&75.4\\
        Last-FT &13B&LoRA-LEGO&79.8&80.0&80.2& 80.0&75.3&73.8& 76.0&75.0\\
       \bottomrule
    \end{tabular}
    }
\end{table}

\subsection{Continual learning performance on backward transfer}\label{app:cl_bwt}
We evaluate the performance of backward transfer (BWT) using Llama-2-7B-chat on the large number of tasks benchmark. As shown in Table~\ref{tab:llama7b_slao_bwt}, {\sffamily{SLAO}} demonstrates strong backward transfer ability. Among all methods, {\sffamily{SeqLoRA}} performs the worst due to its lack of mechanisms to prevent forgetting. {\sffamily{KnOTS}} and {\sffamily{LoRA-LEGO}} also underperform, as they are primarily designed for model merging rather than continual learning. {\sffamily{IncLoRA}} exhibits limited BWT performance, while {\sffamily{O-LoRA}} achieves better results by enforcing orthogonality during learning. {\sffamily{InfLoRA}} slightly trails {\sffamily{O-LoRA}}, and {\sffamily{CorDA}} performs worse, possibly due to its reliance on nullspace projection without time-aware updates. {\sffamily{SAPT-LoRA}} achieves the best BWT overall, though it benefits from synthetic data from previous tasks, which may not be feasible in realistic settings. Between the two variants of {\sffamily{LoRM}}, {\sffamily{LoRM-BA}} slightly outperforms {\sffamily{LoRM-AB}}. Finally, {\sffamily{MagMax}} and {\sffamily{OPCM}} show comparable performance, both designed to balance update integration during continual merging.

\begin{table}[t]
  \caption{Testing performance (\%) of the average of backward transfer (BWT) on large number of tasks using Llama-2-7B-chat across different task orders.}
  \label{tab:llama7b_slao_bwt}
  \centering
    \begin{tabular}{l r l r}
        \toprule
        \textbf{Method} & \textbf{BWT} & \textbf{Method} & \textbf{BWT} \\
        \midrule
        SeqLoRA       & -17.2 & LoRM-BA       & -6.7  \\
        IncLoRA       &  -9.6 & LoRM-AB       & -4.1  \\
        O-LoRA        &  -4.0 & MagMax        & -3.8  \\
        InfLoRA       &  -4.9 & OPCM          & -3.9  \\
        SAPT-LoRA     &  -2.9 & KnOTS (zero init) & -14.1 \\
        CorDA         & -4.5  & LoRA-LEGO     & -15.6 \\
        SLAO (Ours)   &  -3.5 & &\\
        \bottomrule
      \end{tabular}
\end{table}
\subsection{Continual learning performance on Order-normalized Performance Disparity}\label{app:cl_opd}
We evaluate the performance of Order-normalized Performance Disparity~\citep{Yoon2020Scalable} using Llama-2-7B-chat on three benchmarks. Order-normalized Performance Disparity is used to evaluate order-sensitivity for each task $t$, defined as the disparity between its performance on $R$ random task sequences:
\begin{align}
    OPD_{t} = \max(\overline{P}_{t}^{1},\dots,\overline{P}_{t}^{R})-\min(\overline{P}_{t}^{1},\dots,\overline{P}_{t}^{R})
\end{align}
where $\overline{P}_{t}^{r}$ denotes the performance of task $t$ to the task sequence $r$. The Maximum OPD is defined as $MOPD=\max(OPD_{1},\dots,OPD_{t})$ and the Average OPD is defined as $AOPD=\frac{1}{T}\sum_{t=1}^{T}OPD_{t}$, to evaluate order-robustness on the whole task set. Lower scores of both metrics indicate higher robustness.

Table~\ref{tab:mopd_aopd} shows the performance of MOPD and AOPD on three benchmarks with their task sequences. Our SLAO shows the most stable performance across different task orders, indicating that it handles order sensitivities better compared to other baselines. While SAPT-LoRA achieves higher scores in Table~\ref{tab:llama7b_slao_overall}, it heavily depends on past tasks' data information, so that SAPT-LoRA suffers from greater variation in different task orders, mostly due to its past pseudo-sample generation.

\begin{table}[t]
  \caption{Comparison of MOPD and AOPD on testing performance across three standard CL benchmarks using Llama-2-7B-chat in different task orders.}
  \label{tab:mopd_aopd}
  \centering
  \resizebox{\textwidth}{!}{
    \begin{tabular}{c|c c |c c | c c}
    \toprule
    & \multicolumn{2}{c|}{\textbf{Standard CL Benchmark}} 
         & \multicolumn{2}{c|}{\textbf{Large Number of Tasks}} & \multicolumn{2}{c}{\textbf{SuperNI Benchmark}} \\
         \midrule
      \textbf{Method}  & \textbf{MOPD} & \textbf{AOPD} & \textbf{MOPD} & \textbf{AOPD} 
        & \textbf{MOPD} & \textbf{AOPD} \\
        \midrule
        O-LoRA&9.84\%&5.79\%&17.87\%&8.53\%&22.16\%&11.63\%\\
        SAPT-LoRA&8.75\%&5.69\%&18.94\%&9.65\%&25.28\%&12.38\%\\
        InfLoRA&8.23\%&2.54\%&19.58\%&10.01\%&23.42\%&11.46\%\\
        SLAO(ours)&1.72\%&1.30\%&15.17\%&7.16\%&18.94\%&10.76\%\\
       \bottomrule
    \end{tabular}
    }
\end{table}

\subsection{Choice of orthogonal decomposition strategy}\label{app:orthogo_choice}
In our algorithm, {\sffamily{SLAO}}, we use QR decomposition to extract the orthogonal basis from previous LoRA $\boldsymbol{A}$. To evaluate the effectiveness of QR decomposition, we compare it against orthogonal bases derived from (1) singular value decomposition (SVD), where the product $\boldsymbol{U}\boldsymbol{V}^{\top}$ forms an orthogonal approximation, and (2) randomized SVD, where the product $\boldsymbol{Q}\boldsymbol{U}$ forms an orthogonal approximation. As shown in Table~\ref{tab:llama7b_ortho_decom}, QR initialization performs similarly to the SVD approach on standard CL benchmark and large number of tasks, but the performance of QR on SuperNI benchmark is better than that of SVD. The performance of randomized SVD is not better than SVD and QR across these three benchmarks. 

\begin{table}[t]
  \caption{Comparison of orthogonal decomposition strategies on testing performance on two CL benchmarks using Llama-2-7B-chat across different task orders, where $Oi$ denotes $i$th task order.}
  \label{tab:llama7b_ortho_decom}
  \centering
  \setlength{\tabcolsep}{3pt}
     \begin{tabular}{c | c c c c |c c c c| c c c}
    \toprule
& \multicolumn{4}{c|}{\textbf{Standard CL Benchmark}} 
         & \multicolumn{4}{c}{\textbf{Large Number of Tasks}}& \multicolumn{3}{c}{\textbf{SuperNI Benchmark}}\\
        \midrule
        \textbf{Method} & \textbf{O1} & \textbf{O2} & \textbf{O3} & \textbf{avg} 
        & \textbf{O4} & \textbf{O5} & \textbf{O6} & \textbf{avg}& \textbf{O1} & \textbf{O2} & \textbf{avg}\\
        \midrule
        Randomized SVD&72.6&69.1&73.3&71.7&52.3&62.5&57.8&57.5&11.8&22.8&17.3\\
        SVD&79.9&80.8&80.2&80.3&75.3&74.4&75.1&74.9&36.9&33.7&35.3\\
        QR&80.1& 80.8&80.4&80.4&75.0&74.4&75.1&74.8&38.7&35.7&37.2\\
       \bottomrule
    \end{tabular}
\vskip -10pt
\end{table}

\subsection{Asymmetry of LoRA}\label{app:asymmertry_lora}
We separately fine-tune 15 tasks from the SuperNI benchmark using 15 LoRAs on Llama-2-7B-chat~\citep{touvron2023llama}, and compute cosine similarity of $\boldsymbol{A}$ and $\boldsymbol{B}$ across 15 tasks using the last layer LoRA. Figure~\ref{fig:lora_sim_four_SNI} shows that $\boldsymbol{A}$ exhibits significantly higher similarity across tasks compared to $\boldsymbol{B}$, suggesting that LoRA components follow inherently different learning dynamics.

\begin{figure*}[h]
    \centering

    \begin{subfigure}{0.24\textwidth}
        \adjincludegraphics[width=\linewidth]{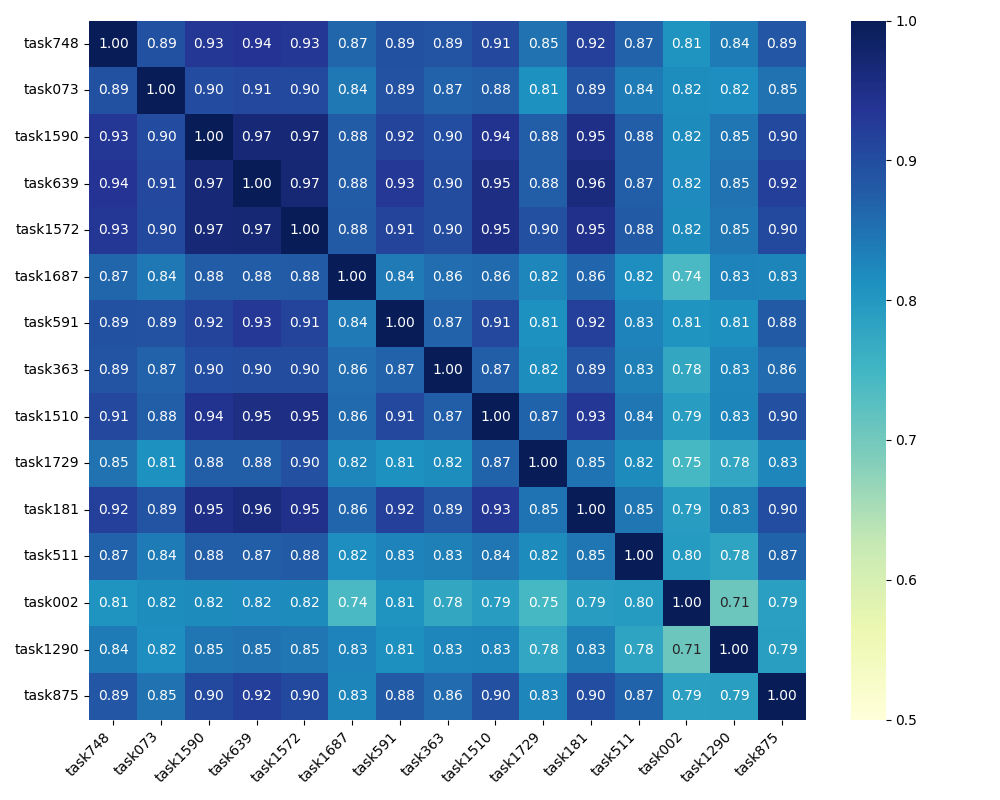}
        \caption{LoRA $\boldsymbol{A}$ (q)}
    \end{subfigure}
    \hfill
    \begin{subfigure}{0.24\textwidth}
        \adjincludegraphics[width=\linewidth]{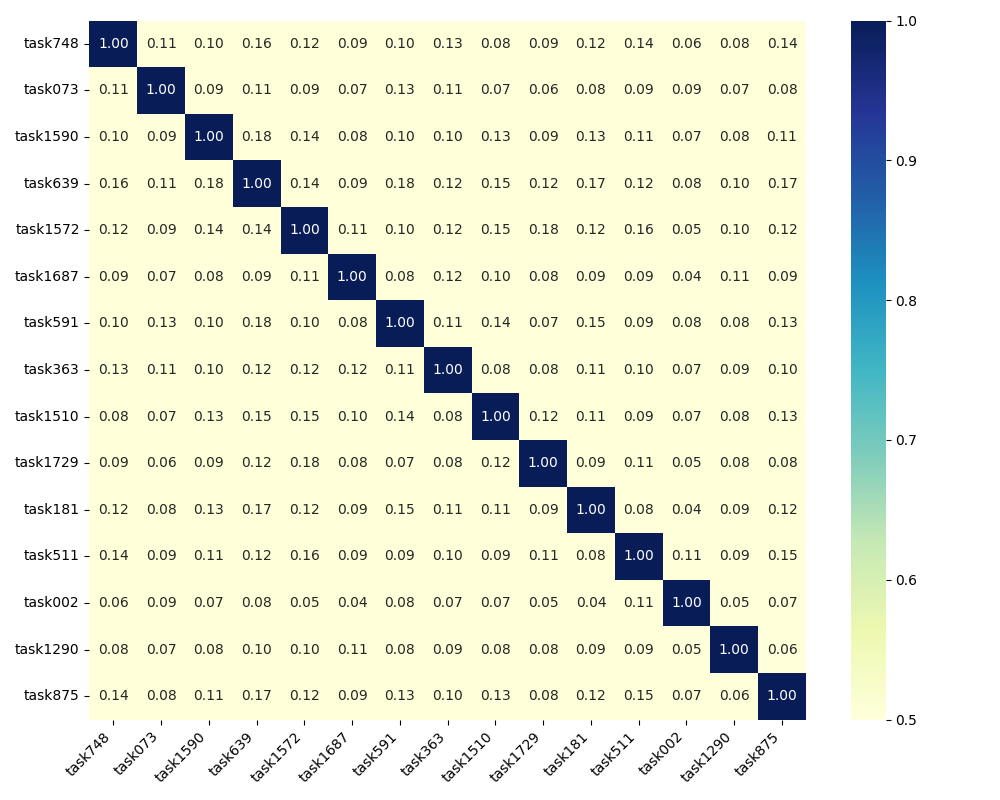}
        \caption{LoRA $\boldsymbol{B}$ (q)}
    \end{subfigure}
    \hfill
    \begin{subfigure}{0.24\textwidth}
        \adjincludegraphics[width=\linewidth]{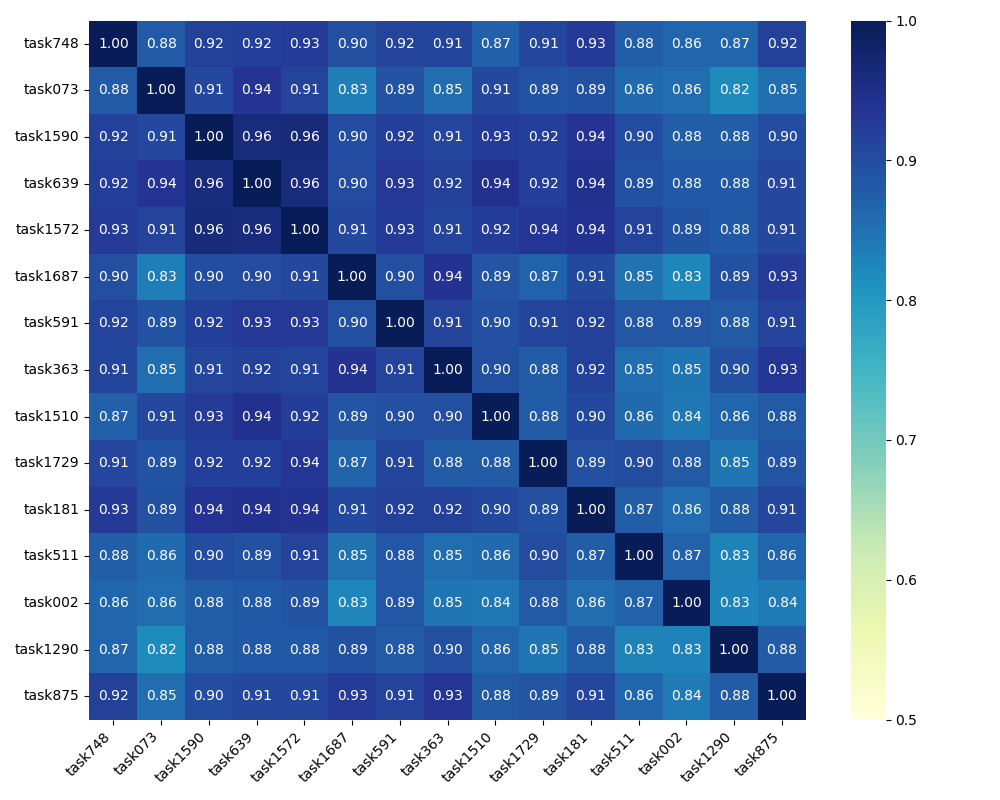}
        \caption{LoRA $\boldsymbol{A}$ (v)}
    \end{subfigure}
    \hfill
    \begin{subfigure}{0.24\textwidth}
        \adjincludegraphics[width=\linewidth]{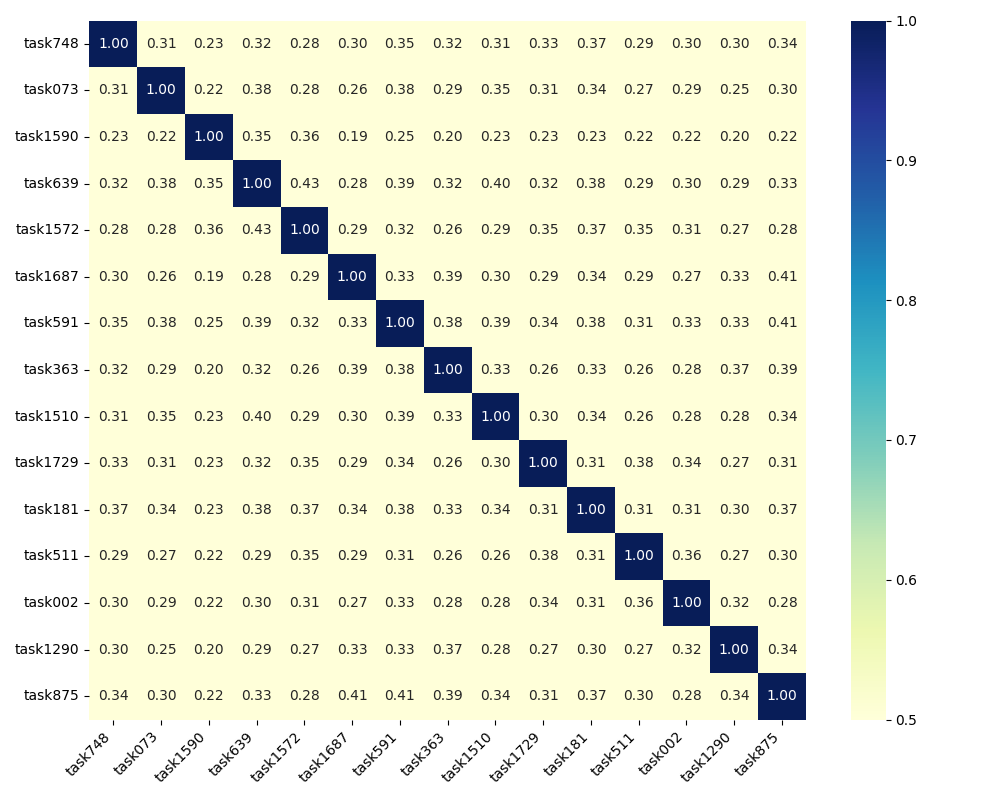}
        \caption{LoRA $\boldsymbol{B}$ (v)}
    \end{subfigure}

    \caption{Cosine similarity between 15 tasks from SuperNI benchmark for fine-tuned q and v attention LoRA $\boldsymbol{A}$ and $\boldsymbol{B}$ in the last layer (32nd) of Llama-2-7B-chat.}
    \label{fig:lora_sim_four_SNI}
\end{figure*}

\subsection{Impact of learning rate}\label{app:impact_lr}
To assess the effect of different learning rates, we evaluate {\sffamily{SeqLoRA}} on the standard CL benchmark using Llama-2-7B-chat with learning rates of 1e-3, 1e-4, and 1e-5. As shown in Table~\ref{tab:impact_lr_seq}, a learning rate of 1e-4 achieves the best performance, while 1e-3 performs the worst, significantly degrading the overall results. This highlights the importance of careful learning rate selection in LoRA-based continual learning.

\begin{table}[h]
    \caption{Impact of learning rate on testing performance of {\sffamily{SeqLoRA}} using Llama-2-7B-chat across three task orders in Standard CL Benchmark, where $Oi$ denotes $i$th task order.}
    \centering
    \begin{tabular}{c|c c c c}
        \toprule
        & \multicolumn{4}{c}{\textbf{Standard CL Benchmark}} \\
        \textbf{learning rate} & \textbf{O1} & \textbf{O2} & \textbf{O3} & \textbf{avg} \\
        \midrule
        $1e-3$ &  6.0 &	0.0& 19.7&8.6\\
         $1e-4$ & 73.3&76.2&78.4&76.0\\
         $1e-5$ &73.6 &71.8&76.0&73.8\\
        \bottomrule
    \end{tabular}
    \label{tab:impact_lr_seq}
\end{table}

\subsection{Impact of the rank of LoRA}\label{app:rank_impact}
We assess the effect of different LoRA rank values in our algorithm by comparing three rank settings on both the standard CL benchmark and the large number of tasks benchmark using Llama-2-13B-chat. As shown in Table~\ref{tab:impact_rank}, a rank of 8 yields the best performance on the standard CL benchmark, while a rank of 4 performs best on the large number of tasks benchmark. Overall, the performance differences across the three ranks are relatively small, suggesting that our method is robust to the choice of rank.

\begin{table}[h]
    \caption{Impact of the rank of LoRA on testing performance of {\sffamily{SLAO}} using Llama-2-13B-chat across Standard CL Benchmark and large number of tasks, where $Oi$ denotes $i$th task order.}
    \centering
    \begin{tabular}{c|c c c c| c c c c}
        \toprule
        & \multicolumn{4}{c|}{\textbf{Standard CL Benchmark}} & \multicolumn{4}{c}{\textbf{Large Number of Tasks}}\\
        \textbf{rank} & \textbf{O1} & \textbf{O2} & \textbf{O3} & \textbf{avg} & \textbf{O4} & \textbf{O5} & \textbf{O6} & \textbf{avg}\\
        \midrule
        $r=4$ & 80.6&81.0&81.1&80.9&77.2&76.3&77.1&76.9\\
         $r=8$ & 80.8&81.1&81.1& 81.0& 76.5&75.9&76.1&76.2\\
         $r=16$ &80.3&80.9&80.9&80.7&76.9&75.5&75.9&76.1\\
        \bottomrule
    \end{tabular}
    \label{tab:impact_rank}
\end{table}
\subsection{Comparison of Training cost}\label{app:train_cost}
We compare the training cost among several baselines in Table~\ref{tab:train_cost}. We use a single NVIDIA A100 GPU to fine-tune Llama-2-7B-chat. It shows that our SLAO is both memory usage efficient and training efficient, since we only compute one-time QR matrix factorization at initialization, avoiding additional computational cost during training. Besides, we observe that the walltime under orthogonal initialization is often smaller than the walltime without orthogonal initialization. There is a similar conclusion in~\cite{Hu2020Provable}, which proves that drawing the initial weights from the orthogonal group can speed up convergence. Therefore, SLAO provides an ideal balance between performance, memory usage, and training speed.

\begin{table}[t]
  \caption{Comparison of training cost across three standard CL benchmarks using Llama-2-7B-chat (average cost across different task orders, FTBA-MBAOI: FTBA-MBA with orthogonal initialize $\boldsymbol{A}$).}
  \label{tab:train_cost}
  \centering
  \resizebox{\textwidth}{!}{
    \begin{tabular}{c|c c|c c| c c}
    \toprule
    & \multicolumn{2}{c|}{\textbf{Standard CL Benchmark}} 
         & \multicolumn{2}{c|}{\textbf{Large Number of Tasks}} & \multicolumn{2}{c}{\textbf{SuperNI Benchmark}} \\
         \midrule
      \textbf{Method}  & \textbf{Peak GPU Memory} & \textbf{GPU Walltime} &  \textbf{Peak GPU Memory} & \textbf{GPU Walltime}
        &  \textbf{Peak GPU Memory} & \textbf{GPU Walltime} \\
        \midrule
        O-LoRA&35.71GB&01:21:49&37.55GB&02:47:06&38.06GB&02:48:34\\
        InfLoRA&45.21GB&01:47:48&57.66GB&04:57:22&58.99GB&05:01:01\\
        FTBA-MBA&35.29GB&00:51:43&35.64GB&01:59:51&36.01GB&02:02:07\\
        FTBA-MBAOI&35.43GB&00:51:27&36.23GB&01:59:12&37.59GB&02:01:35\\
        FTBA-MB&35.17GB&00:51:36&35.47GB&01:59:26&35.91GB&02:01:49\\
        SLAO (ours)&35.24GB&00:50:58&35.61GB&01:59:00&35.94GB&02:00:08\\
       \bottomrule
    \end{tabular}
    }
\end{table}

\subsection{Comparison of orthogonality among LoRA-based CL methods}\label{app:orthogonality_compare}
\begin{itemize}
    \item \textbf{Initialization: }
    \begin{enumerate}[label=(\alph*)]
        \item O-LoRA: New task's LoRA $\boldsymbol{B}_{i}\boldsymbol{A}_{i}$ is randomly initialized and is not orthogonal to previous tasks' LoRAs $\{\boldsymbol{B}_{1}\boldsymbol{A}_{1},\dots,\boldsymbol{B}_{i-1}\boldsymbol{A}_{i-1}\}$.
        \item InfLoRA: Use all of new task $i$ data to compute its input matrix $\boldsymbol{H}_{i}$ on current full model parameters, and use all previous $i-1$ tasks' gradient spaces which denote as $\boldsymbol{M}_{i}$ to make new $\boldsymbol{A}_{i}\in\mathbb{R}^{r\times d}$ lie in $\boldsymbol{N}_{i}\cap \boldsymbol{M}_{i}^{\perp}$ where $\boldsymbol{N}_{i}$ is the subspace spanned by the columns of $\boldsymbol{H}_{i}$. $\boldsymbol{B}_{i}\in\mathbb{R}^{d\times r}$ is initialized as zero.
        \item SLAO: Extract orthogonal basis from previous fine-tuned $\boldsymbol{A}_{i-1}\in\mathbb{R}^{r\times d}$ as new task's $\boldsymbol{A}_{i}$, which makes $\boldsymbol{A}_{i}\boldsymbol{A}_{i}^{\top}=\boldsymbol{I}_{r}$. $\boldsymbol{B}_{i}\in\mathbb{R}^{d\times r}$ is initialized as previous fine-tuned $\boldsymbol{B}_{i-1}$.
    \end{enumerate}
    \item \textbf{Training: }
    \begin{enumerate}[label=(\alph*)]
        \item O-LoRA: Compute orthogonal loss to make new task's $\boldsymbol{A}_{i}\in\mathbb{R}^{r\times d}$ orthogonal to all previous tasks' $\boldsymbol{A}$, and update $\boldsymbol{A}_{i}$ and $\boldsymbol{B}_{i}$.
        \item InfLoRA: Compute standard cross entropy loss and update $\boldsymbol{B}_{i}\in\mathbb{R}^{d\times r}$. 
        \item SLAO: Compute standard cross entropy loss and update $\boldsymbol{B}_{i}\in\mathbb{R}^{d\times r}$ and $\boldsymbol{A}_{i}\in\mathbb{R}^{r\times d}$.
    \end{enumerate}
    \item \textbf{Post-Training: }
    \begin{enumerate}[label=(\alph*)]
        \item O-LoRA: Store all tasks' LoRA $\{\boldsymbol{B}_{1}\boldsymbol{A}_{1},\dots,\boldsymbol{B}_{i}\boldsymbol{A}_{i}\}$.
        \item InfLoRA: Use new task $i$ data to compute new task input matrix $\boldsymbol{R}_{i}$ on new learned $\boldsymbol{B}_{i}\boldsymbol{A}_{i}$, then compute new gradient orthogonal bases memory $\boldsymbol{M}_{i}$ through DualGPM, where $\boldsymbol{M}_{i}$ represents the gradient space of all $i$ tasks. Then, integrate $\boldsymbol{B}_{i}\boldsymbol{A}_{i}$ to $\boldsymbol{W}_{i-1}$ and store the updated gradient space $\boldsymbol{M}_{i}$ of all $i$ tasks.
        \item SLAO: Merge $\boldsymbol{B}_{i}$ to previously merged $\boldsymbol{B}_{i-1}$ and keep fine-tuned $\boldsymbol{B}_{i}$, merged $\boldsymbol{B}_{i}$, and fine-tuned $\boldsymbol{A}_{i}$.
    \end{enumerate}
\end{itemize}

Overall, InfLoRA and SLAO both focus on the orthogonality of initialization and post-training, while O-LoRA focuses on the orthogonality of the updating process during training. Moreover, for initialization, InfLoRA makes new $\boldsymbol{A}_{i}$ lie at the intersection of input matrix and previous gradient spaces $\boldsymbol{M}_{i}$, while SLAO extracts orthogonal basis from previous fine-tuned $\boldsymbol{A}_{i-1}$ as new $\boldsymbol{A}_{i}$; for post-training, InfLoRA computes and stores all previous tasks' orthogonal gradient spaces, while SLAO uses the asymmetry of LoRA to obtain a merged $\boldsymbol{B}$. 

\subsection{Descriptions of Task Sequence Orders}\label{app:task_order_description}

We report the task descriptions and corresponding metrics used in our CL experiments for the Llama models in Table~\ref{tab:stand_CL_large_number_tasks} and Table~\ref{tab:SuperNI_metric}, respectively. The eight task orders are provided in Table~\ref{tab:eight_orders}.

\begin{table}[h]
    \centering
    \caption{Descriptions of 15 datasets in Large Number of Tasks benchmark and first 5 datasets from standard CL benchmark.}
    \setlength{\tabcolsep}{4pt}
    \begin{tabular}{l l l l l}
    \toprule
        \textbf{Dataset name} & \textbf{Category} & \textbf{Task} & \textbf{Domain} & \textbf{Metric} \\
        \cmidrule(r){1-5}
        1. Yelp & CL Benchmark & Sentiment analysis & Yelp reviews & Accuracy\\
        2. Amazon & CL Benchmark & Sentiment analysis & Amazon reviews & Accuracy\\
        3. DBpedia & CL Benchmark & Topic classification & Wikipedia & Accuracy\\
        4. Yahoo &  CL Benchmark & Topic classification & Yahoo Q\&A & Accuracy\\
        5. AG News & CL Benchmark & Topic classification & News & Accuracy\\
        6. MNLI & GLUE & Natural language inference & Various  & Accuracy\\
        7. QQP & GLUE & Paragraph detection & Quora  & Accuracy\\
        8. RTE & GLUE & Natural language inference & News, Wikipedia & Accuracy\\
        9. SST-2 & GLUE & Sentiment analysis & Movie reviews & Accuracy\\
        10. WiC & SuperGLUE & Word sense disambiguation & Lexical databases  & Accuracy\\
        11. CB & SuperGLUE & Natural language inference & Various & Accuracy\\
        12. COPA & SuperGLUE & Question and answering &Blogs,encyclopedia& Accuracy\\
        13. BoolQA & SuperGLUE & Boolean question and answering &Wikipedia& Accuracy\\
        14. MultiRC & SuperGLUE &Question and answering&Various& Accuracy\\
        15. IMDB  & SuperGLUE &Sentiment Analysis &Movie reviews& Accuracy\\
        \bottomrule
    \end{tabular}
    \label{tab:stand_CL_large_number_tasks}
\end{table}

\begin{table}[h]
    \centering
    \caption{Descriptions of 15 datasets in SuperNI benchmark.}
    \setlength{\tabcolsep}{4pt}
    \begin{tabular}{l l l l}
    \toprule
        \textbf{Dataset number} &\textbf{Dataset name}  & \textbf{Task} & \textbf{Metric} \\
        \cmidrule(r){1-4}
        1. task639 &multi-woz-user-utterance-generation&dialogue generation& Rouge-L\\
        2. task1590&diplomacy-text-generation&dialogue generation& Rouge-L\\
        3. task1729&personachat-generate-next&dialogue generation& Rouge-L\\
        4. task181&outcome extraction&information extraction& Rouge-L\\
        5. task748&glucose-reverse-cause-event-detection&information extraction& Rouge-L\\
        6. task1510&evaluation-relation-extraction&information extraction& Rouge-L\\
        7. task002&quoref-answer-generation&question answering& Rouge-L\\
        8. task073&commonsenseqa-answer-generation&question answering& Rouge-L\\
        9. task591&sciq-answer-generation&question answering& Rouge-L\\
        10. task511&reddit-tifu-long-text-summarization&summarization& Rouge-L\\
        11. task1290&xsum-summarization&summarization& Rouge-L\\
        12. task1572&samsum-summary&summarization& Rouge-L\\
        13. task363&sst2-polarity-classification&sentiment analysis& Accuracy\\
        14. task875&emotion-classification&sentiment analysis& Accuracy\\
        15. task1687&sentiment140-classification&sentiment analysis& Accuracy\\
        \bottomrule
    \end{tabular}
    \label{tab:SuperNI_metric}
\end{table}

\begin{table}[h]
    \centering
    \caption{Eight different task orders}
    \small
    \begin{tabularx}{\textwidth}{c|c|X}
    \toprule
       \textbf{Order}  & \textbf{Model}& \textbf{Task Sequence} \\
       \midrule
       1 & \shortstack{Llama-2-7B-chat,\\ Llama-2-13B-chat, \\Llama-3-2-3B} & dbpedia$\rightarrow$ amazon $\rightarrow$ yahoo $\rightarrow$ ag\\
       \midrule
       2 & \shortstack{Llama-2-7B-chat, \\Llama-2-13B-chat, \\Llama-3-2-3B} & dbpedia$\rightarrow$ amazon $\rightarrow$ ag$\rightarrow$ yahoo\\
       \midrule
       3 & \shortstack{Llama-2-7B-chat, \\Llama-2-13B-chat, \\Llama-3-2-3B}& yahoo $\rightarrow$ amazon $\rightarrow$ ag $\rightarrow$ dbpedia \\
       \midrule
       \midrule
       4 & \shortstack{Llama-2-7B-chat, \\Llama-2-13B-chat, \\Llama-3-2-3B} &mnli $\rightarrow$ cb $\rightarrow$ wic $\rightarrow$ copa $\rightarrow$ qqp $\rightarrow$ boolqa $\rightarrow$ rte $\rightarrow$imdb $\rightarrow$ yelp $\rightarrow$ amazon $\rightarrow$ sst-2 $\rightarrow$ dbpedia $\rightarrow$ ag $\rightarrow$multirc $\rightarrow$ yahoo\\
       \midrule
       5 & \shortstack{Llama-2-7B-chat,\\ Llama-2-13B-chat, \\Llama-3-2-3B}& multirc $\rightarrow$  boolqa $\rightarrow$  wic $\rightarrow$  mnli $\rightarrow$  cb $\rightarrow$  copa $\rightarrow$  qqp $\rightarrow$ rte $\rightarrow$  imdb $\rightarrow$  sst-2 $\rightarrow$  dbpedia $\rightarrow$  ag $\rightarrow$  yelp $\rightarrow$  amazon $\rightarrow$yahoo\\
       \midrule
       6 & \shortstack{Llama-2-7B-chat, \\Llama-2-13B-chat, \\Llama-3-2-3B}& yelp $\rightarrow$  amazon $\rightarrow$  mnli $\rightarrow$  cb $\rightarrow$  copa $\rightarrow$  qqp $\rightarrow$  rte $\rightarrow$imdb$\rightarrow$ sst-2 $\rightarrow$  dbpedia $\rightarrow$  ag $\rightarrow$ yahoo $\rightarrow$ multirc $\rightarrow$boolqa $\rightarrow$  wic\\
       \midrule
       \midrule
       \shortstack{1\\(SuperNI)}&\shortstack{Llama-2-7B-chat, \\ Llama-2-13B-chat, \\Llama-3-2-3B}& task1572 $\rightarrow$  task363 $\rightarrow$  task1290 $\rightarrow$  task181 $\rightarrow$  task002 $\rightarrow$task1510 $\rightarrow$  task639 $\rightarrow$  task1729 $\rightarrow$  task073 $\rightarrow$ task1590 $\rightarrow$ task748 $\rightarrow$  task511 $\rightarrow$  task591 $\rightarrow$  task1687 $\rightarrow$  task875\\
       \midrule
       \shortstack{2\\(SuperNI)}& \shortstack{Llama-2-7B-chat, \\ Llama-2-13B-chat, \\Llama-3-2-3B}& task748 $\rightarrow$  task073 $\rightarrow$  task1590 $\rightarrow$  task639 $\rightarrow$  task1572 $\rightarrow$  task1687 $\rightarrow$  task591 $\rightarrow$  task363$\rightarrow$ task1510$\rightarrow$  task1729 $\rightarrow$ task181 $\rightarrow$ task511 $\rightarrow$ task002 $\rightarrow$  task1290 $\rightarrow$  task875\\
       \bottomrule
    \end{tabularx}
    \label{tab:eight_orders}
\end{table}








\end{document}